\documentclass{article} 
\usepackage{iclr2026_conference,times}

\everypar{\looseness=-1}

\usepackage{amsmath,amsfonts,bm}

\def\eqref#1{equation~\ref{#1}}

\def\1{\bm{1}}

\DeclareMathAlphabet{\mathsfit}{\encodingdefault}{\sfdefault}{m}{sl}
\SetMathAlphabet{\mathsfit}{bold}{\encodingdefault}{\sfdefault}{bx}{n}

\usepackage{hyperref}
\usepackage{url}
\usepackage{amssymb}
\usepackage{mdframed}
\usepackage{xcolor}
\usepackage{booktabs}
\usepackage{multirow}
\usepackage{graphicx}
\usepackage{wrapfig}
\usepackage{amsthm}

\title{Statistical and structural identifiability in representation learning}

\author{Walter Nelson\textsuperscript{1}, Marco Fumero\textsuperscript{1}, Theofanis Karaletsos\textsuperscript{2} \& Francesco Locatello\textsuperscript{1} \\
\textsuperscript{1}Institute of Science and Technology Austria \\
\textsuperscript{2}Chan Zuckerberg Initiative}

\usepackage{microtype}

\usepackage{enumitem}
\setlist[itemize]{left=0pt}

\newtheorem{definition}{Definition}
\newtheorem{theorem}{Theorem}
\newtheorem*{proposition*}{Proposition}
\newtheorem{proposition}{Proposition}
\newtheorem*{theorem*}{Theorem}
\newtheorem{lemma}{Lemma}

\newtheorem*{remark}{Remark}
\DeclareMathOperator*{\esssup}{ess\,sup}
\DeclareMathOperator{\skewop}{skew}
\DeclareMathOperator{\grad}{grad}
\DeclareMathOperator{\Hess}{Hess}
\DeclareMathOperator{\diag}{diag}

\iclrfinalcopy 
\begin{document}

\maketitle

\begin{abstract}
Representation learning models exhibit a surprising stability in their internal representations. Whereas most prior work treats this stability as a single property, we formalize it as two distinct concepts: \textbf{statistical identifiability} (consistency of representations across runs) and 
\textbf{structural identifiability} (alignment of representations with some unobserved ground truth). Recognizing that perfect pointwise identifiability is generally unrealistic for modern representation learning models, we propose new model-agnostic definitions of statistical and structural near-identifiability of representations up to some error tolerance $\epsilon$. Leveraging these definitions, we prove a statistical $\epsilon$-\textbf{near-identifiability} result for the representations of models with
nonlinear decoders, generalizing existing identifiability theory beyond last-layer representations in e.g. generative pre-trained transformers (GPTs) to near-identifiability of the intermediate representations of a broad class of models including (masked) autoencoders (MAEs) and supervised learners. 
Although these weaker assumptions confer weaker identifiability, we show that independent components 
analysis (ICA) can resolve much of the remaining linear ambiguity for this class of models, and validate and measure our near-identifiability claims empirically. With additional assumptions on the 
data-generating process, statistical identifiability extends to structural identifiability, yielding a simple and 
practical recipe for disentanglement: ICA post-processing of latent representations. On synthetic 
benchmarks, this approach achieves state-of-the-art disentanglement using a vanilla autoencoder. 
With a foundation model-scale MAE for cell microscopy, it disentangles biological variation from technical batch 
effects, substantially improving downstream generalization.
\end{abstract}

\section{Introduction}

Despite the massive variety of data modalities, pretext tasks, training procedures, and datasets, disparate self-supervised learning models as a whole seem to be converging on a shared set of representations of the natural world \citep{platonic} which are useful for a surprising variety of downstream tasks \citep{openphenom,esm3,wav2vec2,rt2}.
A classical lens for studying this phenomenon is the notion of \textit{identifiability} \citep{SSLID}. In likelihood-based statistical inference, identifiability is the condition that data are sufficient to completely characterize the parameters of the model \citep{casellaBerger}. The situation is considerably trickier for neural network models: the parameter space is large and invariant to e.g. permutations of the neurons, and the training procedures might lack a likelihood-based interpretation. Instead, recent work in identifiability focuses on finding conditions such that infinite data is sufficient to characterize the trained model's \textit{representations} of that data \citep{SSLID} up to some equivalence class such as a linear transformation \citep{roeder}.

\looseness=-1We begin by sharpening existing definitions of representation identifiability, recognizing that existing results fall into two categories. The first is what we refer to as \underline{statistical identifiability}, or the condition that optimizing a given representation learning model will yield the \emph{same} representations up to some simple transformation. The second is what we refer to as \underline{structural identifiability}, or the condition that optimizing a given representation learning model will yield a \emph{particular} representation every time, corresponding to some latent component of the data-generating process. We provide definitions of statistical and structural identifiability that relax the requirement that the representations are exactly identifiable, making them the first general-purpose formulations which are applicable to the case where the representations are ``nearly'' identifiable up to some error tolerance $\epsilon$ and extending prior model-specific cases \citep{nielsenKL,robustnessNonlinearICA}.

Leveraging these definitions, we prove several new identifiability results. Our first result shows that for models which have statistically identifiable outputs, such as generative pre-trained transformers, supervised classifiers, and encoder-decoder models \citep{roeder}, the \emph{intermediate-layer} representations are also statistically $\epsilon$-nearly identifiable up to a rigid transformation. Unlike several recent results, these representations can be mapped non-linearly to the loss, and $\epsilon$ is governed by a mild function class condition on the mapping from the intermediate layer to the identifiable outputs. Our second result shows that linear independent components analysis (ICA) can resolve this rigid indeterminacy, yielding near-identifiability up to signed permutations. Notably, our sharper definitions of statistical and structural identifiability reveal that these results are available without strong assumptions on the data-generating process, instead requiring only this mild function class assumption on the model, extending prior theoretical results on isometric and approximately isometric learning \citep{independentMechanismAnalysis,robustnessNonlinearICA}. Our final result shows that if one \emph{is} willing to make a similar assumption on the data-generating process, encoder-decoder models which are statistically identifiable and achieve perfect reconstruction are also structurally identifiable. Notably, perfect reconstruction is another assumption which can be relaxed if statistical identifiability is all that is required.

In addition to our theoretical contributions, we perform a series of experiments to validate our claims. In synthetic experiments on autoencoders, we show that hyperparameter selection and regularization impacts the statistical identifiability nearness $\epsilon$ in ways that are predicted by our theory. Subsequent experiments show that near-identifiability also holds in off-the-shelf pre-trained models, and that the linear indeterminacies predicted by our theory can in practice be resolved by ICA. Next, we investigate whether our structural identifiability result can be applied to the special case of disentanglement \citep{locatellodisentangling}, finding that the simple combination of vanilla autoencoders and linear ICA applied to the latent space yields disentanglement on several benchmark datasets, competitive with some of the best existing models. Finally, we show that linear ICA applied to the latent space of a masked autoencoder for cell imaging successfully disentangles batch effects from biological variation, a key problem in the application of machine learning to biology.

\section{Related work}

\paragraph{Statistical identifiability of representations} Prior representation identifiability results make strong assumptions on the data-generating process \citep{zimmermann,crossEntropyIsAllYouNeed,icebeem,multitaskID,lachapelle} or assume a linear relationship between the representations and the loss \citep{roeder,allOrNone,nielsenKL}, and generally do not distinguish between statistical and structural identifiability as we do here. Our work directly addresses this gap by proposing a concrete definition of statistical $\epsilon$-near-identifiability which is provably met by the general-purpose models in widespread use today, with only mild assumptions on the model class and few assumptions on the data-generating process. We directly measure the consequences of our \textit{statistical identifiability} results by assessing the $\ell_2$ convergence of representations in real-world models, extending prior work on representation similarity \citep{roeder,platonic,klabunde,nielsenKL,allOrNone}. \cite{nielsenKL} relaxes the identifiability theory of \cite{roeder} for generative pre-trained transformers, showing that the Kullback-Leibler divergence on the next-token distribution fails to serve as a witness for differences in the penultimate-layer representations, and proves a sufficient condition for divergences that do, which could be viewed as a particular form of near-identifiability for a particular model class. \cite{understandingLLMs} formalizes $\epsilon$-non-identifiability with respect to the KL divergence, showing that this failure generalizes beyond representational identifiability to other properties of interest for large language models. For a history of the term identifiability, see Appendix \ref{app:terminology}.

\paragraph{Structural identifiability of representations} Ours is also the first work to make clear the distinction between such model-specific identifiability results and structural identifiability results such as disentanglement. We formalize an assumption on the data-generating process that allows us to extend our key statistical identifiability result to structural identifiability of encoder-decoder models, and define and characterize a rich class of data-generating processes which meet this assumption. This result is similar to other works that aim to ``invert the data-generating process'' \citep{zimmermann,crossEntropyIsAllYouNeed,von2021self}, including theoretical work on the isometry assumption combined with ICA \citep{horan}, some of which uses a different, average-case notion of near-isometry to claim near-recovery of the true latents in a nonlinear ICA model  \citep{robustnessNonlinearICA}. These prior works are far from real-world practice, with only \citet{crossEntropyIsAllYouNeed} presenting any results on a real model with real data, showing that linear concept decoding is possible via parametric instance discrimination in ImageNet-X, but lacking a clear disentanglement result. In stark contrast, we illustrate a practical application of our theory by showing that we can improve out-of-distribution generalization in a real-world biological foundation model for cell microscopy via disentanglement. Our work also differs from prior work on causal representation learning (see \citet{unifyingCRL}) in that our structural identifiability result is completely unsupervised, relying on inductive biases rather than supervision in the form of interventions.

\section{Stability theory}

We begin by providing a novel theory of the stability of neural representations, with an eye toward self-supervised models. All theorems are presented informally with important constants in the main text, with full theorem statements, lemmata, proofs, and model-specific treatments in Appendix \ref{app:proofs}.

\subsection{Statistical near-identifiability}
\label{sec:identifiability}

We consider a data distribution supported on an arbitrary space $\mathcal{X}$. 
A representation learning model can be fully characterized by its parameter space $\Theta$, 
its loss function, and the deterministic mapping from parameters to representation functions. 
Concretely, for each $\theta \in \Theta$, let $\mathcal{L}_\theta : \mathcal{X} \to \mathbb{R}$ 
denote the corresponding loss function, and define the model as $\mathcal{M} = \{ \mathcal{L}_\theta : \theta \in \Theta \}$. If some component of $\theta$ parameterizes a representation function 
$f_\theta : \mathcal{X} \to \mathbb{R}^D$, then identifiability theory in machine learning 
aims to characterize the properties of $f_\theta$ after training by minimizing $\mathbb{E}[\mathcal{L}_\theta(x)]$ yields a set of parameters $\theta$.

\begin{definition}
    \label{defn:nearid}
    Let $\mathbf{P}(x)$ denote some data distribution supported on $\mathcal{X}$. Consider a machine learning model $\mathcal{M} = \{ \mathcal{L}_\theta : \theta \in \Theta \}$, and let $F : \theta \mapsto f_\theta$ be some deterministic transform of the model parameters yielding a representation function $f_\theta : \mathcal{X} \rightarrow \mathbb{R}^D$. Let $\mathcal{S} \subset \Theta$ be the set of minimizers of $\mathbb{E}_{x \sim \mathbf{P}(x)}[\mathcal{L}_\theta(x)]$. For some group $\mathcal{H}$ of functions from $\mathbb{R}^D$ to itself, we say that $(\mathbf{P}, \Theta, \mathcal{L}_\theta, F)$ is \underline{statistically $\epsilon$-nearly identifiable in the limit up to $\mathcal{H}$} if for every $\theta, \theta' \in \mathcal{S}$, we have $\lVert f_\theta - h \circ f_{\theta'} \rVert \leq \epsilon$ for some $h \in \mathcal{H}$.
\end{definition}

\textbf{Remark.} For the remainder of this paper, we will use the the $L^\infty$ norm (essential supremum with respect to $\mathbf{P}$) for functions taking values in the Euclidean space $\mathbb{R}^D$ endowed with the $\ell_2$ norm. When $\epsilon = 0$, we drop the $\epsilon$-nearly and refer to the triple simply as \underline{statistically identifiable in the limit up to $\mathcal{H}$}. When the identifiability is pointwise, we refer to the model as \underline{statistically identifiable in the limit}.

\textbf{Intuition.} This generalizes prior definitions of representation identifiability by the introduction of the ``slack'' term $\epsilon$. Definition \ref{defn:nearid} says that a model's representations as given by independent retrainings $f_\theta$ and $f_{\theta'}$ are near-identifiable if they are the same up to a simple transformation group $\mathcal{H}$ (e.g. rotations) and a small amount of distortion $\epsilon$. In this way, we also generalize the classical definition if identifiability from mathematical statistics \citep{casellaBerger}, see Appendix \ref{app:terminology} for a history.

In this paper, we will mainly deal with near-identifiability of latent representations up to the function classes $\mathcal{H}_\text{linear}$, $\mathcal{H}_\text{rigid}$ and $\mathcal{H}_\sigma$. $\mathcal{H}_\text{linear}$ is the group of invertible linear transformations on $\mathbb{R}^D$, while $\mathcal{H}_\text{rigid}$ is the class of rigid linear transformations on $\mathbb{R}^D$, which consists of compositions of rotations, reflections and translations ($\mathcal{H}_\text{rigid} \subset \mathcal{H}_\text{linear}$). In practice, translations can be ignored by assuming e.g. zero mean of the representation distribution. Similarly, reflections only flip signs, and can usually also be ignored. It is therefore useful to imagine the main indeterminacy of $\mathcal{H}_\text{rigid}$ as being a special orthogonal matrix in $\mathbb{SO}(D)$, i.e. a rotation. $\mathcal{H}_\sigma$ is the class of signed permutations of $\mathbb{R}^D$, which are generally not resolvable because there is no reasonable way to specify an ordering to the latent variables or signs to individual latent variables (should the $x$ coordinate of an object in a scene be represented left-to-right, or right-to-left?).

\paragraph{Connection with prior results} Existing representation identifiability results can easily be recast in our framework. For example, contrastive learning models using the InfoNCE loss with augmentation distributions satisfying a particular isotropy condition in latent space are identifiable up to $\mathcal{H}_\text{rigid}$ \citep{zimmermann}. This isotropy condition is impossible to validate in practice without access to the ground-truth data-generating factors, reflecting the fact that it is a strong assumption on the true data-generating process.

Another result is due to \citet{roeder}, which applies to models whose losses take the following form and can be interpreted as exponential family negative log-likelihoods:
\begin{equation}
    \label{eqn:expFam}
    \mathcal{L}_\theta(x, y) = -\boldsymbol{\eta}_\theta(x)^\intercal \mathbf{t}_\theta(y) + A_\theta(x) = - \log q_\theta(y \mid x)
\end{equation}
where $q_\theta$ is the approximating distribution, $\mathbf{t}_\theta$ is a sufficient statistics function, $\boldsymbol{\eta}_\theta$ is the natural parameter function, and $A_\theta$ is the log partition function. As an example, $\mathbf{t}_\theta$ might map categorical labels to their corresponding vectors in a final linear weights matrix (covering the case of supervised multi-class classification, including next-token prediction). $\boldsymbol{\eta}_\theta$ maps inputs to their representations, which are shown to be identifiable up to $\mathcal{H}_\text{linear}$ when pointwise equality of the losses is attained. In a short proof in Appendix \ref{app:gptID}, we show that a simple sufficient richness condition on the approximating class extends this result to our definition of identifiable in the limit. Put together, this result means that ``perfect optimization'' on infinite data will yield a supervised or GPT-class model with the same penultimate-layer representations every time, up to some unknown linear transformation. An extension of this result provides for the same kind of identifiability on a linear subspace of the representation space, allowing for the case of models with representations of different ambient dimension \citep{allOrNone}. \citet{nielsenKL} provides for a further generalization, deriving a notion of distance on the space of modeled likelihoods $\mathcal{Q} = \{ q_\theta(y \mid x) : \theta \in \Theta \}$ such that closeness in distribution implies closeness in the penultimate-layer representations given by $\boldsymbol{\eta}_\theta$, a form of near-identifiability (see also Appendix \ref{app:gptID}).

Even these GPT results are limited because they only treat these penultimate-layer representations given by $\boldsymbol{\eta}$, which are mapped \textit{linearly} to the loss. For many models, we're interested in representations from earlier layers which are mapped to the loss \textit{nonlinearly}, such as with a nonlinear decoder or head. Our key result, captured in the following theorem, provides near-identifiability up to rigid transformations in such cases. The level of nearness $\epsilon$ is governed by the degree of local bi-Lipschitzness of this nonlinear mapping. This result allows us to treat earlier-layer representations in GPT or supervised classification models, for example, or the latent representations of (masked) autoencoders.

\begin{theorem}
    \label{thm:aeID}
    \textit{(Informal)} Let $\mathbf{P}(x)$ be a data distribution, and let $\mathcal{M}$ be a model with a parameter space $\Theta$ and loss function $\mathcal{L}_\theta$. Let $F : \theta \mapsto f_\theta$, $G : \theta \mapsto g_\theta$ and $H : \theta \mapsto g_\theta \circ f_\theta$. Then, if $(\mathbf{P}, \Theta, \mathcal{L}_\theta, H)$ is statistically identifiable in the limit, then $(\mathbf{P}, \Theta, \mathcal{L}_\theta, F)$ is statistically $\epsilon$-nearly identifiable in the limit up to $\mathcal{H}_\text{rigid}$ for $\epsilon = c_D \sqrt{2L + L^2} \Delta$ where $1 + L$ is a local bi-Lipschitz constant bound for $g_\theta$, and $c_D$ and $\Delta$ are constants independent of the model (and $L$).
\end{theorem}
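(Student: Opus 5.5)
The plan is to turn identifiability of the composed outputs into approximate preservation of pairwise distances between latents, and then invoke a stability result for near-isometries. Fix two minimizers $\theta,\theta'\in\mathcal{S}$. Statistical identifiability in the limit of $(\mathbf{P},\Theta,\mathcal{L}_\theta,H)$ gives $g_\theta\circ f_\theta = g_{\theta'}\circ f_{\theta'}$ $\mathbf{P}$-almost everywhere. Write $z=f_\theta(x)$ and $z'=f_{\theta'}(x)$. Using the (local) bi-Lipschitz bound $(1+L)^{-1}\lVert u-v\rVert \le \lVert g_\theta(u)-g_\theta(v)\rVert \le (1+L)\lVert u-v\rVert$ on the relevant region of latent space, and the same bound for $g_{\theta'}$, I would show that the map $\phi: z'\mapsto z$ is well defined on the support of $f_{\theta'}$ and satisfies, for a.e.\ pair $x_1,x_2$,
\[
(1+L)^{-2}\lVert z_1'-z_2'\rVert \le \lVert z_1-z_2\rVert \le (1+L)^{2}\lVert z_1'-z_2'\rVert .
\]
Well-definedness follows because $f_{\theta'}(x_1)=f_{\theta'}(x_2)$ forces $g_\theta(z_1)=g_\theta(z_2)$, and hence $z_1=z_2$ by injectivity. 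Taking squares and using that the latent diameter is at most $\Delta$ (this is where I expect $\Delta$ to enter, as a bound on the diameter of the support of the representations, or on the region where local bi-Lipschitzness holds), I get the additive distortion bound
\[
\bigl|\lVert z_1-z_2\rVert^2-\lVert z_1'-z_2'\rVert^2\bigr|\lesssim (2L+L^2)\Delta^2 .
\]
The factor $(1+L)^2-1=2L+L^2$ is exactly the source of $\sqrt{2L+L^2}$ in the claimed $\epsilon$. Depending on how the formal statement bookkeeps the two decoders, the distortion factor should be $(1+L)$ rather than $(1+L)^2$, so I would arrange the argument so that only one factor appears. The natural way is to compare each decoder against the common function $g_\theta\circ f_\theta$, which is itself a single well-defined map of $x$.

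The second step is to convert "$\phi$ is a $\delta$-near-isometry of a bounded subset of $\mathbb{R}^D$" into "$\phi$ is uniformly close to a rigid motion". I would use a Hyers--Ulam type stability theorem for approximate isometries on bounded sets. Concretely, I would center both point clouds, which absorbs the translation part of $\mathcal{H}_\text{rigid}$, and pick $D$ affinely independent, well-spread reference points $z_1',\dots,z_D'$ in the support. Polarization then turns the squared-distance control into control of inner products: $|\langle z_i,z_j\rangle-\langle z_i',z_j'\rangle|\lesssim(2L+L^2)\Delta^2$. An orthogonal Procrustes argument gives an orthogonal $Q$ with $\lVert z-Qz'\rVert\le c_D\sqrt{2L+L^2}\,\Delta$ for all points. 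The square root arises because the distance error is the square root of the squared-distance error. The constant $c_D$ depends on the dimension and on the conditioning of the reference frame, and I would take it from the known sharp bounds for near-isometries of subsets of Euclidean space (e.g.\ results of Alestalo--Trotsenko--V\"ais\"al\"a type), assumed as a lemma.

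The main obstacle is this second step. Near-isometries of arbitrary bounded sets need not be close to rigid motions with a dimension-only constant if the set is degenerate, for instance nearly contained in a lower-dimensional affine subspace. The formal version therefore presumably either absorbs this into $c_D$ and $\Delta$ via a nondegeneracy or diameter assumption on the support, or states the bound in a form (like $\sqrt{\delta}\,\mathrm{diam}$) valid for all sets. I would first try the Procrustes approach with a maximal-volume simplex chosen in the support. If that simplex can be arbitrarily flat, I would fall back to citing the general near-isometry stability theorem with its $c_D\sqrt{\delta}\,\Delta$ bound.
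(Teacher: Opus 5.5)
Your skeleton matches the paper's proof. The paper works with the transition map $T=g_{\theta'}^{-1}\circ g_\theta$ between the two latent spaces (your $\phi$ is $T^{-1}$) and shows it is $(1+L)^2$-bi-Lipschitz. That makes it an additive $(2L+L^2)\Delta$-near-isometry. The Alestalo--Trotsenko--V\"ais\"al\"a theorem (Lemma~\ref{lemma:nearIsometriesAreNearIsometries}) then gives an isometry within $c_D\sqrt{(2L+L^2)\Delta\cdot\Delta}=c_D\sqrt{2L+L^2}\,\Delta$, and that isometry is automatically rigid.

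Your fallback in the second step is therefore exactly the paper's route. The squared-distance/Procrustes detour is unnecessary, and as you suspected it does not give a dimension-only constant on nearly flat supports.

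Drop the idea of arranging for a single factor $(1+L)$. The quantity $\sqrt{2L+L^2}=\sqrt{(1+L)^2-1}$ is precisely the price of composing $g_\theta$ with $g_{\theta'}^{-1}$, and it is sharp. If $g_\theta$ stretches by $1+L$ and $g_{\theta'}$ contracts by $1+L$, the transition map is a dilation by $(1+L)^2$. Comparing each decoder with $g_\theta\circ f_\theta$ does not avoid that composition.

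The genuine gap is in how you get the two-sided bound for arbitrary pairs. You chain the decoder inequalities $(1+L)^{-1}\lVert u-v\rVert\le\lVert g_\theta(u)-g_\theta(v)\rVert\le(1+L)\lVert u-v\rVert$ for far-apart $u,v$. The hypothesis is only local, and a decoder's lower bound does not globalize. For example, $t\mapsto(\cos t,\sin t)$ on the convex domain $[0,2\pi-\eta]$ is smooth, injective and locally $(1+L)$-bi-Lipschitz for every $L>0$. Yet it sends the endpoints, about $2\pi$ apart, to points about $\eta$ apart. No reinterpretation of $\Delta$ repairs this.

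The missing ingredient is the hypothesis of the formal statement that your plan never uses: the latent supports $Z$ and $Z'$ are convex and compact, where these are the supports of the pushforwards of $\mathbf{P}$ under $f_\theta$ and $f_{\theta'}$. The argument has to run through $T$ itself, in four steps.
\begin{enumerate}
\item From $g_\theta\circ f_\theta=g_{\theta'}\circ f_{\theta'}$ a.e., density of the a.e.\ image points in each support together with compactness gives $g_\theta(Z)=g_{\theta'}(Z')$. Since continuous injections on compact sets are homeomorphisms onto their images, $T$ is then a homeomorphism $Z\to Z'$.
\item Continuity of $T$ lets you compose the two \emph{local} inequalities near each point, so $T$ is locally $(1+L)^2$-bi-Lipschitz.
\item The upper bound extends to all pairs by chaining along segments in the convex set $Z$.
\item The lower bound extends by applying the same argument to $T^{-1}$ on the convex set $Z'$.
\end{enumerate}
Only after this globalization is $T$ a near-isometry on the whole support, which is what the isometric approximation theorem requires.
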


\textbf{Intuition.} Here, we give the first general-purpose identifiability result for the internal representations (i.e. arbitrary-layer) of a broad class of models, including (masked) autoencoders, next-token predictors, and supervised learners. $H$ parameterizes the end-to-end neural network $g_\theta \circ f_\theta$, which is assumed to have identifiable outputs: for example, when the loss is the mean squared error, the neural network  learns the optimal function, namely the conditional mean. The identifiability of the internal representations given by $f_\theta$ (the ``encoder'') is then governed by the local bi-Lipschitzness of the function $g_\theta$ (the ``decoder'') mapping them to these identified outputs. The bi-Lipschitz constraint controls the degree to which $g_\theta$ deforms distances. Intuitively, a bound on the local bi-Lipschitz constant is small when small changes in the latent variables result in small changes in the outputs of the network. The proof is given in Appendix \ref{proof:aeID}, and we provide concrete examples for a number of architectures, including masked autoencoders, supervised learners, and GPTs in Appendices \ref{app:modelSpecificResultsMAE} and \ref{app:modelSpceificResultsGPTs}.

This result is the most general we are aware of for quantifying representation identifiability. The local bi-Lipschitz condition is difficult to test empirically, but prior work has shown that many popular regularization techniques push neural networks toward a state of ``dynamical isometry'', which can be viewed as a bi-Lipschitz condition \citep{dynamicalIsometry,bachlechner,miyato,stylegan2,weightDecayLipschitz} due to the singular values of the Jacobian concentrating near one. We derive the precise relationship in Appendix \ref{app:dynIso}.

\subsection{Resolving linear indeterminacies with ICA} 
\label{sec:ica}

Later, we will illustrate applications of the near-identifiability result Theorem \ref{thm:aeID} to both vanilla autoencoders and masked autoencoders. To do this, we will find it useful to resolve the remaining linear indeterminacy in the latent space posed by $\mathcal{H}_\text{linear}$ or $\mathcal{H}_\text{rigid}$. We propose to do this by applying independent components analysis to the latent representations. We do not provide any novel ICA identifiability results in this work. Rather, we show that our conception of $\epsilon$-nearness in identifiability poses no further complications for the downstream application of ICA. This is partly a generalization of an earlier result by \citet{horan}, which covers the perfectly identifiable case and first proposed combining isometric learning (in the form of the Hessian locally linear embedding algorithm, \citet{hlle}) with ICA. The statement of our theorem (and its corollary Theorem \ref{thm:icaStrucID}) is similar to Theorem 3.1 in \citet{robustnessNonlinearICA} for nonlinear ICA, which relaxes the required pointwise constraint on the Jacobian implied by bi-Lipschitzness in exchange for an $L_2$ identifiability bound rather than $L_\infty$.

\begin{theorem}
    \label{thm:icaID}
    \textit{(Informal)} Suppose $(\mathbf{P}, \Theta, \mathcal{L}_\theta, F)$ is statistically $\epsilon$-nearly identifiable up to $\mathcal{H}_\text{linear}$ for $F : \theta \mapsto f_\theta$. Then, for a new model with parameter space $\Theta'$ and loss $\mathcal{L}'_\theta$ which applies whitening and contrast function-based independent components analysis to the latent representations given by $f_\theta$, and $F' : \theta \mapsto f_\theta'$ which yields the transformed representations, $(\mathbf{P}, \Theta', \mathcal{L}'_\theta, F')$ is statistically $\epsilon'$-near-identifiable up to $\mathcal{H}_\sigma$ for $\epsilon' = K \epsilon + K' \epsilon^2$, where $K$ and $K'$ are constants free of $\epsilon$ that depend on the spectrum of the covariance matrix of the representations and the properties of the ICA contrast function.
\end{theorem}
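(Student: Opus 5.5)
Fix two minimizers $\theta,\theta'\in\mathcal{S}$ and write $z=f_\theta(x)$, $z'=f_{\theta'}(x)$. By hypothesis there is an invertible affine map $h(u)=Au+b$ with $\lVert z - Az' - b\rVert \le \epsilon$ $\mathbf{P}$-almost surely. I will propagate this $L^\infty$ perturbation through the two stages of the ICA pipeline.

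\textbf{Step 1: whitening.} Let $\mu,\Sigma$ and $\mu',\Sigma'$ be the means and covariances of $z$ and $z'$, and write $\Sigma = U\Lambda U^\intercal$. Put $\delta = z - Az' - b$. Then $\mu = A\mu' + b + \mathbb{E}\delta$ and
\[
\Sigma = A\Sigma' A^\intercal + A\,\mathrm{Cov}(z',\delta) + \mathrm{Cov}(\delta,z')A^\intercal + \mathrm{Cov}(\delta).
\]
The cross terms are $O(\epsilon)$, with a constant depending on $\lVert A\rVert$ and the spread of $z'$. The last term is $O(\epsilon^2)$. This is where the $\epsilon^2$ term in $\epsilon'$ will come from.

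The whitened representations are $w=\Sigma^{-1/2}(z-\mu)$ and $w'=\Sigma'^{-1/2}(z'-\mu')$. Substituting,
\[
w = \Sigma^{-1/2}A\Sigma'^{1/2}\,w' + \Sigma^{-1/2}(\delta-\mathbb{E}\delta).
\]
The matrix $R=\Sigma^{-1/2}A\Sigma'^{1/2}$ satisfies $RR^\intercal = I - \Sigma^{-1/2}E\,\Sigma^{-1/2}$, where $E$ collects the perturbation terms above. So $R$ is within $O(\lVert E\rVert/\lambda_{\min}(\Sigma))$ of an orthogonal matrix $Q$, obtained from its polar decomposition. Consequently
\[
\lVert w - Qw'\rVert_\infty \le K_1\epsilon + K_1'\epsilon^2,
\]
with $K_1$ and $K_1'$ depending on $\lambda_{\min}(\Sigma)^{-1/2}$, the condition numbers, and a bound $\Delta$ on $\lVert z'\rVert$.

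To avoid circularity in $\lVert A\rVert$, I will express $\lVert A\rVert$ through the covariances themselves. From the displayed formula, $\lVert A\Sigma'^{1/2}\rVert^2 \le \lVert\Sigma\rVert + O(\epsilon)$, so the constants are functions of the spectra only.

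\textbf{Step 2: ICA rotation.} Contrast-based ICA on whitened data selects an orthogonal $W$ maximising $J(W)=\sum_i \mathbb{E}\,G(W_i^\intercal w)$. Write $W^\star$ and $W'^\star$ for the optimal rotations for $w$ and $w'$. If $w$ were exactly $Qw'$, then by invariance $W^\star=W'^\star Q^\intercal$ up to a signed permutation $P$, since the contrast is permutation- and sign-invariant. In that case the outputs would satisfy $W^\star w = PW'^\star w'$ exactly.

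With the perturbation, I will use the following argument.
\begin{itemize}
\item Assume $G$ is Lipschitz with constant $\ell_G$ on the relevant ball. Then the objective $J_w(\cdot)$ and the pulled-back objective $J_{Qw'}(\cdot)$ differ uniformly over the orthogonal group by at most $\ell_G\,\lVert w-Qw'\rVert_\infty$.
\item Assume the ICA optimum is nondegenerate, i.e.\ the Riemannian Hessian of $J$ on $\mathbb{SO}(D)$ at the optimum has curvature at least $\kappa>0$ modulo the signed-permutation symmetry. This is the standard local identifiability condition for contrast-function ICA.
\item Under these two assumptions, the maximiser moves by at most $O\bigl(\sqrt{\ell_G\lVert w-Qw'\rVert/\kappa}\bigr)$ in general. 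If $J$ is also $C^2$ with gradient perturbation of order $\ell_G'\lVert w-Qw'\rVert$, this improves to $O(\ell_G'\lVert w-Qw'\rVert/\kappa)$.
\end{itemize}
I will assume the smooth case so that the final bound is linear in the Step 1 error.

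Combining, $\lVert W^\star w - PW'^\star w'\rVert \le \lVert W^\star-PW'^\star Q^\intercal\rVert\,\lVert w\rVert + \lVert w-Qw'\rVert$. This yields $\epsilon' = K\epsilon + K'\epsilon^2$, with $K$ and $K'$ depending on the covariance spectrum, $\Delta$, $\ell_G'$ and $\kappa$.

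\textbf{Main obstacle.} The hardest step is Step 2, the perturbation stability of the ICA maximiser. It needs a nondegeneracy (Hessian gap) assumption on the contrast at the true separation. Without it the $\epsilon$-dependence could be only $\sqrt{\epsilon}$, or the signed-permutation class could fail to pin the maximiser down at all. I would first try to cite the standard local-consistency analysis of FastICA-type contrasts, where the Hessian at the optimum is diagonal with entries $\mathbb{E}[sG'(s)-G''(s)]\neq 0$. I would then fold $\kappa=\min_i|\mathbb{E}[s_iG'(s_i)-G''(s_i)]|$ into $K$, which is the sense in which the statement says the constants depend on \emph{the properties of the ICA contrast function}.
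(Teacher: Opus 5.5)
Your proposal is correct and follows essentially the same route as the paper. Whitening turns the $\mathcal{H}_\text{linear}$ ambiguity into an orthogonal one at cost $O(\epsilon)$: the paper does this with a whitening-stability lemma and $U = W_A A W'^{-1}$, you with the polar factor of $\Sigma^{-1/2}A\Sigma'^{1/2}$. A lower bound on the Riemannian Hessian, together with bounds on $J'$ and $J''$, then controls how far the ICA rotation moves: the paper uses the implicit function theorem on $\mathbb{SO}(D)$, you use argmax stability, which has the small advantage of tracking the global maximiser rather than only some nearby stationary point. Both finish with the same triangle inequality.

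One caution: keep the Hessian gap as an explicit assumption, as the paper does. Your fallback $\kappa=\min_i|\mathbb{E}[s_iG'(s_i)-G''(s_i)]|$ presumes a correctly specified ICA model, which is not available in this statistical setting. Moreover, for the full-rotation objective the relevant curvatures are the pairwise sums of these quantities, and those can vanish even when each individual term is nonzero.
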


\textbf{Intuition.} Consider some representations in a model which are linearly identifiable, such as the penultimate layer of a GPT-class model or the latent tokens of a masked autoencoder covered by Theorem \ref{thm:aeID}. Whitening reduces the linear indeterminacy to a rigid one, while ICA (if sufficiently well-converged) resolves the final rigid indeterminacy to a signed permutation, with nearness preserved (up to new constants) along each step.

\subsection{From statistical to structural identifiability}
\label{sec:strucID}

While statistical identifiability on its own may be a useful property for reliability and analysis, there has been recent interest in \underline{structural identifiability} of representations, or the ability of the model to recover some latent component of the data-generating process which is useful for some downstream tasks. Below, we formalize the distinction between the two.

\begin{definition}
    \label{defn:strucID}
    Let $\mathbf{P}(u)$ denote a distribution over some unobservable parameters with support $\mathcal{U} \subseteq \mathbb{R}^D$. Let $\mathbf{P}(x \mid u)$ denote some conditional distribution such that $u(x) = \arg \sup_{u \in \mathcal{U}} \mathbf{P}(x \mid u)$ is well-defined almost everywhere with respect to $\mathbf{P}(x)$, where $\mathbf{P}(x) = \int \mathbf{P}(x \mid u) \mathbf{P}(u) \, du$ is the marginal distribution of the data with support $\mathcal{X}$. Consider a machine learning model $\mathcal{M} = \{ \mathcal{L}_\theta : \theta \in \Theta \}$, with solutions $\mathcal{S} \subset \Theta$ to the minimization of $\mathbb{E}_{x \sim \mathbf{P}(x)}[\mathcal{L}_\theta(x)]$. Let $F : \theta \mapsto f_\theta$ be a deterministic transform of the parameters yielding a representation function $f_\theta : \mathcal{X} \rightarrow \mathbb{R}^D$. For some group of functions $\mathcal{H}$ from $\mathbb{R}^D$ to itself, we say that \underline{$(\mathbf{P}, \Theta,\mathcal{L}_\theta, F)$ $\epsilon$-nearly identifies the structure $u$ up to $\mathcal{H}$}  if for all $\theta \in \mathcal{S}$, we have that $f_\theta$ satisfies $\lVert h \circ f_\theta - u \rVert \leq \epsilon$ for some $h \in \mathcal{H}$.
\end{definition}

\textbf{Intuition.} Statistical identifiability is in some sense \emph{weaker} than structural identifiability. Statistical identifiability is the condition that the representations are consistent, while structural identifiability is the condition that the representations are consistently ``correct''. In order to define utility, or correctness, we need to assume the existence of some latent component of the data-generating process that we're aiming to recover, $u$. The well-studied setting of disentanglement \citep{challengingDisentanglement} represents a special case where $\mathbf{P}(u)$ is assumed to have independent components. As an example, in Section \ref{sec:deconfoundingOpenphenom}, we consider the situation where $\mathbf{P}(u)$ is a distribution over natural latent biological factors along with independent technical variates, and $x$ is generated via a smooth function of $u$ with smooth inverse, leaving $u(x)$ well-defined. In Appendix \ref{app:strucIDImpliesID}, we provide a simple proof that structural identifiability implies structural identifiability, including in the $\epsilon$-near case provided that $\mathcal{H}$ is bounded in the sense of an operator norm.

Finally, we make precise the assumptions on the data-generating process necessary to extend our statistical identifiability result in Theorem \ref{thm:icaID} to structural identifiability. We show that bi-Lipschitz data-generating processes are structurally identified by reconstructing encoder-decoder models which are nearly identifiable up to $\mathcal{H}_\text{rigid}$ in the sense of Theorem \ref{thm:aeID}, and up to $\mathcal{H}_\sigma$ when combined with ICA as in Theorem \ref{thm:icaID}. Because in this setting the assumption made on the data-generating process (bi-Lipschitzness) is the same as on the model, structural identifiability is a fairly straightforward corollary of the statistical identifiability in Theorem \ref{thm:icaID}. Notably, however, Theorem \ref{thm:icaStrucID} requires perfect reconstruction in an autoencoding-type model, while Theorem \ref{thm:icaID} only requires identifiable outputs, and is therefore significantly more general.

\begin{theorem}
    \label{thm:icaStrucID}
    \textit{(Informal)} Let $\mathbf{P}(u)$ be some multivariate non-Gaussian distribution with independent components and consider data $\mathbf{P}(x)$ generated by pushforward through a smooth diffeomorphism $g$ such that $g$ is $(1 + \delta)$-bi-Lipschitz. Let $\mathcal{M}$ be a model with a sufficiently rich parameter space $\Theta$. Let $F : \theta \mapsto f_\theta$, $G : \theta \mapsto g_\theta$ and $H : \theta \mapsto g_\theta \circ f_\theta$. Then, if $(\mathbf{P}, \Theta, \mathcal{L}_\theta, H)$ structurally identifies the identity function in the limit (i.e. attains perfect reconstruction), we have that $(\mathbf{P}, \Theta, \mathcal{L}_\theta, F)$ $\epsilon$-nearly identifies the structure $g^{-1}$ up to $\mathcal{H}_\text{rigid}$, and furthermore that a new model $\mathcal{M}'$ which applies whitening and independent components analysis to the latent representations given by $f_\theta$ $\epsilon'$-nearly identifies the structure $g^{-1}$ up to $\mathcal{H}_\sigma$ where $\epsilon$ and $\epsilon'$ depend on $\delta$ and Lipschitz bounds on $g_\theta$, and $\epsilon'$ depends additionally on the spectrum of the covariance matrix of the representations and the properties of the ICA contrast function employed.
\end{theorem}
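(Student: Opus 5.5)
The plan is to treat the true generator $g$ as a "virtual trained decoder" and reduce to Theorem \ref{thm:aeID} and Theorem \ref{thm:icaID}.

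\textbf{Step 1: reduce to the same setting as Theorem \ref{thm:aeID}.} Perfect reconstruction means $g_\theta \circ f_\theta = \mathrm{id}$ $\mathbf{P}(x)$-a.e. for every $\theta \in \mathcal{S}$. Writing $x = g(u)$, we get $g_\theta(f_\theta(g(u))) = g(u)$. So the map $\phi_\theta := f_\theta \circ g$ from $\mathcal{U}$ to latent space satisfies $g_\theta \circ \phi_\theta = g$ on $\mathcal{U}$.

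\textbf{Step 2: compare distances.} For $u, u'$ in the support, use the bi-Lipschitz bounds $(1+L)^{-1}\|\phi_\theta(u)-\phi_\theta(u')\| \le \|g_\theta(\phi_\theta(u)) - g_\theta(\phi_\theta(u'))\| \le (1+L)\|\phi_\theta(u)-\phi_\theta(u')\|$. Combine them with the analogous $(1+\delta)$ bounds for $g$. This shows $\phi_\theta$ is $(1+L)(1+\delta)$-bi-Lipschitz.

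\textbf{Step 3: extract a rigid map.} A map whose distortion constant is $1+\tilde L$ with $\tilde L = L + \delta + L\delta$ distorts squared distances by at most $(2\tilde L + \tilde L^2)\Delta^2$, where $\Delta$ is the diameter of the support. I will then invoke the same near-isometry-to-rigid-motion lemma used in the proof of Theorem \ref{thm:aeID}. This is the step where the constant $c_D$ arises, via a John-type stability result for approximate isometries on bounded sets. It yields $h \in \mathcal{H}_\text{rigid}$ with $\|h \circ f_\theta - g^{-1}\|_{L^\infty} \le c_D\sqrt{2\tilde L + \tilde L^2}\,\Delta =: \epsilon$. This is exactly the first claim: $\epsilon$-near structural identification of $g^{-1}$ up to $\mathcal{H}_\text{rigid}$, with $\epsilon$ depending on $\delta$ and $L$.

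Note that, unlike Theorem \ref{thm:aeID}, the comparison here is with the fixed structure $u = g^{-1}(x)$ rather than with a second run $f_{\theta'}$. So "sufficiently rich $\Theta$" is needed only to guarantee that $\mathcal{S}$ contains perfect reconstructors.

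\textbf{Step 4: the ICA stage.} Write $f_\theta = h^{-1}\circ g^{-1} + r_\theta$ with $\|r_\theta\|_\infty \le \epsilon$. Whitening the representations then perturbs the covariance by $O(\epsilon\|\Sigma\|^{1/2} + \epsilon^2)$. Since $\mathbf{P}(u)$ has independent non-Gaussian components, I will run the argument of Theorem \ref{thm:icaID} with the reference representation taken to be the true $u$ rather than a second trained model. Concretely, the whitened $u$ is a fixed point of the contrast-function ICA up to signed permutation (standard Comon identifiability). The whitened $f_\theta$ is an $O(\epsilon)$ perturbation of a rigid image of it. Stability of the contrast optimizer, via a nondegenerate Hessian at the separating solution, then gives a signed permutation $P$ with $\|P\circ f'_\theta - u\| \le K\epsilon + K'\epsilon^2$. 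Here $K, K'$ depend on the covariance spectrum (through the whitening map's Lipschitz constant, bounded by $\lambda_{\min}^{-1/2}$) and on the contrast curvature.

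\textbf{Main obstacle.} I expect Step 3 to be the main difficulty, because bi-Lipschitzness of $g_\theta$ is only local, so Step 2 needs the local constant to hold along the relevant pairs. I would handle this by assuming, as in the formal version of Theorem \ref{thm:aeID}, that the bound holds on the convex hull of the latent support. A second, smaller subtlety is that $u$ is not centered or unit-variance, so "identifies $g^{-1}$ up to $\mathcal{H}_\sigma$" should be read after standardizing $u$. Otherwise I will absorb the diagonal scaling into the constants.
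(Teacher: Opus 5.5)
Your proposal is correct and follows essentially the same route as the paper. As in the paper, $g^{-1}$ plays the role of a second encoder with $g$ as its decoder, and the composite $f_\theta\circ g$ is globally bi-Lipschitz and hence a near-isometry. The near-isometry approximation lemma from the proof of Theorem~\ref{thm:aeID} then gives the rigid map, and the whitening-plus-perturbed-ICA argument of Theorem~\ref{thm:icaID} is rerun with the white, independent $u$ as the unperturbed reference.

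Your composite constant $(1+L)(1+\delta)$ is slightly sharper than the paper's $(1+\max\{L,\delta\})^2$, but your final $\epsilon$ squares it once more than needed. A $(1+\tilde L)$-bi-Lipschitz map on a set of diameter $\Delta$ is a $\tilde L\Delta$-near-isometry, so the lemma already gives $c_D\sqrt{\tilde L}\,\Delta$. Also, globalizing the local bounds along segments uses convexity of the support of $\mathbf{P}(u)$ in addition to convexity of the latent support; the paper also leaves the former implicit.
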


\textbf{Intuition.} Structural identifiability is \textit{stronger} than identifiability, so we require additional assumptions on the data-generating process to achieve it. In particular, here we assume that the data-generating function mapping ``true'' latents to observables is locally bi-Lipschitz, which combined with independence and non-Gaussianity is sufficient to nearly recover the true latents via ICA for any nearly identifiable reconstructing model with a locally bi-Lipschitz decoder. The proof is given in Appendix \ref{proof:strucID}.

\subsubsection{Bi-Lipschitz data-generating processes}
\label{sec:disentanglementTheory}
Naturally, it's useful to characterize what kinds of data-generating processes might be covered by Theorem \ref{thm:icaStrucID}. Several interesting image data-generating processes are known to approximately satisfy a Euclidean isometry condition (which is equivalent to a local 1-bi-Lipschitz constraint for smooth mappings) such as smooth articulations of cartoon faces \citep{isomap,horan}. Furthermore, the success of regularization techniques similar to isometry constraints in diverse classes of neural network models in real-world settings suggests it is a useful inductive bias in practice as well \citep{stylegan2,irvae}. In the rest of this section, we aim to better characterize what these assumptions mean. Specifically, we give some examples of nearly isometric data-generating processes inspired by the popular \texttt{dSprites} dataset \citep{dsprites} and show that disentanglement in this setting implies the structural identification of the true data-generating factors, using a technique developed by \citet{grimesThesis}.

\begin{wrapfigure}{r}{0.2\textwidth}
    \vspace{-10pt}
    \centering
    \includegraphics[width=0.175\textwidth]{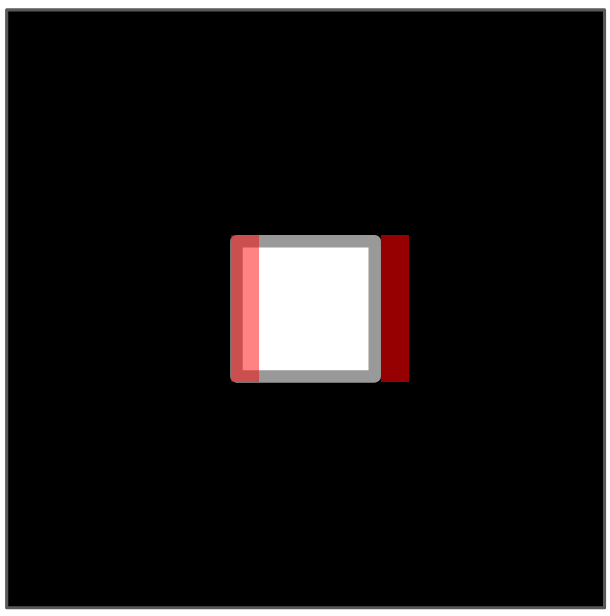}
    \caption{A simple isometric data-generating process.}
    \label{fig:isometricSquare}
    \vspace{-10pt}
\end{wrapfigure}

\paragraph{Example} Consider a continuous relaxation of images, where a square black-and-white image is represented by an $L^2$ function $\iota : [-1, 1] \times [-1, 1] \rightarrow \{0, 1\}$, with $\iota(x, y)$ giving the value of the $(x, y)$th ``pixel''. As an example, the image of a white square with radius $0 < r < 1$ in the centre of the ``frame'' is given by $\iota(x, y) = \mathbb{I}[|x| \leq r, |y| \leq r]$ where $\mathbb{I}$ is the indicator function. One can first imagine a manifold of such images where the centre $p \in [a, b]$ of the square is moved from left to right. We write this as a continuum of images produced by the smooth function $f : [a, b] \rightarrow L^2$ where $a + 1 \geq r$ and $1 - b \geq r$ to ensure that the square does not leave the frame. In this case, each ``image'' on the continuum $f(p)$ is the function $(x, y) \mapsto \mathbb{I}[|x - p| \leq r, |y| \leq r]$, where $p$ is the square's $x$ coordinate.

We'll show that $f$ is a local isometry, meaning that it preserves a notion of distance perfectly. This is equivalent to a 1-bi-Lipschitz constraint. To see this, consider the Gateaux derivative of $f$, which is just the limiting behaviour of articulating the square $\epsilon$ units to the right:
\begin{align*}
    f'(p) &= \lim_{\epsilon \rightarrow 0} \frac{f(p + \epsilon) - f(p)}{\epsilon} \\
    &= \lim_{\epsilon \rightarrow 0} \frac{1}{\epsilon} \left[ \underbrace{\mathbb{I}[p + r < x < p + r + \epsilon, |y| < r]}_{\text{``gained'' white pixels}} - \underbrace{\mathbb{I}[p - r - \epsilon < x < p - r, |y| < r]}_{\text{``lost'' white pixels}} \right]
\end{align*}

where the indicators in the limit represent the pixels that change when articulating the square $\epsilon$ units to the right. Intuitively, shifting the square $\epsilon$ units to the right only changes $\epsilon r$ pixels to the right (flips them from black to white) and $\epsilon r$ pixels to the left (flips them from white to black) of the original square. The situation is drawn in Figure \ref{fig:isometricSquare}, where the white square has a gray border for clarity and the red shaded areas show the white pixels which are gained and lost by shifting the square $\epsilon$ units to the right. The isometry condition considers the situation as $\epsilon \rightarrow 0$ (intuitively, as the width of the red rectangles shrinks to zero).

By an argument made formal in Appendix \ref{app:conformalNearIsometries}, we can rely on something like preservation of the $L^2$ norm under limits to have $||f'(p)||^2 = 2r$, which is constant when $r$ is fixed. In the univariate case, this is sufficient for $f$ to locally preserve a notion of distance. In particular, for any two values of the latent $p_0$ and $p_1$ we have
\begin{equation*}
    |p_1 - p_0| \propto \int_{p_0}^{p_1} ||f'(p)||^2  \, dp = 2r |p_1 - p_0|
\end{equation*}
where the integral is the usual geodesic distance along the manifold of ``images''. Due to the constant $2r$, some literature refers to these as scaled isometries \citep{irvae}. In practice, we can typically ignore the scaling constant.

While sprites datasets are known to be overly simplistic, this analysis provides some intuition that interesting real-world image manifolds might usefully be approximated by isometries, or functions that are \textit{nearly} isometries. For example, a similar analysis we defer to Appendix \ref{app:conformalNearIsometries} is illuminating for the multivariate case where both the radius $r$ of the square and its $x$-coordinate $p$ are varied. In this setting, we have that the Gateaux derivative is non-constant in $r$ (specifically, the map is conformal). However, if we assume a compact support for the latents, i.e. that $p \in [a, b]$ and $r \in (0, R]$, the data-generating process is additionally $B$-bi-Lipschitz with the constant $B$ dependent on $a$ and $b$, and Theorem \ref{thm:icaStrucID} applies.

\section{Experiments}
\label{sec:experiments}
\begin{wrapfigure}[14]{r}{0.35\textwidth}
    \vspace{-10pt}
    \centering
    \includegraphics[width=0.325\textwidth]{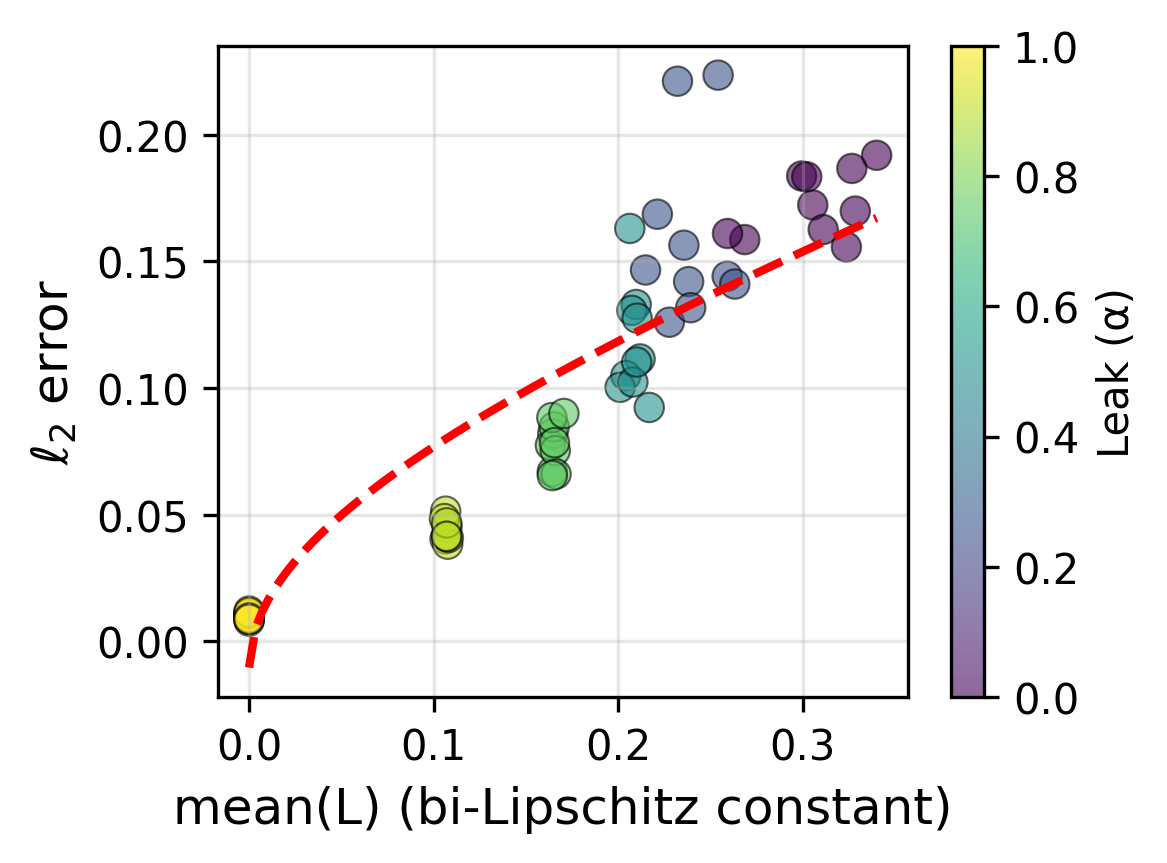}
    \caption{Controlling the bi-Lipschitz constant $L$ leads to improved identifiability (reduced $\ell_2$ error).}
    \label{fig:lipschitzID}
    \vspace{-10pt}
\end{wrapfigure}

We perform four sets of experiments: direct validation of Theorem \ref{thm:aeID} on MNIST using vanilla autoencoders (Section \ref{sec:warmupAE}), direct validation of Theorems \ref{thm:aeID} and \ref{thm:icaID} in off-the-shelf pretrained self-supervised learning models (Section \ref{sec:idPretrained}), an application of Theorem \ref{thm:icaStrucID} to a classic disentanglement problem in several synthetic datasets (Section \ref{sec:disentanglementExp}), and a real-world application to deconfounding for out-of-distribution generalization in a real-world foundation model for cell microscopy in biology (Section \ref{sec:deconfoundingOpenphenom}).

\subsection{Warmup: controlling identifiability}
\label{sec:warmupAE}

We begin with experiments in a regime where the local bi-Lipschitz constant can be controlled as directly as possible, and examine whether our theory correctly predicts the level $\epsilon$ of near-identifiability. We consider fully-connected autoencoders with 3-layer encoders and decoders, and orthogonal linear layers with LeakyReLU activations (with leak parameter $\alpha \in [0, 1]$). The local bi-Lipschitz constant of the decoders is therefore bounded by $1/\alpha^K$ where $K = 3$ is the number of layers in the decoder. For $\alpha = 1$, the network is linear, while for $\alpha = 0$, it's a ReLU network. We fit pairs of autoencoders with different initializations and seeds to MNIST \citep{mnist}, and assess the relationship between reconstruction error, empirical near-identifiability, and empirical measurements of the local bi-Lipschitz constant, which is manipulated by varying $\alpha$ between 0 and 1. According to the proportionality in Theorem \ref{thm:aeID}, we estimate how well the empirically estimated bi-Lipschitz term $\sqrt{L + L^2}$ predicts identifiability, as measured by the average $\ell_2$ error from the optimal rigid transformation between the pair of latent spaces. Results are summarized in Figure \ref{fig:lipschitzID}. Full experimental details are available in Appendix \ref{app:toyExp}.

\subsection{Measuring identifiability of pre-trained models}
\label{sec:idPretrained}
Our next aim is to validate the statistical near-identifiability up to linear (for GPT-class models, Theorem 1 of \citet{roeder}) and rigid (for autoencoder-class and supervised models, Theorem \ref{thm:aeID}) transformations predicted by theory, and the ability of ICA to resolve the remaining linear indeterminacy (Theorem \ref{thm:icaID}).

Matching our theory, we examine pairs of models that have the same architecture, loss, and are trained on the same dataset independently. Rigid similarities (rigid transforms with a scaling constant to allow for varying regularization across model pairs), linear transformations and ICA transforms are estimated between representation spaces. We measure near-identifiability with the average $\ell_2$ error in the self-supervised model's representation space, along with the efficiency of the ICA transform as the percentage reduction of $\ell_2$ error relative to the rigid transform (since the degrees of freedom are roughly the same). Results are shown in Table \ref{tab:alignment}.

\begin{wraptable}[14]{r}{9.5cm}
\vspace{-10pt}
\resizebox{\linewidth}{!}{\begin{tabular}{l c c c c}
\toprule
\multirow{2}{*}{Model Pair} & \multirow{2}{*}{Permutation} & \multicolumn{2}{c}{Supervised} & \multirow{2}{*}{ICA (\% eff.)} \\
\cmidrule(lr){3-4}
 & & Rigid & Linear &  \\
\midrule
Pythia-160M-0 $\rightarrow$ Pythia-160M-1 & 0.219 & 0.150 & 0.131 & 0.202 (25\%) \\
\midrule
MAE-\texttt{timm} $\rightarrow$ MAE-original & 0.197 & 0.109 & 0.036 & 0.145 (59\%) \\
CheXpert-small $\rightarrow$ CheXpert-base & 0.218 & 0.104 & 0.048 & 0.175 (38\%) \\
\midrule
ResNet-18-fc-1 $\rightarrow$ ResNet-18-fc-2 & 0.382 & 0.206 & 0.175 & 0.312 (40\%) \\
\bottomrule
\end{tabular}}
\caption{Supervised and unsupervised alignment scores between pairs of models which measure empirical identifiability. The optimal transforms (permutation, rigid, linear, or ICA) are estimated between the two models, and average $\ell_2$ errors normalized by latent diameter are reported. For ICA, we also report its efficiency as the reduction of $\ell_2$ error from the permutation to the rigid transform.}
\label{tab:alignment}
\end{wraptable}

GPT-class models exhibit excellent linear alignment, as predicted by the theory of \citet{roeder}. As predicted by our theory, MAEs exhibit rigid alignment up to a similar level of error, notably including one example across model sizes. In all cases, ICA mitigates a substantial portion of the indeterminacy due to the linear variation, notably without any supervision. In particular, for MAE models, ICA is nearly 60\% as efficient as computing the optimal rigid transform between the two models in a fully-supervised fashion. Full experimental details are available in Appendix \ref{app:idExp}.

\subsection{Disentanglement using vanilla autoencoders}
\label{sec:disentanglementExp}

\begin{table}[b]
\centering
\caption{Disentanglement metrics (\textit{InfoM}, \textit{InfoE}, \textit{InfoC}), of which \textit{InfoM} and \textit{InfoE} are the most important. AE + ICA performs comparably to some of the best disentanglement-specific neural networks, with almost no tuning. Results marked with (*) are quoted without reproduction from \cite{latentQuantization}. Bolded metrics have the highest point estimates. Full details in Appendix \ref{app:syntheticExp}.}
\label{tab:disent-results}
\resizebox{\textwidth}{!}{
\begin{tabular}{l c c c c c}
\toprule
model     & aggregated      & Shapes3D           & MPI3D              & Falcor3D           & Isaac3D            \\
          & \multicolumn{5}{c}{(\textit{InfoM} \quad \textit{InfoE} \quad \textit{InfoC}) $\uparrow$}                                     \\
\midrule
AE                & (0.39 0.76 \textcolor{gray}{0.25})              & (0.34 \textbf{0.99} \textcolor{gray}{0.16})       & (0.42 0.40 \textcolor{gray}{0.31})       & (0.37 \textbf{0.83} \textcolor{gray}{0.20})       & (0.41 0.80 \textcolor{gray}{0.34}) \\
$\beta$-VAE*      & (0.59 0.81 \textcolor{gray}{0.55})              & (0.59 \textbf{0.99} \textcolor{gray}{0.49})       & (0.45 \textbf{0.71} \textcolor{gray}{0.51})       & (\textbf{0.71} 0.73 \textcolor{gray}{0.70})       & (0.60 0.80 \textcolor{gray}{\textbf{0.51}}) \\
$\beta$-TCVAE*    & (0.58 0.72 \textcolor{gray}{\textbf{0.59}})     & (0.61 0.82 \textcolor{gray}{\textbf{0.62}})      & (\textbf{0.51} 0.60 \textcolor{gray}{\textbf{0.57}}) & (0.66 0.74 \textcolor{gray}{\textbf{0.71}})      & (0.54 0.70 \textcolor{gray}{0.46})    \\
BioAE*            & (0.54 0.75 \textcolor{gray}{0.36})              & (0.56 0.98 \textcolor{gray}{0.44})               & (0.45 0.66 \textcolor{gray}{0.36})         & (0.54 0.73 \textcolor{gray}{0.31})               & (0.63 0.65 \textcolor{gray}{0.33})    \\
AE + ICA (ours)   & (\textbf{0.65} \textbf{0.83} \textcolor{gray}{0.40}) & (\textbf{0.79} \textbf{0.99} \textcolor{gray}{0.52}) & (0.44 0.66 \textcolor{gray}{0.31})         & (\textbf{0.71} \textbf{0.83} \textcolor{gray}{0.33}) & (\textbf{0.64} \textbf{0.82} \textcolor{gray}{0.43}) \\
\bottomrule
\end{tabular}}
\end{table}

Next, we assess whether our theory correctly predicts structural identifiability of the ground-truth data-generating factors in synthetic datasets matching the assumptions of Theorem \ref{thm:icaStrucID}. We examine vanilla autoencoders, an appealing architecture due to their simplicity. In such simple models, weight decay is known to be sufficient to regularize the Lipschitz constant of the decoder, thus making it a good testbed for our theory \citep{weightDecayLipschitz}.

We use a well-established experimental testbed for assessing unsupervised disentanglement, specifically following the exact experimental protocol from \cite{latentQuantization}. As baselines, we include a $\beta$-VAE \citep{betavae}, $\beta$-total correlation VAE \citep{betatcvae}, and BioAE \citep{bioae}, all of which leverage specialized regularization to achieve disentanglement. For comparison, we follow the supervised model selection strategy of \cite{latentQuantization}, which shows best-case performance \citep{locatellodisentangling}.

For each dataset, we train a vanilla autoencoder with the only hyperparameter we vary being weight decay. Then, ICA is applied to its latent space. Models are evaluated on four datasets. Shapes3D is a toyish sprites dataset \citep{latentQuantization,shapes3D}. Falcor3D and Isaac3D consist of rendered images of a living room and kitchen, respectively \citep{falcor3disaac3d}. MPI3D consists of real images of a real-world robotics setup \citep{mpi3d}. Disentanglement of the learned latents is evaluated according to InfoMEC \citep{latentQuantization} which consists of three complementary metrics: modularity (the degree to which each learned latent encodes only one true source), explicitness (the degree to which the latents capture all information about a source), and less important, compactness (the degree to which each source is encoded in only one latent). InfoMEC aims to resolve many of the issues with the DCI framework of disentanglement evaluations, including removing the need to select hyperparameters which can affect results \citep{latentQuantization}.

Vanilla autoencoders with ICA in latent space outperform specialized disentanglement models most of the time (Table \ref{tab:disent-results}), and on average perform better than all. Experimental runtime is roughly 6 hours per autoencoder hyperparameter setting on a single GPU (roughly 720 GPU hours total).

\subsection{Deconfounding at foundation model-scale}
\label{sec:deconfoundingOpenphenom}

High-throughput screens have become a critical tool in modern biology, particularly for drug discovery \citep{jumpcp}. One example of such screenings is cell painting \citep{jumpcp,rxrx1}, where a perturbation is applied (or not) to a collection of cells, which are then stained and imaged. A key challenge in the application of machine learning to these data is the presence of complex technical variation (``batch effects'') that is not biologically significant \citep{arevalo,jumpcp,BEN,rxrx1,tvn}. For example, data collected from different microscopes, labs, or even just in different experiments can exhibit variation that is not of interest to the practitioner, potentially confounding results. A fundamental task in this setting is to quantify the degree to which a perturbation has a significant effect, which is challenging given that we have access only to high-dimensional, noisy observations in the form of images confounded by batch effects \citep{SAMSVAE}. In particular, we almost never have the ability to measure all sources of batch variation, and so it is of specific interest to be able to disentangle technical from biological variation without supervision.
\begin{table}[b]
\centering
\caption{Results from a downstream perturbation classification task. Each row represents a gene consisting of (\#) separate experiments with different CRISPR guides. All experiments use the same set of 22,062 controls. Base = untransformed embeddings, PCA = whitened embeddings, PCA + ICA = whitened embeddings with ICA rotation applied, PCA + Rand = whitened embeddings with a random rotation applied.}
\label{tab:bio-results}
\resizebox{\textwidth}{!}{
\begin{tabular}{lcccccccc}
\toprule
 & \multicolumn{4}{c}{\textbf{Mean AUROC} ($\uparrow$)} & \multicolumn{4}{c}{\textbf{Sparsity} ($\uparrow$ more sparse)} \\
 \cmidrule(lr){2-5} \cmidrule(lr){6-9}
Gene & Base & PCA & PCA + ICA & PCA + Rand & Base & PCA & PCA + ICA & PCA + Rand \\
\midrule
CYP11B1 (1) & 0.663 & 0.692 & \textbf{0.709} & 0.678 & 0.184 & 0.204 & \textbf{0.237} & 0.188 \\
EIF3H (1)   & 0.682 & 0.724 & \textbf{0.749} & 0.725 & 0.192 & 0.224 & \textbf{0.268} & 0.214 \\
HCK (1)     & 0.670 & 0.693 & \textbf{0.711} & 0.668 & 0.156 & 0.208 & \textbf{0.241} & 0.166 \\
MTOR (6)    & 0.663 & 0.690 & \textbf{0.705} & 0.679 & 0.166 & 0.201 & \textbf{0.233} & 0.186 \\
PLK1 (6)    & 0.803 & 0.811 & \textbf{0.815} & 0.792 & 0.251 & \textbf{0.307} & 0.305 & 0.262 \\
SRC (1)     & 0.660 & 0.694 & \textbf{0.706} & 0.676 & 0.170 & 0.214 & \textbf{0.240} & 0.184 \\
\bottomrule
\end{tabular}}
\end{table}

We explore an application of Theorem \ref{thm:icaStrucID} by applying independent component analysis to the latent space of OpenPhenom \citep{openphenom}, a large, open masked autoencoder trained on Rxrx3-core, a large library of cell painting images \citep{rxrx3core}. Downstream, we consider the task where the inferred embeddings are used to predict whether a given perturbation has been applied (i.e. classify ``control'' vs. ``perturbed''). Because we are primarily interested in whether the procedure enhances out-of-distribution generalization, we consider downstream classifiers trained on a subset of batches and evaluated on a held-out subset of batches.

\paragraph{Inference \& evaluation} We perform inference on all images from Rxrx3-core and estimate the whitening and independent components analysis models at the patch level by randomly subsampling a patch from each image. We consider four conditions: the original embedding (Base), the embedding whitened with principal components analysis (PCA), the whitened embedding rotated using ICA (PCA + ICA), and as a baseline, the whitened embedding rotated randomly (PCA + Rand). Plates are used as the batch indicator, while a single patch-level embedding is used for each image. For each embedding condition (raw, whitened, whitened + ICA, whitened + random rotation), we hold out 20\% of plates and train a gradient boosting classifier \citep{lightgbm} on the remaining 80\% of plates in a $k$-fold cross-validation scheme. Because some plates do not contain any perturbed samples (i.e. are entirely controls), we ensure that this split is roughly stratified on the label (``perturbed'' vs. ``control''). Perturbation classification is evaluated by the area under the receiver operator characteristic curve (AUROC). Embedding inference took approximately 1 hour on a single GPU, while ICA estimation took about 1 hour on 128 CPU cores with 128 GB RAM.

\paragraph{Results} Whitening alone often enhances the performance of downstream classification from the embeddings. The application of ICA consistently improves the performance even further (Table \ref{tab:bio-results}). To understand the source of the improvement, we measure sparsity with a measure called Hoyer sparsity which characterizes how biased trees are toward selecting a particular subset of features \citep{hoyerSparsity} defined as $\text{Sparsity}[\mathbf{c}] = \frac{\sqrt{D} - \frac{1}{||\mathbf{c}||_2}}{\sqrt{D} - 1}$
for a $D$-dimensional vector of split fractions $\mathbf{c}$ such that $\lVert \mathbf{c} \rVert_1 = 1$ and where $c_d$ measures the fraction of times the $d$th variable was used in a split (higher $\rightarrow$ more important). The sparsity score is zero when all features are used equally often. Sparsity increases markedly with both whitening and ICA.

\begin{wraptable}[10]{r}{4.5cm}
\label{tab:bio-concentration}
\vspace{-10pt}
\resizebox{\linewidth}{!}{\begin{tabular}{lc}
\toprule
Model & Concentration ($\uparrow$) \\
\midrule
Base        & 0.163 \\
PCA         & 0.332 \\
PCA + ICA   & \textbf{0.386} \\
PCA + Rand  & 0.287 \\
\bottomrule
\end{tabular}}
\caption{Concentration of biological variation in the top 25\% of features.}
\vspace{-20pt}
\end{wraptable}

However, this measure does not specifically show that the information being ignored as a result of the increased sparsity specifically has to do with the distinction between technical and biological variation. To assess this, we measure how well the information useful for predicting the biological effect is concentrated in the top $k\%$ of predictors. Denote by $\mathbf{z}_{k\%}$ the top $k\%$ most important features for the prediction of the biological effect $y$ of interest, and by $\mathbf{z}_{k\%}'$ the remaining features. Then, the concentration is given by $\text{Concentration}[y] = \frac{\text{AUROC}[y; \mathbf{z}_{k\%}]}{\text{AUROC}[y; \mathbf{z}_{k\%}']} - 1$, i.e., the improvement in predicting perturbation $y$ from the top features versus the bottom. The concentration increases uniformly with whitening and with ICA (Table \ref{tab:bio-concentration}), even in the case of PLK1 guides where it does not confer a substantial gain in out-of-distribution AUROC. The results are not sensitive to reasonable values of $k$ (Appendix \ref{app:bioExperimentDetails}). Interestingly, the results suggest that whitening alone biases the representation toward becoming axis-aligned even without ICA.

\section{Discussion}

We have developed a theory of statistical near-identifiability of neural representations which is applicable to the internal representations of real-world self-supervised models. Notably, in contrast to prior work, our result requires few assumptions on the data-generating process, instead trading these off for assumptions on the model alone, and applies to a broad class of models including supervised learners, next-token predictors, and self-supervised learners. Additionally, we have shown that additional assumptions on the data-generating process can confer an even stronger result: namely, provable structural identifiability of the latent variables which generated the observables. We directly test our theory in real-world, off-the-shelf, pretrained self-supervised models. Furthermore, we leverage our theory to motivate the application of ICA to the latent spaces of self-supervised models and show that it can achieve state-of-the-art disentanglement results, including some of the first disentanglement results for out-of-distribution generalization in real-world data.

\paragraph{Limitations \& future work} The primary limitation of our work is the difficulty in empirically testing the bi-Lipschitz assumptions necessary for our theory. Instead, we test the downstream effects of our theory in four sets of experiments, and offer arguments from prior work which show that common regularization techniques which enable training of practical-scale neural networks (often referred to as ``dynamical isometry'', see also Appendix \ref{app:dynIso}) may lead to this condition. Interestingly, because the local bi-Lipschitz assumption is largely agnostic to data modality and model implementation details, it potentially applies to a broad class of both data-generating processes and models. As such, it may be an interesting lens for studying the phenomenon of cross-model representation convergence \citep{latentSpaceTranslation,latentFunctionalMaps}, which is largely unaddressed by existing theory because existing identifiability results each require different assumptions on the data-generating process for different models \citep{platonic,SSLID}. Although we echo the calls of \cite{SSLID} for extensions to the practical regime (e.g. finite samples, imperfect optimization), we do not treat this case here, although it could be an extension of our framework. Additionally, because ours are the first identifiability results that apply to the intermediate layers of transformer-based next-token predictors, they may be useful for the interpretation of these models \citep{headSpecialization}. In particular, \cite{iPredict} show that discrete concept models learned atop last-layer GPT representations render the entire model end-to-end linearly identifiable, and the results of our paper suggest that this technique may work for intermediate-layer representations as well.

\subsubsection*{Acknowledgments}
This work was supported by the Chan Zuckerberg Initiative (CZI) through the AI Residency Program.
We thank CZI for the opportunity to participate in this program and the CZI AI Infrastructure
Team for support with the GPU cluster used to train our models.

\bibliography{references}
\bibliographystyle{iclr2026_conference}

\appendix
\section{Appendix}
\renewcommand\thefigure{\thesection.\arabic{figure}}
\renewcommand\thedefinition{\thesection.\arabic{definition}}
\renewcommand\thelemma{\thesection.\arabic{lemma}}
\renewcommand\theproposition{\thesection.\arabic{proposition}}

\subsection{A history of the term ``identifiability''}
\label{app:terminology}

\paragraph{Statistical identifiability} Identifiability has a long history in statistics. For example, consider the following definition from \citet{casellaBerger} [p. 548], a canonical textbook in mathematical statistics:

\begin{quotation}
\noindent\textbf{Definition.} A parameter $\theta$ for a family of distributions $\{ f(x \mid \theta) : \theta \in \Theta \}$ is identifiable if distinct values of correspond to distinct pdfs or pmfs. That is, if $\theta \neq \theta'$, then $f(x \mid \theta)$ is not the same function as $f(x \mid \theta')$.
\end{quotation}

Our Definition \ref{defn:nearid} in the main text makes a straightforward generalization from likelihoods $f$
to losses $\mathcal{L}$. Further, because of the dominance of empirical risk minimization and the fact that, unlike most likelihoods in statistical inference, losses are non-convex, we define identifiability at the minimizers of the expected loss. In other words, our definition of identifiability agrees with the statistical one, modulo some small changes to make it useful for talking about non-convex optimization, specifically the empirical minimization of non-convex risk functions. Indeed, this means that not only does Definition \ref{defn:nearid} agree with the statistical definition, but also other recent attempts at defining and proving representation identifiability results for practical machine learning models \citep{roeder,understandingLLMs}.

\paragraph{Structural identifiability} On the other hand, there is a similarly long history of ``structural identifiability'' from econometrics. For example, consider \citet{koopmansReiersol}, which partly led to the Nobel prize:

\begin{quotation}
    \noindent\textbf{Identifiability of structural characteristics by a model.} It is therefore a question of great practical importance whether a statement converse to the one just made is valid: can the distribution $H$ of apparent variables, generated by a given structure $S$ contained in a model $\mathcal{S}$, be generated by only one structure in that model?
\end{quotation}

Clearly, this is a different definition. Indeed, Koopmans \& Reiersøl acknowledge that it is different from the statistical definition above. It implicitly assumes the existence of a ``true model'' $S$ living within the estimable model class $\mathcal{S}$ (and that this model generated the data), which flies in the face of mathematical statistics’ common quip ``all models are wrong.''

\citet{pearl1995} inherits this definition, perhaps contributing to its later infusion into the ICA literature:

\begin{quotation}
    \noindent\textbf{DEFINITION 4 (Identifiability).} The causal effect of $X$ on $Y$ is said to be identifiable if the quantity $P(Y | X)$ can be computed uniquely from any positive distribution of the observed variables that is compatible with a graph $G$.
\end{quotation}

Put simply, both these definitions of structural identifiability assume that the true data-generating process matches the model, and under this assumption, consider whether some structural parameter of interest can be identified. As a consequence, we refer to our Definition \ref{defn:strucID} as ``structural identifiability''. The only generalization we make is that we do not specifically enumerate $\mathcal{S}$, or indeed require the true structure $S \in \mathcal{S}$. We do not enumerate $\mathcal{S}$ because instead we enumerate $\Theta$, the space of possible parameters of the neural network model, and specify the mapping $F$ which generates the learned structure from a setting of the parameters $F : \theta \mapsto f_\theta$. We refer to the ``true structure'' as $u$. The relaxation of the requirement that the model class contains the ``true structure'' allows for other interesting cases, as illustrated in Example 2 below.

\paragraph{Identifiability in linear ICA} Occasionally, these definitions of identifiability are used somewhat interchangeably. This is particular true in the case of recent developments in independent components analysis, such as extensions to the nonlinear mixing regime. Interestingly, the main result of the original seminal linear ICA paper \citep{comon1994} is a statistical identifiability result, not structural (we edit the quoted Corollary slightly so that it is self-contained):

\begin{quotation}
    \noindent
    \textbf{Corollary 13.} Let no noise be present in the linear ICA model with observations $\mathbf{y}$, and define $\mathbf{y} = M \mathbf{x}$ and $\mathbf{y} = F \mathbf{z}$ for a random variable $\mathbf{x}$ with independent components such that at most one is Gaussian. Then if $\Psi$ (a contrast function taking densities as inputs) is discriminant, $\Psi(p_\mathbf{x}) = \Psi(p_\mathbf{z})$ if and only if $F = M \Lambda P$ where $\Lambda$ is an invertible diagonal matrix and $P$ a permutation.
\end{quotation}

Of note, linear ICA is a case where most existing identifiability results depend on well-specification of the model. In particular, if the data is not generated by linear mixing from independent components, there is no guarantee that a linear decomposition into independent components exists \citep{castella} and therefore the statistical identifiability claim above is vacuous. In other words, there are few or no results available for statistical identifiability of linear ICA that don't also imply structural identifiability. This perhaps helps explain why the two concepts have been used somewhat interchangeably in this area of the literature. However, as we will see in the examples below, this is not necessarily the case for all models of interest.

Notably, LiNGAM, a classical approach which extends linear ICA to causal discovery \cite{lingam}, provides an early example of this distinction between statistical and structural identifiability extending to graphical structures. In particular, they show that the independent components estimate can uniquely recover a causal graph under the assumption of linear functional relationships and additive noise (statistical identifiability). Structural identifiability follows under the assumption that a LiNGAM generated the data.

\paragraph{Identifiability in non-linear ICA} Noting that in general, non-linear extensions of ICA are impossible \citep{challengingDisentanglement}, recent works have attempted to find sets of assumptions that render the task tractable. One line of work utilizes side information, showing that independence conditional on this side information leads to identifiability \citep{khemakhem}. Notably, \citet{khemakhem} explicitly define their identifiability as statistical identifiability (up to an equivalence class), but point out that if the true data-generating process takes the same form, structural identifiability of the true latents is achieved.

Below, we show how two other examples fit into these definitions.

\paragraph{Example 1: mis-specified linear regression} The first example comes from traditional statistics. Consider the identifiability of the usual linear model $y = x^\intercal \beta + \eta'$ when the model is mis-specified in the sense that the true data-generating process is $y = f(x) + \eta'$ for some non-linear $f$. If $f$ meets certain conditions, the following facts are true:

\begin{enumerate}
    \item $\beta$ is statistically identifiable under the usual OLS assumptions (i.e. identifiable according to our Definition \ref{defn:nearid}),
    \item $f$ is not structurally identifiable (i.e., $\epsilon$-nearly structurally identifiable with $\epsilon = 0$) according to Definition \ref{defn:strucID},
    \item the model is not absolutely identifiable according to \citet{SSLID}, because the data-generating process does not match the model,
    \item a function $h$ \textit{is} structurally identifiable (i.e., $\epsilon$-nearly with $\epsilon = 0$) according to our Definition \ref{defn:strucID}, where $h$ is in some sense the ``nearest'' linear function to $f$.
\end{enumerate}

We emphasize that in this first example, we do not exploit the $\epsilon$-nearness relaxation in our definitions, only the distinction between statistical and structural identifiability.

\paragraph{Example 2: masked autoencoders under imperfect reconstruction} Now, consider observations generated by some arbitrary non-linear mixing of latent factors $X = g(Z)$. For a masked autoencoder, the following facts \textit{could} be true all at once, such as in the case of imperfect reconstruction:

\begin{enumerate}
    \item the internal representations given by the encoder $f$ are statistically identifiable according to our Definition \ref{defn:nearid},
    \item the true data-generating process $g$ is not structurally identifiable according to Definition \ref{defn:strucID} because of imperfect reconstruction, but
    \item some approximation to the data-generating process $h$ \textit{is} structurally identifiable, but $h \neq g$ because masked training prevents perfect reconstruction, so
    \item the model is not necessarily absolutely identifiable according to \citet{SSLID}, depending on the assumptions on $h$ and $g$ (i.e. they might not reside in the same set of possible models).
\end{enumerate}

In this case, we might not have the simple analytical form for $h$ that we do in Example 1, but it’s clear that \textit{something} about the data-generating process is structurally identifiable. We emphasize that our theory does not cover this case, because it requires a notion of ``closeness'' between $h$ and $g$ which is not necessarily obvious (nor is it necessarily covered by our conception of $\epsilon$-nearness, because the similarity is in observation space, not representation space).

In particular, Theorem \ref{thm:aeID} does not require perfect reconstruction to yield statistical identifiability (and there perhaps is structural identifiability of some process $h$ which is the nearest near-isometric approximation to the manifold of masked inputs, although we don't make this argument explicit), but Theorem \ref{thm:icaStrucID} does require perfect reconstruction to yield structural identifiability of $g$. A similar observation can be made about statistical versus structural identifiability in exponential family models like GPTs, as discussed in Appendix \ref{app:gptID}.

\subsection{Theory roadmap}
\label{app:theoryRoadmap}

We provide a brief table of contents to this set of appendices, which cover our theoretical contributions.

\begin{itemize}
    \item In Appendix \ref{app:gptID}, we show the utility of our Definition \ref{defn:nearid} by showing that the statistical identifiability result from \citet{roeder} meets our definition of statistical identiifaiblity for the penultimate layer of exponential family models such as GPTs.
    \item In Appendix \ref{proof:aeID}, we prove Theorem \ref{thm:aeID}, our key statistical near-identifiability result up to $\mathcal{H}_\text{rigid}$ for the internal representations of general self-supervised models.
    \item In Appendices \ref{app:modelSpecificResultsMAE} and \ref{app:modelSpceificResultsGPTs} we provide model-specific treatments for masked autoencoders and GPTs, showing that Theorem \ref{thm:aeID} holds for these models.
    \item In Appendix \ref{proof:icaID}, we prove Theorem \ref{thm:icaID}, showing that linear ICA can resolve the rigid indeterminacy left by Theorem \ref{thm:aeID} (or indeed any other linear identifiability result), yielding statistical near-identifiability up to $\mathcal{H}_\sigma$ (the space of signed permutations).
    \item In Appendix \ref{proof:strucID}, we prove Theorem \ref{thm:icaStrucID}, showing that for bi-Lipschitz data-generating processes, statistical near-identifiability extends to structural near-identifiability.
    \item In Appendix \ref{app:dynIso}, we show that the dynamical isometry condition used for characterizing practical neural network training regimes implies the bi-Lipschitz assumption necessary for Theorem \ref{thm:aeID}.
    \item In Appendix \ref{app:strucIDImpliesID}, we prove that structural identifiability (Definition \ref{defn:strucID}) is strictly stronger than statistical identifiability (Definition \ref{defn:nearid}), including in the $\epsilon$-near case provided the transformation class $\mathcal{H}$ is bounded.
\end{itemize}

\subsection{Proofs}
\label{app:proofs}

\subsubsection{Statistical near-identifiability of exponential family models}
\label{app:gptID}

The goal of this section is to contextualize two important statistical identifiability results \citep{roeder} which apply to the penultimate layer of exponential family models, discussed in Section \ref{sec:identifiability} (``Connection with prior results''). Specifically, we show that the result of \citet{roeder} meets the requirements of our Definition \ref{defn:nearid}, and explain how the result of \cite{nielsenKL} relates to our definitions. In particular, we are interested in models which have losses taking the form in Equation \ref{eqn:expFam} from Section \ref{sec:identifiability}:
\begin{equation*}
    \mathcal{L}_\theta(x, y) = -\boldsymbol{\eta}_\theta(x)^\intercal \mathbf{t}_\theta(y) + A_\theta(x) = - \log q_\theta(y \mid x)
\end{equation*}
where $\theta$ are the parameters of the model and $\boldsymbol{\eta}_\theta$ give the representations of interest. They key result of \citet{roeder} hinges on on the assumption of \underline{sufficient diversity}, which in the case of next-token predictors is the condition that the final linear classification head is in some sense ``full rank''.

First, we prove a simple lemma showing that \underline{sufficient diversity} can be recast as a mild property of the data distribution, combined with an assumption on the model.

\begin{definition}
    \label{assump:sufficientDiversity}
    The mapping $\mathbf{t} : \mathcal{Y} \rightarrow \mathbb{R}^D$ satisfies sufficient diversity with respect to $\mathbf{P}(x, y)$ if repeated sampling from the marginal distribution $y_d \sim \mathbf{P}(y)$ yields $D$ linearly independent vectors $\{ \mathbf{t}(y_d) - \mathbf{t}(y_0)$$ \}_{d=1}^D$ for some $y_0 \in \mathcal{Y}$.
\end{definition}

\begin{lemma}
    $\mathbf{t} : \mathcal{Y} \rightarrow \mathbb{R}^D$ satisfies the sufficient diversity assumption with respect to $\mathbf{P}(y)$ if and only if $\text{Cov}[t(y)] \succeq \eta$ for some $\eta > 0$.
\end{lemma}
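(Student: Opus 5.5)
The plan is to reduce both directions to the fact that a positive semidefinite matrix has a zero eigenvalue exactly when some nonzero direction $v$ has zero variance, $v^\intercal \text{Cov}[\mathbf{t}(y)] v = \text{Var}[v^\intercal \mathbf{t}(y)] = 0$. Throughout, I read ``$\text{Cov}[\mathbf{t}(y)] \succeq \eta$'' as $\text{Cov}[\mathbf{t}(y)] \succeq \eta I$. I interpret ``repeated sampling yields $D$ linearly independent vectors'' as ``with positive probability (equivalently, for some points in the support of $\mathbf{P}(y)$).'' I also assume $\mathbf{t}(y)$ has finite second moments so that the covariance exists. Since $\text{Cov}$ is always positive semidefinite, and a finite-dimensional PSD matrix is $\succeq \eta I$ for some $\eta>0$ if and only if it is nonsingular (take $\eta = \lambda_{\min}$), the claim reduces to: sufficient diversity holds if and only if no nonzero $v$ makes $v^\intercal \mathbf{t}(y)$ almost surely constant.

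\emph{(Diversity $\Rightarrow$ positive definite.)} I argue by contraposition. Suppose $v \neq 0$ has $\text{Var}[v^\intercal \mathbf{t}(y)] = 0$. Then $v^\intercal \mathbf{t}(y) = c$ for $\mathbf{P}(y)$-almost every $y$. For any samples $y_0, y_1, \dots, y_D$ drawn from $\mathbf{P}(y)$, almost surely $v^\intercal(\mathbf{t}(y_d) - \mathbf{t}(y_0)) = 0$ for all $d$. So the $D$ difference vectors lie in the hyperplane $v^\perp$ and cannot be linearly independent. Hence sufficient diversity fails with probability one.

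\emph{(Positive definite $\Rightarrow$ diversity.)} Let $S$ be the support of the pushforward of $\mathbf{P}(y)$ under $\mathbf{t}$, and let $A$ be the affine hull of $S$. If $A$ were a proper affine subspace, there would be a $v \neq 0$ with $v^\intercal s$ constant on $A$. That would give $\text{Var}[v^\intercal \mathbf{t}(y)] = 0$, contradicting $\text{Cov} \succeq \eta I$. So $A = \mathbb{R}^D$, and $S$ contains $D+1$ affinely independent points $s_0, \dots, s_D$. Affine independence is an open condition: small perturbations of the $s_d$ keep the differences $s_d - s_0$ linearly independent. Each $s_d$ is in the support, so every neighbourhood of it has positive probability. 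Choosing small disjoint-enough neighbourhoods $U_d \ni s_d$ on which affine independence persists, independent draws land in all of them with probability $\prod_d \mathbf{P}(\mathbf{t}(y) \in U_d) > 0$. In that event $\{\mathbf{t}(y_d) - \mathbf{t}(y_0)\}_{d=1}^D$ is linearly independent. If the definition instead fixes $y_0$ arbitrarily in $\mathcal{Y}$, I take $y_0$ with $\mathbf{t}(y_0)$ near $s_0$ and only sample $y_1, \dots, y_D$.

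The main obstacle is the second direction, specifically the passage from ``affine hull of the support is everything'' to ``samples realize independence with positive probability.'' It needs the openness of linear independence together with the support characterization. It also depends on pinning down the probabilistic meaning of ``repeated sampling yields,'' so I will state that interpretation explicitly at the start. In the discrete case (e.g. next-token labels) this step collapses: the support is a finite set of atoms, each with positive mass, and nothing more is needed.
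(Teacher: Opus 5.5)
Your proof is correct under the interpretation you fix. It rests on the same identity as the paper's, $v^\intercal \text{Cov}[\mathbf{t}(y)]\,v = \text{Var}[v^\intercal \mathbf{t}(y)]$. However, both directions are executed differently, and your versions are the sound ones.

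\emph{Forward direction.} The paper argues directly that linear independence forces $v^\intercal(\mathbf{t}(y_d)-\mathbf{t}(y_0)) \neq 0$ almost surely for each $d$. Linear independence only says these $D$ projections do not all vanish. That weaker fact is exactly what your contrapositive uses: an a.s.\ constant projection puts every difference in $v^\perp$.

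\emph{Reverse direction.} The paper's one line claims that positive definiteness makes any $D$ draws almost surely linearly independent. That fails for categorical labels, the case the paper is most interested in. With $D=1$ and $\mathbf{t}(y)$ uniform on $\{0,1\}$, the covariance is $1/4$, yet a difference of two draws vanishes with probability $1/2$. Almost-sure independence of differences holds only if the law of $\mathbf{t}(y)$ charges no affine hyperplane, which the paper does not assume. Your affine-hull-of-the-support argument, combined with openness of affine independence, gives the correct positive-probability conclusion with no regularity assumptions. The paper's route, where it is valid, would give the stronger almost-sure statement.

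One point to tighten. Your aside letting $y_0$ be an arbitrary point of $\mathcal{Y}$ is fine for the reverse direction. Under that reading, however, the forward direction, and the lemma itself, are false. Take $D=2$ and $\mathbf{P}(y)$ uniform on two labels with $\mathbf{t}$-values $(0,0)$ and $(1,0)$. Add a probability-zero label $y_0$ with $\mathbf{t}(y_0)=(0,1)$. The differences $(0,-1)$ and $(1,-1)$ are linearly independent, while the covariance is singular. So the restriction on $y_0$ is necessary, not a convenience: $y_0$ must be sampled, or at least $\mathbf{t}(y_0)$ must lie in the support of the law of $\mathbf{t}(y)$. The support then sits in the closed hyperplane $\{w : v^\intercal w = c\}$, so $\mathbf{t}(y_0)$ does too. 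Your forward direction should invoke this restriction explicitly.
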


\begin{proof}
   Assume $\mathbf{t}$ satisfies the sufficient diversity assumption. Let $v \in \mathbb{R}^D$ be any nonzero vector. By linear independence, we have $v^T \left( t(y_d) - t(y_0) \right) \neq 0$ almost surely for any fixed $y_0$, and therefore $v^T t(y) \neq c$ almost surely for any constant $c \in \mathbb{R}$. Thus, $v^T \text{Cov}[t(y)] v = \text{Var}[v^T t(y)] > 0$ as required.

   To see the other direction, note that any $D$ draws from the distribution $\mathbf{P}(t(y))$ are linearly independent almost surely by the positive definiteness of the covariance matrix.
\end{proof}

To see why this condition on the data distribution is extremely mild, note that even if $\mathbf{P}(y)$ is a finite categorical distribution over labels, so long as there are at least $D$ of them, it is possible that $\mathbf{t}$ satisfies sufficient diversity. Therefore, it is best regarded as a condition on the model. Now, we can state the key result of \citet{roeder} as a lemma.

\begin{lemma}
    \label{lemma:roeder}
    \textit{(Theorem 1 of \citet{roeder})} Let $\mathbf{P}(x, y)$ be a data distribution, and let $\mathcal{M} = \{ \mathcal{L}_\theta(x, y) = -\boldsymbol{\eta}_\theta(x)^\intercal \mathbf{t}_\theta(y) + A(x) : \theta \in \Theta \}$ be a model with an exponential family loss. Then, if $\mathcal{L}_\theta = \mathcal{L}_{\theta'}$ almost everywhere with respect to $\mathbf{P}$, we have that $\boldsymbol{\eta}_\theta(x) = L \boldsymbol{\eta}_{\theta'}(x)$ almost everywhere with respect to $\mathbf{P}$ for some $L \in \mathcal{H}_\text{linear}$ whenever $\mathbf{t}_\theta$ and $\mathbf{t}_{\theta'}$ satisfy Definition \ref{assump:sufficientDiversity}.
\end{lemma}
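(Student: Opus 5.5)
The plan is to adapt the standard argument of \citet{roeder}: equality of the losses becomes a linear relation between the natural parameters, and the relation is inverted using $D$ linearly independent differences of sufficient statistics. We assume $\mathcal{L}_\theta(x,y) = \mathcal{L}_{\theta'}(x,y)$ for $\mathbf{P}$-almost every $(x,y)$.

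First we eliminate the log-partition term. Fix $y_0$ as in Definition \ref{assump:sufficientDiversity} and subtract the loss evaluated at $(x,y_0)$ from the loss at $(x,y)$. The terms $A_\theta(x)$ and $A_{\theta'}(x)$ do not depend on $y$, so they cancel, leaving
\begin{equation*}
\boldsymbol{\eta}_\theta(x)^\intercal\big(\mathbf{t}_\theta(y)-\mathbf{t}_\theta(y_0)\big) = \boldsymbol{\eta}_{\theta'}(x)^\intercal\big(\mathbf{t}_{\theta'}(y)-\mathbf{t}_{\theta'}(y_0)\big)
\end{equation*}
for almost every $x$. Next we evaluate this identity at the points $y_1,\dots,y_D$ supplied by sufficient diversity and stack the results. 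Let $M_\theta$ be the $D\times D$ matrix with columns $\mathbf{t}_\theta(y_d)-\mathbf{t}_\theta(y_0)$, and define $M_{\theta'}$ in the same way. This gives $M_\theta^\intercal \boldsymbol{\eta}_\theta(x) = M_{\theta'}^\intercal \boldsymbol{\eta}_{\theta'}(x)$. Since $M_\theta$ is invertible by sufficient diversity, $\boldsymbol{\eta}_\theta(x) = L\,\boldsymbol{\eta}_{\theta'}(x)$ with $L = M_\theta^{-\intercal} M_{\theta'}^\intercal$. The matrix $L$ is invertible because $M_{\theta'}$ is also invertible, so $L \in \mathcal{H}_\text{linear}$.

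The main obstacle is measure-theoretic. The pointwise identity holds only almost everywhere in $(x,y)$, but we need it on a single set of $x$ of full measure, simultaneously at the specific points $y_0,\dots,y_D$. We handle this with Fubini: for $\mathbf{P}(x)$-almost every $x$, the identity holds for $\mathbf{P}(y\mid x)$-almost every $y$. We then choose $y_0,\dots,y_D$ from a set of full measure, which the "repeated sampling" phrasing of Definition \ref{assump:sufficientDiversity} and the preceding lemma allow, since positive definiteness of $\mathrm{Cov}[\mathbf{t}(y)]$ makes generic draws linearly independent.

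This step needs some care. If the conditionals $\mathbf{P}(y\mid x)$ have supports that vary with $x$, the chosen $y_d$ must lie in each relevant support. To avoid this, we would read the hypothesis $\mathcal{L}_\theta = \mathcal{L}_{\theta'}$ as equality as functions of $y$ on the support of the marginal $\mathbf{P}(y)$, for almost every $x$. This matches the setting of \citet{roeder}, where the loss is a family of conditional likelihoods.

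A second, milder subtlety is that $L$ must not depend on $x$. It does not, because the $y_d$ are fixed once, independently of $x$. This completes the proof, and it shows the conclusion of Lemma \ref{lemma:roeder} is exactly statistical identifiability in the limit up to $\mathcal{H}_\text{linear}$ in the sense of Definition \ref{defn:nearid}.
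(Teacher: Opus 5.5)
Your cancellation of the log-partition terms and the stacking into $M_\theta^\intercal\boldsymbol{\eta}_\theta(x)=M_{\theta'}^\intercal\boldsymbol{\eta}_{\theta'}(x)$ are exactly the argument of \citet{roeder}, which the paper only cites. Reading the hypothesis as equality of the conditionals for a.e.\ $x$ is also consistent with how the lemma is used in Proposition~\ref{prop:gptID}.

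The gap is the invertibility step. Definition~\ref{assump:sufficientDiversity} is a property of a single map. It gives you one tuple $y_0,\dots,y_D$ for $\mathbf{t}_\theta$ and possibly a different one for $\mathbf{t}_{\theta'}$. Your sentence ``$L$ is invertible because $M_{\theta'}$ is also invertible'' needs one tuple that works for both. Your justification is that positive definite $\mathrm{Cov}[\mathbf{t}(y)]$ makes generic draws linearly independent. This fails as soon as $\mathbf{P}(y)$ has atoms, because coincident draws then have positive probability. That is precisely the finite-vocabulary case the lemma targets. The existential reading of the definition is the intended one, since under an almost-sure reading no finite vocabulary would qualify once $D\ge 2$.

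Concretely, take $D=2$ and $\mathcal{Y}=\{a,b,c,d\}$ with $\mathbf{P}(y)$ uniform. Set $\mathbf{t}_\theta(a)=\mathbf{t}_\theta(b)=0$, $\mathbf{t}_\theta(c)=\mathbf{e}_1$, $\mathbf{t}_\theta(d)=\mathbf{e}_2$, and $\mathbf{t}_{\theta'}(c)=\mathbf{t}_{\theta'}(d)=0$, $\mathbf{t}_{\theta'}(a)=\mathbf{e}_1$, $\mathbf{t}_{\theta'}(b)=\mathbf{e}_2$. Both maps are sufficiently diverse, with positive definite covariance. Yet any choice of $y_0,y_1,y_2$ either repeats a label or contains both $a,b$ or both $c,d$, so no tuple works for both maps. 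Now take $\boldsymbol{\eta}_\theta(x)=s(x)(1,1)^\intercal$ and $\boldsymbol{\eta}_{\theta'}(x)=-s(x)(1,1)^\intercal$ with the corresponding log-partition functions. The two softmax conditionals coincide, so the losses agree everywhere. But at the $\theta$-tuple $(y_0,y_1,y_2)=(a,c,d)$ your construction gives $M_\theta=I$ and $L=M_{\theta'}^\intercal=\begin{pmatrix}-1&0\\-1&0\end{pmatrix}$, which is singular. Note that $L=-I$ does satisfy the conclusion.

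So the statement survives, but you need one more idea. One option is to assume a single tuple witnessing diversity for both maps simultaneously, which is what the stacking argument actually uses. The other is to run your argument twice. The $\theta$-tuple gives $\boldsymbol{\eta}_\theta=L_1\boldsymbol{\eta}_{\theta'}$ and the $\theta'$-tuple gives $\boldsymbol{\eta}_{\theta'}=L_2\boldsymbol{\eta}_\theta$, both on a common full-measure set $G$. Let $V=\mathrm{span}\{\boldsymbol{\eta}_{\theta'}(x):x\in G\}$ and $W=\mathrm{span}\{\boldsymbol{\eta}_\theta(x):x\in G\}$. Then $L_1V=W$ and $L_2W=V$, hence $\dim V=\dim W$ and $L_1|_V:V\to W$ is an isomorphism. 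Extending it by any isomorphism between complements of $V$ and $W$ produces an invertible $L$ with $\boldsymbol{\eta}_\theta=L\boldsymbol{\eta}_{\theta'}$ almost everywhere. When $W=\mathbb{R}^D$, your original $L_1$ is already invertible because it is onto.
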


The only additional requirement to meet our definition of $\epsilon$-near-identifiability (Definition \ref{defn:nearid}) is that $\Theta$ is sufficiently rich to approximate a unique minimum of the loss. We make this concrete in the following definition. Importantly, this assumption is \textit{not} equivalent to requiring that the true data distribution is in the model. Instead, it amounts to an assumption that the pointwise projection of the true conditional $\mathbf{P}(y \mid x)$ onto the space of exponential family distributions is unique and in the model. It turns out that this projection exists and is unique under the sufficient diversity assumption, together with very mild assumptions on the data distribution. Therefore, this parallels the situation of statistical identifiability of masked autoencoders (see Appendix \ref{app:modelSpecificResultsMAE} and Appendix \ref{app:terminology}, Example 2), where perfect reconstruction is not required for statistical identifiability (but might be required for a notion of structural identifiability).

\begin{definition}
    \label{defn:sufficientRichness}
    For an exponential family loss $\mathcal{L}_\theta(x, y) = -\boldsymbol{\eta}_\theta(x)^\intercal \mathbf{t}_\theta(y) + A_\theta(x)$, a data distribution $\mathbf{P}(x, y)$, and a sufficient statistics function $\mathbf{t}$ satisfying sufficient diversity, consider the following minimization problems:
    \begin{equation*}
        \kappa^*(x) = \arg \min_{\kappa \in \mathcal{K}} \text{KL}(\mathbf{P}(y \mid x) \mid\mid \mathbf{Q}_\kappa)
    \end{equation*}
    where $\mathbf{Q}_\kappa(y) = \kappa^\intercal \mathbf{t}(y) + A(\kappa)$ is the approximating distribution. Then, a parameter space $\Theta$ of an exponential family model of the form in Equation \ref{eqn:expFam} is \underline{sufficiently rich to exactly model the minimizer} if this minimizer exists for every $x$, is unique for almost every $x$ with respect to $\mathbf{P}(x)$, and there exists $\theta \in \Theta$ such that $\boldsymbol{\eta}_\theta(x) = \kappa^*(x)$.
\end{definition}

\textbf{Remark.} For categorical likelihoods, the only requirement on the true data distribution $\mathbf{P}(y \mid x)$ is that all categories must have positive probability for the above to hold (for almost every $x$). This means that the assumption is trivially satisfied for e.g. GPT-class models assuming that any token might be next for any given state. This condition is called minimality \citep{wainwright}.

Now we give the full statement of the proposition showing that the Lemma \ref{lemma:roeder} combines with the above assumption to yield $\epsilon$-near-identifiability in the limit according to our definition (with $\epsilon$ = 0).

\begin{proposition}
    \label{prop:gptID}
    Let $\mathbf{P}(x, y)$ be a data distribution, and let $\mathcal{M} = \{ \mathcal{L}_\theta(x, y) = -\boldsymbol{\eta}_\theta(x)^\intercal \mathbf{t}_\theta(y) + A_\theta(x) : \theta \in \Theta \}$ be a model with an exponential family loss. Then, for $F : \theta \mapsto \boldsymbol{\eta}_\theta$, $(\mathbf{P}, \Theta, \mathcal{L}_\theta, F)$ is indentifiable in the limit up to $\mathcal{H}_\text{linear}$ when the label distribution $\mathbf{P}(y)$ is sufficiently diverse and the approximating class $\Theta$ is sufficiently rich according to Definition \ref{defn:sufficientRichness}.
\end{proposition}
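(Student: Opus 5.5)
The plan is to reduce the claim to Lemma \ref{lemma:roeder} by showing that every minimizer of the expected loss attains the same loss function almost everywhere. Fix $\theta, \theta' \in \mathcal{S}$. Conditioning on $x$, we write the expected loss as
\begin{equation*}
\mathbb{E}_{x,y}[\mathcal{L}_\theta(x,y)] = \mathbb{E}_{x}\big[\text{KL}(\mathbf{P}(y \mid x) \,\|\, q_\theta(\cdot \mid x))\big] + \mathbb{E}_x[\mathrm{H}(\mathbf{P}(y\mid x))].
\end{equation*}
The entropy term does not depend on $\theta$. Each $q_\theta(\cdot\mid x)$ is an exponential family member $\mathbf{Q}_\kappa$ with $\kappa = \boldsymbol{\eta}_\theta(x)$, so the integrand is bounded below pointwise by $\text{KL}(\mathbf{P}(y\mid x)\,\|\,\mathbf{Q}_{\kappa^*(x)})$. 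By sufficient richness (Definition \ref{defn:sufficientRichness}) some $\theta^* \in \Theta$ has $\boldsymbol{\eta}_{\theta^*} = \kappa^*$. Hence $\theta^*$ attains this pointwise lower bound everywhere, and the minimum value of the expected loss equals the expectation of the pointwise minimum.

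Any $\theta \in \mathcal{S}$ therefore satisfies $\mathbb{E}_x[\text{KL}_\theta(x) - \text{KL}_{\theta^*}(x)] = 0$, where we abbreviate $\text{KL}_\theta(x) = \text{KL}(\mathbf{P}(y\mid x)\,\|\,q_\theta(\cdot\mid x))$. The integrand is nonnegative, so it vanishes $\mathbf{P}(x)$-a.e. By the a.e.\ uniqueness of the minimizer $\kappa^*(x)$, this gives $\boldsymbol{\eta}_\theta(x) = \kappa^*(x)$ a.e. It follows that $q_\theta(\cdot \mid x) = q_{\theta'}(\cdot\mid x)$ for almost every $x$, which is pointwise equality of the losses $\mathcal{L}_\theta = \mathcal{L}_{\theta'}$ almost everywhere with respect to $\mathbf{P}(x,y)$. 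Lemma \ref{lemma:roeder} then applies, since both sufficient statistics satisfy Definition \ref{assump:sufficientDiversity} by assumption. It yields $\boldsymbol{\eta}_\theta = L\boldsymbol{\eta}_{\theta'}$ a.e.\ for some $L \in \mathcal{H}_\text{linear}$, which is Definition \ref{defn:nearid} with $\epsilon = 0$ in the $L^\infty$ norm.

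The main obstacle is the mismatch between how Definition \ref{defn:sufficientRichness} is phrased and what $q_\theta$ actually is. The definition uses a fixed statistic $\mathbf{t}$, whereas in the model $\mathbf{t}_\theta$ varies with $\theta$, so $q_\theta$ and $q_{\theta'}$ may live in different exponential families. I would handle this by interpreting richness as the claim that the minimal achievable conditional KL is attained and that the minimizing conditional distribution, rather than the natural parameter, is unique. Uniqueness should come from strict convexity of KL in its second argument together with minimality (the Remark after Definition \ref{defn:sufficientRichness}). Equality of the distributions $q_\theta(\cdot\mid x)$ is all that Lemma \ref{lemma:roeder} needs.

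A secondary technical point is measurability and integrability. We need the expected loss to be finite at $\theta^*$ so that the subtraction above is legitimate. We also need $\kappa^*$ to be measurable in $x$. Both I would simply assume as part of the existence clause in Definition \ref{defn:sufficientRichness}.
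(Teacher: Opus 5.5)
Your proposal is correct and takes essentially the same route as the paper. Sufficient richness forces every minimizer to induce the same conditional $q_\theta(\cdot\mid x)$, hence $\mathcal{L}_\theta=\mathcal{L}_{\theta'}$ almost everywhere, and Lemma~\ref{lemma:roeder} finishes the argument; your KL-plus-entropy decomposition and nonnegative-integrand step simply spell out what the paper compresses into ``call $q_{\theta^*}$ the unique distribution induced by any minimizer.''

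One minor caution concerns your side remark. When $\mathbf{t}_\theta$ varies with $\theta$, uniqueness of the minimizing distribution cannot be derived from strict convexity of KL in its second argument. An exponential family with fewer natural parameters than labels need not be a convex set of distributions, and a union of families over different $\mathbf{t}_\theta$ even less so. For a fixed minimal $\mathbf{t}$, uniqueness instead comes from strict convexity of the log-partition function in $\kappa$. So uniqueness really must be taken as part of the richness assumption, which is exactly what you and the paper both do.
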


\begin{proof}
    Minimizing the expected loss of the model via empirical risk minimization is equivalent to minimizing the following cross-entropy:

    \begin{align*}
        \arg \min_{\theta \in \Theta} \mathbb{E}_{x, y \sim \mathbf{P}}[\mathcal{L}_\theta(x, y)] &= \arg \min_{\theta \in \Theta} \mathbb{E}_{x, y \sim \mathbf{P}}[- \log q_\theta(y \mid x)] \\
        &= \arg \min_{\theta \in \Theta} \mathbb{E}_{x \sim \mathbf{P}}[\text{CE}(q_\theta(y \mid x) \mid\mid \mathbf{P}(y \mid x))]
    \end{align*}

    where $-\log q_\theta(y \mid x) \propto -\boldsymbol{\eta}_\theta(x)^\intercal \mathbf{t}_\theta(y) + A_\theta(x)$ parameterizes an exponential family approximating class. By sufficient richness, call $q_{\theta^*}$ the unique distribution induced by any minimizer $\theta^*$. So, for any two minimizers $\theta, \theta'$ we have $\mathcal{L}_\theta = \mathcal{L}_{\theta'}$ almost everywhere and Lemma \ref{lemma:roeder} yields identifiability in the limit (Definition \ref{defn:nearid}, see Remark) as required.
\end{proof}

\citet{nielsenKL} extends this result to the case where it is not guaranteed that $q_\theta = q_{\theta'}$ for any two optimizers $\theta, \theta'$. For example, this might arise due to the sufficient richness condition not being met, or due to imperfect optimization. Instead, a closeness condition can be placed on $q_\theta$ and $q_{\theta'}$, yielding a notion that is similar to our notion of statistical $\epsilon$-near-identifiability. However, we note that \citet{nielsenKL} does not aim to characterize the optima of a particular loss, instead relaxing this requirement to cover any procedure yielding likelihoods. This makes it an interesting extension of \cite{roeder} in a different direction than ours.

\subsubsection{Rigid near-identifiability of models with bi-Lipschitz mappings}
\label{proof:aeID}

We begin by briefly outlining the approach for proving Theorem \ref{thm:aeID}. The key idea is to leverage modern results in isometric approximation \citep{vaisala2002,isometricApproximationVaisala}. In particular, consider the latent spaces of two models which produce the same outputs. Call the mappings from latents to outputs in these two models $g$ and $g'$. If the mapping from latent to output is invertible, we can ``stitch together'' the latent spaces of the two models by a single function $g^{-1} \circ g'$. So, $g^{-1} \circ g'$ \textbf{maps the representations of an input under one model to its representations in the other model}. This proof technique was used in e.g. \citet{zimmermann}, where one of the latent spaces was the ``true'' latent factor space.

We assume that both $g$ and $g'$ are locally bi-Lipschitz, meaning that distances between nearby points are not too badly deformed, and that $g$ and $g'$ are smooth $C^1$ diffeomorphisms. When this is the case, $g^{-1} \circ g'$ is also locally bi-Lipschitz and smooth, and therefore nearly an isometry, with ``nearly'' determined by the bi-Lipschitz constants. The convexity of the latent spaces yields global bi-Lipschitzness with the same bounds.

The isometric approximation theory outlined in \citep{vaisala2002,isometricApproximationVaisala} then tells us how far $g' \circ g^{-1}$ is from an actual isometry, and allows us to construct bounds accordingly. Isometries on $\mathbb{R}^D$ are rigid transformations, and together, these facts yield near-identifiability up to $\mathcal{H}_\text{rigid}$. Alternative isometric approximation theory based on the Friesecke-James-Müller theorem \citep{fjm} yields a similar notion of near-rigidity in $L_2$ rather than $L_\infty$, which has been used to prove similar results in nonlinear ICA \citep{robustnessNonlinearICA}. Readers interested in this class of results should note that \citet{someRemarksIdentifiabilityICA} shows that identifiability of exact isometries (and more generally, conformal maps and orthogonal coordinate transforms) can be cast as the uniqueness of a particular system of partial differential equations involving the Jacobian and Hessian of the mapping $g^{-1} \circ g'$. In light of impossibility results and counterexample constructions for nonlinear ICA \citep{challengingDisentanglement,independentMechanismAnalysis,someRemarksIdentifiabilityICA}, it is perhaps unsurprising that the uniqueness of solutions to such PDEs are non-trivial. Indeed, parallels may also be found in differential geometry, where e.g. the constructions used to prove the Nash and related embedding theorems are highly non-unique. In particular, these theorems state that there always exists a smooth isometric embedding of a smooth manifold (say, a data manifold) in a vector space of sufficiently large dimension (say, a latent space), but the solutions are highly non-unique as the dimension of the embedding grows, and isometric identifiability is only guaranteed in general for manifolds of equal dimension.

Now, we introduce the language and machinery required to prove the results presented in the main text. We rely on the following definition of locally bi-Lipschitz, which allows us to treat latent-to-observable mappings which might induce manifold structure in the ambient space, allowing the taking of a tighter constant $L$. In what follows, $\lVert \cdot \rVert$ denotes the usual $\ell_2$ norm unless otherwise stated.

\begin{definition}
    \label{defn:bilipschitz}
    A function $f : \mathcal{Z} \rightarrow \mathbb{R}^N$ is locally $(1 + L)$-bi-Lipschitz if for every $\mathbf{z} \in \mathcal{Z} \subset \mathbb{R}^D$, there exists an open neighbourhood $U_\mathbf{z} \ni \mathbf{z}$ such that for all $\mathbf{z}' \in U_\mathbf{z}$, we have
    
    \begin{equation*}
        \frac{1}{1 + L} \lVert \mathbf{z} - \mathbf{z}'\rVert \leq \lVert f(\mathbf{z}) - f(\mathbf{z}') \rVert \leq (1 + L) \lVert \mathbf{z} - \mathbf{z}' \rVert
    \end{equation*}
\end{definition}

When $L = 0$, distances are preserved exactly, a notion referred to as \textit{isometry}. When the bi-Lipschitz constraint is global (i.e., holds for any $U \subset \mathcal{Z}$) and $\mathcal{Z}$ is bounded, there is a nice relationship between the bi-Lipschitz property and the notion of a \textit{near-isometry}, which allows for additive distortion of distances.

\begin{definition}
    \label{defn:nearisometry}
    A function $f : \mathcal{Z} \rightarrow \mathbb{R}^N$ for $\mathcal{Z} \subset \mathbb{R}^D$ is an $\varepsilon$-near-isometry if for every $\mathbf{z}, \mathbf{z}' \in \mathcal{Z}$ we have $\lVert \mathbf{z} - \mathbf{z}'\rVert - \varepsilon \leq \lVert f(\mathbf{z}) - f(\mathbf{z}') \rVert \leq \lVert \mathbf{z} - \mathbf{z}'\rVert + \varepsilon$.
\end{definition}

In particular, if $f$ is globally $(1 + L)$-bi-Lipschitz, we have that $f$ is also an $\varepsilon$-near-isometry where $\varepsilon = \Delta L$ where $\Delta = \sup_{\mathbf{z},\mathbf{z}' \in \mathcal{Z}} \lVert \mathbf{z} - \mathbf{z}' \rVert$ is the diameter of $\mathcal{Z}$.

We require two lemmas to prove our main statistical identifiability result Theorem \ref{thm:aeID}. The first is a fundamental result which shows that distance-preserving transformations on e.g. Euclidean spaces (or convex subsets thereof) are always rigid motions.

\begin{lemma}
    \label{lemma:mazurulam}
    \textit{(Mazur-Ulam Theorem, \citet{russo2017maps})} Let $\mathcal{Z}$ and $\mathcal{Z}'$ be closed, convex subsets of $\mathbb{R}^D$ with non-empty interior. Then, if $f : \mathcal{Z} \rightarrow \mathcal{Z}'$ is a bijective isometry, then $f$ is affine.
\end{lemma}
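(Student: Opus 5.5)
The plan is to prove the statement in two stages. First, show that $f$ preserves midpoints, so that it is additive up to a translation. Second, upgrade additivity to linearity using continuity. Since $\mathcal{Z}$ is convex with non-empty interior, we may translate so that $0$ is an interior point of $\mathcal{Z}$ and $f(0)=0$. The goal then becomes showing that $f(\tfrac{a+b}{2}) = \tfrac{f(a)+f(b)}{2}$ for all $a,b\in\mathcal{Z}$, which is the heart of Mazur--Ulam.

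For the midpoint step, the most direct route uses the fact that $\mathbb{R}^D$ with the $\ell_2$ norm is strictly convex. Fix $a,b\in\mathcal{Z}$ and set $m = \tfrac{a+b}{2}$, which lies in $\mathcal{Z}$ by convexity. In a strictly convex normed space, $m$ is the \emph{unique} point $w$ satisfying $\lVert a-w\rVert = \lVert w-b\rVert = \tfrac12\lVert a-b\rVert$; this is equality in the triangle inequality together with equal splitting. Because $f$ is an isometry,
\begin{equation*}
\lVert f(a)-f(m)\rVert = \lVert f(m)-f(b)\rVert = \tfrac12\lVert f(a)-f(b)\rVert .
\end{equation*}
Applying the same uniqueness in the target $\mathbb{R}^D$ then forces $f(m)$ to equal the midpoint of $f(a)$ and $f(b)$. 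Notably, this step needs no convexity of $\mathcal{Z}'$ and no surjectivity; we only need the midpoint $\tfrac{f(a)+f(b)}{2}$ to make sense in the ambient $\mathbb{R}^D$. If we were in a general normed space, we would instead follow the classical Väisälä reflection argument, but for Euclidean spaces strict convexity suffices.

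Midpoint preservation gives $f(\tfrac{a+b}{2}) = \tfrac{f(a)+f(b)}{2}$ throughout $\mathcal{Z}$. Iterating this yields $f(\lambda a + (1-\lambda) b) = \lambda f(a) + (1-\lambda) f(b)$ for all dyadic rationals $\lambda\in[0,1]$. Since $f$ is $1$-Lipschitz and hence continuous, the identity extends to all $\lambda\in[0,1]$, so $f$ preserves convex combinations on $\mathcal{Z}$. To finish, we extend $f$ to a map that is affine on all of $\mathbb{R}^D$. Because $0$ is interior, a small ball $B_r(0)\subset\mathcal{Z}$ exists, and on that ball $f$ is positively homogeneous and additive (for instance, $f(x+y) = 2f(\tfrac{x+y}{2}) = f(x)+f(y)$ whenever the relevant points lie in the ball). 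It therefore extends uniquely to a linear map $A$, and convex-combination preservation then propagates $f = A$, up to the original translation, to all of $\mathcal{Z}$.

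The main obstacle I expect is bookkeeping rather than ideas: scaled points such as $2\cdot\tfrac{x+y}{2}$ can leave the domain, so additivity must be established locally on the ball and then transported using convexity of $\mathcal{Z}$. I would handle this by proving linearity on $B_r(0)$ first. For any $z\in\mathcal{Z}$, the segment from $0$ to $z$ stays in $\mathcal{Z}$ and meets $B_r(0)$, and affineness along segments then gives $f(z) = A z$. The hypotheses that $\mathcal{Z}'$ is closed and convex and that $f$ is bijective are not needed beyond this argument; they simply ensure that the inverse map is affine as well.
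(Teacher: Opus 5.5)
Your proof is correct. It necessarily differs from the paper, which gives no argument of its own and just cites a Mazur--Ulam theorem for convex bodies from \citet{russo2017maps}.

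That cited result is a normed-space statement in the spirit of Mankiewicz's extension of Mazur--Ulam. There, bijectivity and non-empty interior genuinely matter: a non-surjective isometry into a space whose norm is not strictly convex need not be affine, e.g.\ $t \mapsto (t, \sin t)$ from $\mathbb{R}$ into $(\mathbb{R}^2, \lVert\cdot\rVert_\infty)$.

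You instead use the strict convexity of the Euclidean norm, which is essentially Baker's argument. The point $\lambda a + (1-\lambda) b$ is metrically characterized, and that characterization transfers through any distance-preserving map. Applying it for general $\lambda$, at distances $(1-\lambda)\lVert a-b\rVert$ and $\lambda \lVert a-b\rVert$ from $a$ and $b$, would even let you skip the dyadic-plus-continuity step. The interior ball is then needed only to extend to a global linear map.

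Your route gives a self-contained, elementary proof under strictly weaker hypotheses: no bijectivity and nothing assumed about $\mathcal{Z}'$. That is in fact what the paper's application needs. In the proof of Theorem~\ref{thm:aeID}, the map $U$ is an isometry of all of $\mathbb{R}^D$ supplied by Lemma~\ref{lemma:nearIsometriesAreNearIsometries}, and its surjectivity is never stated. Your argument with $\mathcal{Z} = \mathbb{R}^D$ shows directly that $U$ is affine, hence of the form $x \mapsto Qx + c$ with $Q$ orthogonal, hence in $\mathcal{H}_\text{rigid}$, without invoking convexity of the latent spaces.

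The cited theorem in exchange covers arbitrary norms, a generality the paper never uses.
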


We also leverage the following more recent result (see \cite{vaisala2002} for a history of isometric approximation) which shows that near-isometric mappings (such as globally $(1 + L)$-bi-Lipschitz functions on bounded domains) have bounded deviation from a truly isometric mapping.

\begin{lemma}
    \label{lemma:nearIsometriesAreNearIsometries}
    \textit{(Near-Isometries are Near Isometries, Theorem 2.2 of \citet{isometricApproximationVaisala})} Suppose $\mathcal{Z}, \mathcal{Z}' \subset \mathbb{R}^D$ with $\mathcal{Z}$ compact and $f : \mathcal{Z} \rightarrow \mathcal{Z}'$ is a $\Delta L$-near-isometry, where $\Delta = \sup_{\mathbf{z}, \mathbf{z}' \in \mathcal{Z}} \lVert \mathbf{z} - \mathbf{z}' \rVert$. Then, there exists an isometry $U : \mathbb{R}^D \rightarrow \mathbb{R}^D$ such that $\sup_{\mathbf{z} \in \mathcal{Z}} \lVert U(\mathbf{z}) - f(\mathbf{z}) \rVert \leq c_D \sqrt{L} \Delta$.
\end{lemma}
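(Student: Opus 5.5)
The plan is the classical Hyers–Ulam/Alestalo–Trotsenko–Väisälä argument: compare Gram matrices via polarization, and use a thinness threshold $t=\sqrt{\varepsilon\Delta}$ with $\varepsilon=\Delta L$, so that $t=\sqrt{L}\,\Delta$ is exactly the target bound. I will work in the regime $L\le 1$, i.e.\ $\varepsilon\le\Delta$. For large $L$ the trivial translation estimate only gives $O((1+L)\Delta)$, so the statement as written, with only $\sqrt{L}$, should be read with this restriction (the relevant regime for Theorem \ref{thm:aeID}) or with an extra linear term $c_D\Delta L$.

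First, normalize by fixing $\mathbf{z}_0\in\mathcal{Z}$ and translating so that $\mathbf{z}_0=0$ and $f(\mathbf{z}_0)=0$. Then $\bigl|\lVert f(\mathbf{z})\rVert-\lVert\mathbf{z}\rVert\bigr|\le\varepsilon$. Since all distances are at most $\Delta+\varepsilon\le 2\Delta$, each squared distance is perturbed by at most $2(\Delta+\varepsilon)\varepsilon\le 4\Delta\varepsilon$. By the polarization identity $\langle \mathbf{a},\mathbf{b}\rangle=\tfrac12(\lVert\mathbf{a}\rVert^2+\lVert\mathbf{b}\rVert^2-\lVert\mathbf{a}-\mathbf{b}\rVert^2)$, we get
\begin{equation*}
\bigl|\langle f(\mathbf{z}),f(\mathbf{w})\rangle-\langle\mathbf{z},\mathbf{w}\rangle\bigr|\le 6\Delta\varepsilon = 6t^2\quad\text{for all }\mathbf{z},\mathbf{w}\in\mathcal{Z}.
\end{equation*}
This shows the Gram data of $f$ matches that of the identity up to $O(t^2)$. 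The remaining work is to turn this into a pointwise $O(t)$ statement.

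Second, build a frame greedily, using compactness to ensure the maxima exist. Choose $\mathbf{z}_1,\dots,\mathbf{z}_k\in\mathcal{Z}$ so that each $\mathbf{z}_j$ maximizes the distance $h_j$ to $V_{j-1}=\operatorname{span}(\mathbf{z}_1,\dots,\mathbf{z}_{j-1})$. Stop at the first $k$ with $\sup_{\mathbf{z}}\operatorname{dist}(\mathbf{z},V_k)\le M t$, for a constant $M=M_D$ to be chosen; then every $h_j>Mt$ and $k\le D$. Apply Gram–Schmidt to $\mathbf{z}_1,\dots,\mathbf{z}_k$ to get an orthonormal basis $e_1,\dots,e_k$, and to $f(\mathbf{z}_1),\dots,f(\mathbf{z}_k)$ to get $e'_1,\dots,e'_k$. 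Let $U$ be an orthogonal map sending $e_j\mapsto e'_j$, extended arbitrarily on the orthogonal complements.

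For any $\mathbf{z}$, its coordinates $\langle\mathbf{z},e_j\rangle$ are obtained from the inner products $\langle\mathbf{z},\mathbf{z}_i\rangle$ by a triangular solve whose pivots are the heights $h_i$. The same solve with the same structure produces $\langle f(\mathbf{z}),e'_j\rangle$ from $\langle f(\mathbf{z}),f(\mathbf{z}_i)\rangle$, and the perturbed pivots $h'_i$ satisfy $|h_i'^2-h_i^2|=O(t^2)$, so $h'_i\ge h_i/2$ once $M$ is large. Each inner product carries an $O(t^2)$ error, and dividing by a pivot $h_i\ge Mt$ leaves an $O(t/M)$ error per coordinate. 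The off-diagonal Gram–Schmidt coefficients are bounded by $\Delta/h_i$, but they multiply errors that are already $O(t^2)$ before the division, so the accumulation only contributes a factor depending on $D$. Hence $\lVert P_{V_k}\mathbf{z}-U^{-1}P_{V'_k}f(\mathbf{z})\rVert\le C_D t$.

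Third, control the transverse parts. We have $\lVert P_{V_k^\perp}\mathbf{z}\rVert\le Mt$ by the stopping rule. For the image, $\lVert f(\mathbf{z})\rVert^2=\lVert\mathbf{z}\rVert^2+O(t^2)$, and the in-span parts agree up to $C_Dt$ with both of size at most $2\Delta$. The delicate point is that this bounds the difference of squared transverse norms only by $O(\Delta t)$, which is too weak. To fix it, I apply the Gram comparison to the pair $(\mathbf{z},\mathbf{z})$ after subtracting the in-span part, using that $P_{V'_k}f(\mathbf{z})$ is itself computed from the inner products. This gives $\lVert P_{V_k'^\perp}f(\mathbf{z})\rVert^2=\lVert P_{V_k^\perp}\mathbf{z}\rVert^2+O(t^2)$, hence $\lVert P_{V_k'^\perp}f(\mathbf{z})\rVert=O(t)$. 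Combining the three pieces gives $\lVert U\mathbf{z}-f(\mathbf{z})\rVert\le c_D t=c_D\sqrt{L}\,\Delta$, and undoing the translation turns $U$ into a rigid motion.

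The main obstacle is the second and third steps together: keeping the conditioning constants dimension-only rather than $\Delta/t$-dependent. The key mechanism is that every division by a small height $h_i\sim t$ acts on an error of order $t^2$. I expect the transverse-norm estimate to need the most care, and I would first try to handle it by an induction on $k$ that treats the nearly flat remainder of $\mathcal{Z}$ as a lower-dimensional problem.
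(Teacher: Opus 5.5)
\paragraph{Comparison with the paper.} The paper gives no proof of this lemma. It imports the result from \citet{isometricApproximationVaisala}, and the explicit recursive constants in the Remark come from that source. Your route is a different, self-contained argument. Polarization turns the $\Delta L$-near-isometry into an $O(t^2)$ Gram error with $t=\sqrt{L}\,\Delta$, and a greedy frame with threshold $Mt$ converts this into an $O(t)$ pointwise bound. You lose explicit constants such as $c_3\approx 18.8$. In exchange you get a transparent reason for the $\sqrt{L}$: errors of size $t^2$ divided by pivots of size at least $t$.

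Your caveat about large $L$ is correct and corrects the statement as printed. Send two points at distance $\Delta$ to points at distance $(1+L)\Delta$. This is a $\Delta L$-near-isometry, yet every isometry errs by at least $L\Delta/2$, which exceeds $c_D\sqrt{L}\Delta$ once $L>4c_D^2$. Consistently with this, the recursion quoted in the Remark carries an extra argument in $\gamma_n(\cdot)$ that the paper simply fixes at $0$.

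\paragraph{The gap in steps 2 and 3.} As written, these steps do not close. In step 2 you use the pivot perturbation only to keep $h_i'\ge h_i/2$, not to control its effect on the quotient. Write $x_i=\langle\mathbf z,e_i\rangle=N_i/h_i$ and $x_i'=\langle f(\mathbf z),e_i'\rangle=N_i'/h_i'$. Then $x_i'-x_i=(N_i'-N_i)/h_i'+x_i(h_i-h_i')/h_i'$, with $|h_i-h_i'|=O(t^2/h_i)$. So the pivot term is $O(|x_i|\,t^2/h_i^2)$. With $|x_i|$ as large as $\Delta$ and $h_i\approx Mt$, this is $O(\Delta/M^2)$, not $O(t)$. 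The products $R_{ij}\,\delta x_j$ in $N_i$ fail the same way if all you use is $|R_{ij}|\le\Delta$ (your ``coefficients bounded by $\Delta/h_i$''). This is also where the useless $O(\Delta t)$ in step 3 comes from, and there your fix is asserted rather than proved.

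\paragraph{The fix.} The missing ingredient is an invariant your greedy construction already supplies. For every $\mathbf z\in\mathcal Z$,
\[
|\langle\mathbf z,e_j\rangle|\le\operatorname{dist}(\mathbf z,V_{j-1})\le h_j,
\]
so $|R_{ij}|\le h_j$ and $|N_i|\le h_i^2$. Set $\delta x_i=x_i'-x_i$ and induct on $i$ the sharper claim that, uniformly in $\mathbf z$, $|\delta x_i|\le C_i t^2/h_i$ and $|(h_i')^2-h_i^2|\le C_i' t^2$.
\begin{itemize}
\item The products $R_{ij}x_j$ then carry error $O(t^2)$.
\item The pivot term becomes $O(t^2/h_i)$.
\item The constants $C_i,C_i'$ can be taken independent of $M$ for $M\ge 1$, so you can fix $M=M_D$ with $h_i'\ge h_i/2$.
\end{itemize}
Step 3 then needs no further induction, since
\[
\bigl|\lVert P_{(V_k')^{\perp}}f(\mathbf z)\rVert^2-\lVert P_{V_k^{\perp}}\mathbf z\rVert^2\bigr|\le\bigl|\lVert f(\mathbf z)\rVert^2-\lVert\mathbf z\rVert^2\bigr|+\sum_j|\delta x_j|\,(2h_j+|\delta x_j|)=O(t^2).
\]
This makes the image's transverse part $O(Mt)$, and you obtain $c_D=O(M_D)$.
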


With these two tools in hand, we can provide an intuitive overview of the assumptions required and a concrete statement of Theorem \ref{thm:aeID}, proving statistical near-identifiability of the representations.

\textbf{Assumption Summary.} We assume that the representations have convex support (i.e., the pushforward of $\mathbf{P}(x)$ by any encoder $f_\theta$ has convex support). We assume that any ``decoder'' $g_\theta$ is injective, smooth (in particular, at least $C^1$), and is locally bi-Lipschitz with constant $1 + L$. Note that injectivity, smoothness, and local bi-Lipschitzness of $g_\theta$ (with any constant) implies that $g_\theta$ is diffeomorphic onto its image.

\begin{theorem*}
    Let $\mathbf{P}(x)$ be a data distribution, and let $\mathcal{M}$ be a model with a parameter space $\Theta$. Let $F : \theta \mapsto f_\theta$, $G : \theta \mapsto g_\theta$ and $H : \theta \mapsto g_\theta \circ f_\theta$, where $g_\theta$ is a smooth diffeomorphism and the pushforward of $\mathbf{P}$ through $f_\theta$ has convex support.
    Then, if $(\mathbf{P}, \Theta, \mathcal{L}_\theta, H)$ is statistically identifiable in the limit, then $(\mathbf{P}, \Theta, \mathcal{L}_\theta, F)$ is statistically $\epsilon$-near identifiable in the limit up to $\mathcal{H}_\text{rigid}$ for $\epsilon = c_D \sqrt{2L + L^2} \Delta$ where $1 + L$ is a local bi-Lipschitz constant bound for $g_\theta$, and $c_D$ and $\Delta$ are constants independent of the model (and $L$).
\end{theorem*}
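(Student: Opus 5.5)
Fix two minimizers $\theta, \theta' \in \mathcal{S}$ and write $f = f_\theta$, $g = g_\theta$, $f' = f_{\theta'}$, $g' = g_{\theta'}$. By statistical identifiability of $H$ in the limit, $g \circ f = g' \circ f'$ holds $\mathbf{P}$-almost everywhere. I will build a stitching map $\phi = g^{-1} \circ g'$ from the latent support $\mathcal{Z}' = \overline{\operatorname{supp}(f'_\# \mathbf{P})}$ into $\mathcal{Z} = \overline{\operatorname{supp}(f_\# \mathbf{P})}$. This map is well defined because $g$ is a diffeomorphism onto its image, and on that image we have $g'(\mathcal{Z}') \subseteq g(\mathcal{Z})$ up to null sets, extended by continuity and closedness. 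By construction $\phi \circ f' = f$ almost everywhere. It then suffices to find a rigid $h \in \mathcal{H}_\text{rigid}$ with $\sup_{\mathbf{z} \in \mathcal{Z}'} \lVert \phi(\mathbf{z}) - h(\mathbf{z}) \rVert \leq \epsilon$. That bound transfers directly to $\lVert f - h \circ f' \rVert_{L^\infty(\mathbf{P})} \leq \epsilon$, which is Definition \ref{defn:nearid}.

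\textbf{Step 1: local bi-Lipschitz bound for $\phi$.} Near any point, $g'$ stretches distances by at most $1+L$. The inverse $g^{-1}$, restricted to the image manifold, stretches by at most $1+L$ because of the lower bound in Definition \ref{defn:bilipschitz}. Composing gives
\begin{equation*}
(1+L)^{-2}\lVert \mathbf{z}-\mathbf{w}\rVert \leq \lVert \phi(\mathbf{z})-\phi(\mathbf{w})\rVert \leq (1+L)^2 \lVert \mathbf{z}-\mathbf{w}\rVert
\end{equation*}
locally. Here $(1+L)^2 = 1 + (2L + L^2)$, which is exactly where the $2L + L^2$ in the theorem comes from. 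The neighbourhoods must be chosen compatibly: take $U_\mathbf{z}$ small enough that $g'(U_\mathbf{z})$ lies inside the neighbourhood on which $g$'s lower bound holds. Continuity of $g'$ and $g^{-1}$ makes this possible.

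\textbf{Step 2: local to global.} Using the convexity assumption, for $\mathbf{z}, \mathbf{w} \in \mathcal{Z}'$ I will cover the segment $[\mathbf{z}, \mathbf{w}] \subset \mathcal{Z}'$ by finitely many neighbourhoods, which is possible by compactness of the segment. Summing the local upper bounds along the segment gives the global Lipschitz bound $(1+L)^2$.

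The global lower bound is the main obstacle, since $\phi^{-1}$ lives on $\mathcal{Z}$ and the segment argument needs to run there. I plan to apply the same argument to $\phi^{-1} = g'^{-1} \circ g$ on the convex set $\mathcal{Z}$, which is legitimate because convexity is assumed for the support of every encoder. I will also need $\phi$ to be a bijection $\mathcal{Z}' \to \mathcal{Z}$. This follows from $\phi \circ f' = f$ almost everywhere together with the symmetric identity and a closure argument.

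\textbf{Step 3: near-isometry and approximation.} With $\phi$ globally $(1 + \tilde L)$-bi-Lipschitz for $\tilde L = 2L + L^2$, and $\mathcal{Z}'$ bounded with diameter at most $\Delta$ (assuming bounded supports, compact after closure), the remark after Definition \ref{defn:nearisometry} shows $\phi$ is a $\Delta \tilde L$-near-isometry. Lemma \ref{lemma:nearIsometriesAreNearIsometries} then provides an isometry $U$ of $\mathbb{R}^D$ with $\sup_{\mathcal{Z}'} \lVert U - \phi \rVert \leq c_D \sqrt{2L + L^2}\,\Delta$. By Lemma \ref{lemma:mazurulam}, the bijective isometry $U$ of $\mathbb{R}^D$ is affine, hence $U \in \mathcal{H}_\text{rigid}$. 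Setting $h = U$ concludes the proof.

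I will take $\Delta$ as a uniform bound on latent diameters over $\mathcal{S}$, which I expect must be stated as part of "$\Delta$ independent of the model." I also need to check that the near-isometry slack $\Delta \tilde L$ fits the hypothesis of Lemma \ref{lemma:nearIsometriesAreNearIsometries} with $\tilde L$ in place of $L$.
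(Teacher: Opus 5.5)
Your proposal is correct and follows essentially the same route as the paper: a stitching map $g^{-1}\circ g'$ between the two latent spaces that is globally $(1+L)^2$-bi-Lipschitz by convexity and compactness, hence a $\Delta(2L+L^2)$-near-isometry, followed by V\"ais\"al\"a's approximation lemma and Mazur--Ulam (the paper uses the reverse orientation, which is immaterial). You also supply detail the paper only asserts: the global lower bound obtained by running the segment argument for $\phi^{-1}$ on the convex set $\mathcal{Z}$ (which requires the bijectivity you establish), and the well-definedness of $\phi$ on the closed supports.
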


\begin{proof}
    Let $\theta$ and $\theta'$ be optima of the infinite-data limit of the empirical risk minimization problem for $\mathcal{M}$ with respect to $\mathbf{P}$.
    
    Without loss of generality, assume that $f_\theta(\mathbf{x}) = 0 = f_{\theta'}(\mathbf{x})$ for some $\mathbf{x} \in \text{supp } \mathbf{P}$, noting that translating the latent spaces of both models do not alter any of our hypotheses. Then, by the identifiability of $(\mathbf{P}, \Theta, \mathcal{L}_\theta, H)$, we have that $T = g_{\theta'}^{-1} \circ g_\theta$ satisfies $T(0) = 0$, is clearly smooth and diffeomorphic onto its image, and is therefore globally $(1 + L)^2$-bi-Lipschitz by the convexity and compactness of the latent spaces and is a $\Delta(L^2 + 2L)$-near-isometry where $\Delta = \sup_{\mathbf{x}, \mathbf{x}' \in \mathcal{X}} ||f_\theta(\mathbf{x}) - f_\theta(\mathbf{x}')||$ is the diameter of the latent manifold. Furthermore, we have that $T \circ f_\theta = f_{\theta'}$. By the above fact about bi-Lipschitz functions and Lemma \ref{lemma:nearIsometriesAreNearIsometries}, we have that there exists an isometry $U$ such that
    
    \begin{align*}
        \text{ess} \sup_{p(x)} ||f_{\theta'}(\mathbf{x}) - U(f_\theta(\mathbf{x}))||_2 &= \text{ess} \sup_{p(x)} ||T(f_\theta(\mathbf{x})) - U(f_\theta(\mathbf{x}))||_2 \\
        &\leq c_D \sqrt{2L + L^2} \Delta
    \end{align*}
    
    where $c_D$ is a constant depending on $D$. By convexity of the latent spaces and the Mazur-Ulam theorem for convex bodies (Lemma \ref{lemma:mazurulam}), $U \in \mathcal{H}_\text{rigid}$ as required.
\end{proof}

\begin{remark}
    The constant $c_D$ depends only on the latent dimension $D$ and can be computed by the following recursive formulae given in \cite{isometricApproximationVaisala}:

    \begin{align*}
        \varrho_1 &= 3.3 \\
        \tau_1 &= 6.2 \\
        \gamma_1(t)^2 &= 0.1 + (t + \sqrt{t^2 + 6.2})^2 \\
        \varrho_{n+1}(\lambda) &= 3.02 + \tau_n(\lambda) \sqrt{1 + \tau_n(\lambda)/\lambda^2} + \sum_{k=1}^n \varrho_k(\lambda)(2 + \varrho_k(\lambda)/\lambda^2) \\
        \tau_{n+1}(\lambda) &= \tau_n(\lambda) + \varrho_{n+1}(\lambda)(2 + \varrho_{n+1}(\lambda)/\lambda^2) \\
        \gamma_{n+1}(t) &= \min \{ \max \{\gamma_n(\lambda), \beta_{n+1}(\lambda, t)\} : \lambda > 0\} \\
        \beta_{n+1}(\lambda, t)^2 &= 0.1 + (t + \sqrt{t^2 + \tau_{n+1}(\lambda)})^2 + \lambda^{-2} \sum_{k=2}^{n+1} \varrho_k(\lambda)^2
    \end{align*}

    with $c_D = \gamma_D(0)$ for any $D$. As an example, $c_3 \approx 18.8$.
\end{remark}

\subsubsection{Model-specific results: masked autoencoders}
\label{app:modelSpecificResultsMAE}

The following assumptions are sufficient for a masked autoencoder to meet the criteria of Theorem \ref{thm:aeID}.

\textbf{Model.} A \textbf{masked autoencoder} consists of a distribution $\mathbf{P}(m)$ defined over a space of masking functions $\mathcal{N} = \{ m : \mathcal{X} \rightarrow \tilde{\mathcal{X}}\}$, an encoder $f_\theta : \mathcal{X} \cup \tilde{\mathcal{X}} \rightarrow \mathbb{R}^D$, and a decoder $g_\theta : \mathbb{R}^D \rightarrow \mathbb{R}^N$, where we assume $\mathcal{X} \subset \mathbb{R}^N$ is compact. For convenience, we write $h = g_\theta \circ f_\theta$. The loss function with respect to a data point $x$ and mask $m$ is $\mathcal{L}_\text{MAE}(h; m, x) = \lVert h(m(x)) - x\rVert^2$. We assume that $\tilde{\mathcal{X}}$ is sequentially dense in $\mathcal{X} \cup \tilde{\mathcal{X}}$, and that marginalizing the joint distribution over the masks and data $\mathbf{P}(m, x, m(x))$ yields a distribution $\mathbf{P}(m(x))$ with full support on $\tilde{\mathcal{X}}$. Further, we assume that at the optimum, $g_\theta$ is diffeomorphic and locally $(1 + L)$-bi-Lipschitz for some constant $L \geq 0$. Finally, we assume that the image of $\mathcal{X}$ pushed forward through the encoder is convex, and that the parameterization of $h$ is continuous on $\mathcal{X} \cup \tilde{\mathcal{X}}$ and $\mathcal{M}$ is sufficiently rich to contain the function which attains the optimal value of the loss, as derived below.

\begin{proof}
    We begin by showing the existence and uniqueness of the conditional expectation operator $\mathbb{E}[x \mid \tilde{x}]$. By the fact that the masking function distribution $\mathbf{P}(m)$ is independent of the data, we have that there is a well-defined conditional density $\mathbf{P}(x \mid \tilde{x})$ arising from $\mathbf{P}(\tilde{x} \mid x) = \int \mathbf{P}(m) \delta(\tilde{x} - m(x)) \, dm$, where $\delta$ is the Dirac delta. By compactness of $\mathcal{X}$, $x$ is integrable and therefore the conditional expectation exists and is unique almost everywhere in $\tilde{\mathcal{X}}$. In particular, we have the following expression:

    \begin{equation}
        \mathbb{E}[x \mid \tilde{x}] = \frac{\int_{\mathcal{X}} x \mathbf{P}(x) \mathbf{P}\left[ m(x) = \tilde{x} \right] \, dx}{\int_{\mathcal{X}} \mathbf{P}(x) \mathbf{P} \left[ m(x) = \tilde{x} \right] \, dx}
    \end{equation}
    
    Then under the masked autoencoding model, $\hat f(\tilde x) = \mathbb{E}[x \mid \tilde x]$ is the unique minimizer of $\mathcal{L}_\text{MAE}$ by the standard fact that the conditional mean estimate minimizes the mean squared error loss. By the continuity of $\hat f$ on $X \cup \tilde{X}$ and sequential density of $\tilde{\mathcal{X}}$ in $\mathcal{X} \cup \tilde{\mathcal{X}}$, consider an arbitrary sequence in $\tilde{\mathcal{X}}$, $\tilde{x}_n \rightarrow x \in \mathcal{X}$, noting that $\hat f(x) = \lim_{\tilde{x}_n \rightarrow x} \hat f(\tilde{x}_n)$ is well-defined, unique, and agrees regardless of the choice of sequence. Note that this continuity is by hypothesis. We remark that it would be of interest to study properties of $\mathbf{P}(m)$ which lead to conditional expectation operators with certain properties including continuity.
\end{proof}

\begin{remark}
    We emphasize that any such minimizer is also a minimizer when an arbitrary subsetting operation applied to both the prediction and the target, such as the usual subsetting to masked tokens in the loss for computational efficiency.
\end{remark}

\subsubsection{Model-specific results: GPTs \& supervised learners}
\label{app:modelSpceificResultsGPTs}

The results for earlier-layer representations of GPTs and supervised learners take advantage of the penultimate-layer identifiability result in Proposition \ref{prop:gptID}. We formalize this with the assumptions below.

\textbf{Model.} Let $\mathbf{P}$ and $\mathcal{M}$ be a model satisfying Proposition \ref{prop:gptID}, with $F : \theta \mapsto f_\theta$ yielding the representation function of interest, $G : \theta \mapsto g_\theta$ yielding the map between the outputs of $f_\theta$ and the penultimate layer. Let $1 + L$ be a bound on the local bi-Lipschitz constant of the smooth diffeomorphism $g_\theta$, and suppose the pushforward of $\mathbf{P}$ through $f_\theta$ has convex support.

\begin{proof}
    By Proposition \ref{prop:gptID}, we have statistical identifiability in the limit up to $\mathcal{H}_\text{linear}$ of $(\mathbf{P}, M, H)$ for $H : \theta \mapsto g_\theta \circ f_\theta$. Thus, Theorem \ref{thm:aeID} applies.
\end{proof}

\subsubsection{Independent components analysis in latent space}
\label{proof:icaID}

We consider the class of independent components analysis algorithms which optimize a contrast function, such as fastICA \citep{fastICA}. In general, fastICA enjoys good convergence properties despite a lack of guarantees, and even in the face of mis-specification \citep{castella}. Of particular concern in our setting is mis-specification of the kind such that there exists no orthogonal transformation in which the components are truly independent. This assumption is non-trivial to assess, and furthermore sufficient but not necessary for convergence \citep{castella}, so we opt for something more relaxed, aimed at practical optimization. Specifically, we show that near-isometries are simple enough functions that they do not substantially alter the recovered components when ICA converges well, in the sense that they preserve well-differentiated optima.

\textbf{Model.} For an encoder $f$ with whitened outputs $\hat{f}(x)$, independent components analysis consists of maximizing the contrast $\mathcal{J} = \sum_{d=1}^D J(\mathbf{q}_d^T \hat{f}(x))$ where $J$ is a contrast function, over $(\mathbf{q}_1, \dots, \mathbf{q}_D)^T = Q \in \mathbb{SO}(D)$, the space of special orthogonal matrices. As a technical condition, we require $\sup_f \lVert \text{Cov}[f(x)] \rVert \geq \lambda$ for some $\lambda > 0$, where the supremum is over any possible encoder (latents must have bounded correlation). The contrast function $J$ must be $C^2$ with $|J'(y)| \leq L_1$ and $|J''(y)| \leq L_2$ for rigid transformations of elements of a convex body $y = Uf(x)$. The optima of the ICA objective $\mathcal{J}$ must also be locally convex for $f(x)$, in the sense that for sufficiently large samples the Riemannian Hessian $\Hess_Q \mathcal{J} \succcurlyeq \mu$ for some $\mu > 0$ at the optimum, under any perturbation of the data not larger than $\varepsilon$. Finally, the only indeterminacy to the optima of an individual ICA problem must be the usual invariances: signed permutations in $\mathcal{H}_\sigma$.

Under these conditions, we can show that $\varepsilon$-nearness does not introduce any further complications to the identifiability of the ICA model. We begin with a brief lemma about the stability of the whitening operation which typically precedes contrast-based ICA.

\begin{lemma}
    \label{lemma:whiteningStability}
    Let $X$ and $X'$ be zero-mean random vectors in $\mathbb{R}^D$ such that $\lVert X - X' \rVert \leq \varepsilon$ almost surely and $\lVert X\rVert, \lVert X' \rVert \leq a$ almost surely. Furthermore, suppose the smallest eigenvalues of the covariance of $X$ and $X'$ are bounded below by $\lambda > 0$. Let $W = \Sigma^{-1/2}$ and $W' = \Sigma'^{-1/2}$ be the usual whitening matrices for $X$ and $X'$ respectively. Then $\lVert W' X' - WX \rVert \leq C \varepsilon$ almost surely where $C = \lambda^{-1/2}(1 + \lambda^{-1} a^2)$.
\end{lemma}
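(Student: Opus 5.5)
The plan is to split the difference by the triangle inequality,
\begin{equation*}
    W'X' - WX = W'(X' - X) + (W' - W)X,
\end{equation*}
and bound the two terms separately. The first term is easy: $\lVert W' \rVert_{op} = \lambda_{\min}(\Sigma')^{-1/2} \leq \lambda^{-1/2}$, so $\lVert W'(X'-X)\rVert \leq \lambda^{-1/2}\varepsilon$ almost surely. This already gives the leading factor $\lambda^{-1/2}$ in $C$. The second term should then supply the $\lambda^{-3/2}a^2\varepsilon$ contribution, using $\lVert X \rVert \leq a$. So it suffices to show $\lVert W' - W \rVert_{op} \leq \lambda^{-3/2} a \varepsilon$, or something of that order.

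The first step toward this is to control the covariance perturbation. Since both vectors are zero-mean,
\begin{equation*}
    \Sigma' - \Sigma = \mathbb{E}[X'X'^\intercal - XX^\intercal] = \mathbb{E}[(X'-X)X'^\intercal + X(X'-X)^\intercal].
\end{equation*}
Each rank-one term has operator norm at most $a\varepsilon$, giving $\lVert \Sigma' - \Sigma \rVert_{op} \leq 2a\varepsilon$.

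The second step, and the main obstacle, is passing from $\Sigma$ to $\Sigma^{-1/2}$ with explicit constants. I would write
\begin{equation*}
    W' - W = \Sigma'^{-1/2} - \Sigma^{-1/2} = \Sigma'^{-1/2}\left(\Sigma^{1/2} - \Sigma'^{1/2}\right)\Sigma^{-1/2}.
\end{equation*}
For the middle factor I would use the standard operator-monotone bound for the matrix square root on matrices with spectrum bounded below by $\lambda$: $\lVert A^{1/2} - B^{1/2}\rVert \leq \lVert A - B\rVert / (2\sqrt{\lambda})$. This can be proved by solving the Sylvester equation $A^{1/2}E + EB^{1/2} = A - B$ for $E = A^{1/2} - B^{1/2}$. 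Combining the three factors gives
\begin{equation*}
    \lVert W' - W \rVert \leq \lambda^{-1/2}\cdot\frac{2a\varepsilon}{2\sqrt\lambda}\cdot\lambda^{-1/2} = \lambda^{-3/2}a\varepsilon.
\end{equation*}
Multiplying by $\lVert X\rVert \leq a$ yields $\lambda^{-3/2}a^2\varepsilon$. Adding this to the first term gives $C = \lambda^{-1/2}(1 + \lambda^{-1}a^2)$, exactly as stated.

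If the Sylvester step proves awkward, I would fall back on the integral representation
\begin{equation*}
    \Sigma^{-1/2} = \frac{2}{\pi}\int_0^\infty (\Sigma + t^2 I)^{-1}\,dt
\end{equation*}
together with the resolvent identity. This gives the same kind of bound with a possibly different absolute constant. I would then check whether that constant still matches the stated $C$.
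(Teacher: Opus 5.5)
Your proposal is correct and follows the paper's proof essentially step for step: the same split $W'(X'-X) + (W'-W)X$, the same bound $\lVert \Sigma'-\Sigma\rVert \le 2a\varepsilon$, the same factorization $\Sigma'^{-1/2}(\Sigma^{1/2}-\Sigma'^{1/2})\Sigma^{-1/2}$, and the same $1/(2\sqrt{\lambda})$-Lipschitz bound for the matrix square root. The paper simply asserts that Lipschitz bound, whereas you justify it via the Sylvester equation; your integral-representation fallback would also give exactly $\lambda^{-3/2}a\varepsilon$, so no constant adjustment would be needed there either.
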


\begin{proof}
    First, note that we have the following bound almost surely:

    \begin{align*}
        \lVert X X^T - X' X'^T \rVert &\leq \lVert X(X - X')^T + (X - X') X'^T \rVert \\
        &\leq \lVert X \rVert \varepsilon + \lVert X' \rVert \varepsilon \leq 2a\varepsilon
    \end{align*}
    
    Taking expectations yields $\lVert \Sigma - \Sigma' \rVert \leq 2a\varepsilon$. Now, using the resolvent equation $B^{-1} - A^{-1} = B^{-1}(B - A)A^{-1}$, we have the following bound on the difference in operator norms of $W$ and $W'$:

    \begin{align*}
        \lVert W' - W \rVert &= \lVert \Sigma^{-1/2} - \Sigma'^{-1/2} \rVert \\
        &\leq \lVert W' \rVert \lVert \Sigma^{1/2} - \Sigma'^{1/2} \rVert \lVert W \rVert \\
        &\leq \frac{\lVert \Sigma' - \Sigma \rVert}{2 \lambda^{3/2}} \\
        &\leq \frac{a \varepsilon}{\lambda^{3/2}}
    \end{align*}

    where the second-to-last inequality follows from the fact that the matrix square root is $(1/(2\sqrt{\lambda}))$-Lipschitz when $\lambda > 0$. Operating on the desired norm directly:

    \begin{align*}
        \lVert W' X' - W X \rVert &\leq \lVert W' X' - W' X + W' X -  W X \rVert \\
        &\leq \lVert W' (X' - X) \rVert + \lVert (W' - W) X \rVert \\
        &\leq \frac{\varepsilon}{\sqrt{\lambda}} + \frac{a^2\varepsilon}{\lambda^{3/2}}
    \end{align*}

    almost surely so the proposition holds.
\end{proof}

The next lemma simplifies the exposition of our main theorem. We show that PCA resolves the first-order dependence structure (i.e. the non-rigid component of linear transformations), leaving only a rigid transformation to be resolved by ICA.

\begin{lemma}
    \label{lemma:linearMeansRigid}
    Suppose there exists functions $f, f' : \mathcal{X} \rightarrow \mathcal{Z}$ such that for $p(x)$ supported on $\mathcal{X}$, we have $\mathbb{E}_{\mathbf{P}(x)}[f(x)] = \mathbb{E}_{\mathbf{P}(x)}[f'(x)] = 0$, full-rank covariance of $f(x)$ and $f'(x)$, and furthermore $\sup_{x \in \mathcal{X}} \lVert f(x) - Af'(x) \rVert \leq \varepsilon$ for some invertible matrix $A$ with positive determinant and $\varepsilon \geq 0$. Then, there exists $U \in \mathcal{H}_\text{rigid}$ such that $\sup_{x \in \mathcal{X}} \lVert \hat{f}(x) - U \hat{f}'(x)\rVert \leq C \varepsilon$ where $\hat{f}, \hat{f}'$ are the whitened outputs of $f$ and $f'$ respectively, $U \in \mathcal{H}_\text{rigid}$, and $C = \lambda_A^{-1/2} \lambda^{-1/2}\left(1 + \frac{\Lambda_A^2}{\lambda_A \lambda} a^2 \right)$ where $\Lambda_A$ and $\lambda_A$ are the largest and smallest singular values of $A$ respectively, $\lambda$ is a lower bound on the smallest eigenvalues of the covariance of $f(x)$ and $f'(x)$, and $\lVert f(x) \rVert, \lVert f'(x) \rVert \leq a$ almost surely with respect to $\mathbf{P}(x)$.
\end{lemma}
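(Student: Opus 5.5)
The plan is to reduce the statement to Lemma \ref{lemma:whiteningStability} by comparing $f$ not with $f'$ directly but with the auxiliary random vector $X' = Af'(x)$. Set $X = f(x)$. Both $X$ and $X'$ are zero-mean, since $\mathbb{E}[Af'(x)] = A\,\mathbb{E}[f'(x)] = 0$, and by hypothesis $\lVert X - X'\rVert \leq \varepsilon$ almost surely. The whitened version of $X'$ is $(A\Sigma' A^T)^{-1/2} A f'(x)$, where $\Sigma' = \text{Cov}[f'(x)]$.

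The key algebraic observation is that $R = (A\Sigma'A^T)^{-1/2} A \Sigma'^{1/2}$ is orthogonal. Indeed,
\begin{equation*}
RR^T = (A\Sigma'A^T)^{-1/2} A\Sigma'A^T (A\Sigma'A^T)^{-1/2} = I.
\end{equation*}
It follows that
\begin{equation*}
(A\Sigma'A^T)^{-1/2} A f'(x) = R\,\Sigma'^{-1/2} f'(x) = R\hat f'(x).
\end{equation*}
So the whitened $X'$ is exactly an orthogonal image of $\hat f'$. Since $\det A > 0$ and the matrix square roots are positive definite, $\det R > 0$, so $R \in \mathbb{SO}(D) \subset \mathcal{H}_\text{rigid}$. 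We take $U = R$.

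Next, to invoke Lemma \ref{lemma:whiteningStability}, we need constants for the pair $(X, X')$.
\begin{itemize}
\item \emph{Norm bound.} We have $\lVert X'\rVert \leq \Lambda_A a$, so both vectors are bounded by $\max(1,\Lambda_A)\,a$.
\item \emph{Eigenvalue bound.} For any unit $v$, $v^T A\Sigma'A^T v \geq \lambda \lVert A^T v\rVert^2 \geq \lambda_A^2 \lambda$.
\end{itemize}
Plugging these into the lemma gives
\begin{equation*}
\lVert \hat f(x) - U\hat f'(x)\rVert \leq \tilde\lambda^{-1/2}\bigl(1 + \tilde a^2/\tilde\lambda\bigr)\varepsilon
\end{equation*}
almost surely, with $\tilde a$ and $\tilde\lambda$ the bounds just listed. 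The sup over $\mathcal{X}$ then follows up to null sets, or by continuity if that is intended.

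The main obstacle is matching the exact constant $C$ stated in Lemma \ref{lemma:linearMeansRigid}. The direct substitution produces something like $\lambda_A^{-1}\lambda^{-1/2}\bigl(1 + \Lambda_A^2 a^2/(\lambda_A^2\lambda)\bigr)$, which has different powers of $\lambda_A$. To try to recover the stated form, I would redo the whitening-perturbation computation by hand and track which factor is bounded by $\lambda_A$ and which by $\Lambda_A$. One option is to bound $\lVert W'\rVert \leq (\lambda_A^2\lambda)^{-1/2}$, use the smaller eigenvalue bound only on one side, and keep $\lVert W\rVert \leq \lambda^{-1/2}$ for $f$ itself. If the stated constant still does not come out, I would report the bound in the form above, which has the same qualitative dependence on $\lambda_A$, $\Lambda_A$, $\lambda$ and $a$.
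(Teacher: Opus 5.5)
Your proposal is the paper's proof: it also takes $U = W_A A W'^{-1}$ with $W_A = \Sigma_A^{-1/2}$, $\Sigma_A = A\Sigma' A^\intercal$, checks $UU^\intercal = I$ and $\det U > 0$, and applies Lemma~\ref{lemma:whiteningStability} to the pair $(f, Af')$. The constant mismatch you found comes from the paper, not from you: its $C$ is exactly that lemma evaluated at covariance bound $\lambda_A\lambda$ and norm bound $\Lambda_A a$, i.e.\ it reads $\lambda_A$ as a squared singular value and tacitly assumes $\lambda_A \le 1 \le \Lambda_A$, so with singular values your substitution is the faithful one (and if you want the stated $C$ anyway, your asymmetric idea delivers it when $\lambda_A \le 1 \le \Lambda_A$: splitting as $\lVert W(f - Af')\rVert + \lVert (W - W_A)Af'\rVert$ with $W = \Sigma^{-1/2}$, $\lVert W\rVert \le \lambda^{-1/2}$, $\lVert W_A\rVert \le \lambda_A^{-1}\lambda^{-1/2}$, $\lVert \Sigma - \Sigma_A\rVert \le (1+\Lambda_A)a\varepsilon$ and the sharper bound $\lVert \Sigma^{1/2} - \Sigma_A^{1/2}\rVert \le \lVert \Sigma - \Sigma_A\rVert/\bigl((1+\lambda_A)\sqrt{\lambda}\bigr)$ gives $\lambda^{-1/2}\bigl(1 + \frac{\Lambda_A(1+\Lambda_A)a^2}{\lambda_A(1+\lambda_A)\lambda}\bigr)\varepsilon$, which is at most $C\varepsilon$ in that regime).
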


\begin{proof}
    Denote and $W = \Sigma^{-1/2}$ and $W' = \Sigma'^{-1/2}$ the usual whitening matrices for $f$ and $f'$ respectively. Let $W_A$ be any whitening matrix for $Af'$. Let $U = W_A A W'^{-1}$. Then $U$ is orthogonal (and can be made to have determinant $1$ with a sign flip by the freedom to choose $W_A$) because $UU^T = W_A A W'^{-1} W'^{-T} A^T W_A = W_A \Sigma_A' W_A^T = I$, and we have:

    \begin{align*}
        \lVert \hat{f}(x) - U\hat{f}'(x) \rVert &= \lVert W f(x) - W_A A W'^{-1} \hat{f}'(x) \rVert \\
        &= \lVert Wf(x) - W_A A f'(x) \rVert \\
        &\leq \lambda_A^{-1/2} \lambda^{-1/2}\left(1 + \frac{\Lambda_A^2}{\lambda_A \lambda} a^2 \right) \varepsilon
    \end{align*}

    almost surely with respect to $\mathbf{P}(x)$ where the final line follows by application of the previous lemma to the usual whitening matrices. The determinant of $U$ is positive by the fact that the usual whitening matrices can be made unique by selecting their positive definite forms and $A$ has positive determinant, so $U \in \mathcal{H}_\text{rigid}$.
\end{proof}

\begin{lemma}
    \label{lemma:IFT}
    \textit{(Implicit Function Theorem, \cite{oliveira} Theorem 2)}
    Let $F \in C^1(\Omega; \mathbb{R}^m)$ where $\Omega \subset \mathbb{R}^N \times \mathbb{R}^m$ is open. Suppose there exists a point $(a, b) \in \Omega$ such that $F(a, b) = 0$ and $\frac{\partial F}{\partial y}(a, b)$ is invertible, where $y$ represents the part of the argument $f$ in $\mathbb{R}^m$. Then, there exists an open set $X \subset \mathbb{R}^n$ such that $a \in X$ and an open set $Y \subset \mathbb{R}^m$ such that $b \in Y$ and:

    \begin{itemize}
        \item For each $x \in X$, there is a unique $y = f(x) \in Y$ such that $F(x, f(x)) = 0$
        \item $f(a) = b$ and $f$ is $C^1$ with $Df(x) = - \left( \frac{\partial F}{\partial y}(x, f(x)) \right)^{-1} \left( \frac{\partial F}{\partial x}(x, f(x)) \right)$ for all $x \in X$
    \end{itemize}
\end{lemma}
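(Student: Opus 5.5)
We plan the classical contraction-mapping proof, which avoids the inverse function theorem and gives the uniqueness neighbourhood $Y$ directly. Set $M = \frac{\partial F}{\partial y}(a,b)$, which is invertible by hypothesis, and define the auxiliary map
\begin{equation*}
\Phi(x, y) = y - M^{-1} F(x, y),
\end{equation*}
defined on $\Omega$. A point $y$ solves $F(x,y) = 0$ if and only if $y$ is a fixed point of $\Phi(x, \cdot)$. Also $\frac{\partial \Phi}{\partial y}(x,y) = I - M^{-1}\frac{\partial F}{\partial y}(x,y)$, which is $C^0$ and vanishes at $(a,b)$.

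First, by continuity of $\frac{\partial F}{\partial y}$ we choose radii $r, s > 0$ so that the closed product neighbourhood $\bar B_s(a)\times \bar B_r(b) \subset \Omega$. We also require $\lVert \frac{\partial \Phi}{\partial y}(x,y) \rVert \le \tfrac12$ there. The mean value inequality along segments in the convex ball $\bar B_r(b)$ then makes $\Phi(x,\cdot)$ a $\tfrac12$-contraction in $y$, uniformly in $x$. Next, using $\Phi(a,b) = b$ and continuity of $\Phi$ in $x$, we shrink $s$ so that $\lVert \Phi(x,b) - b\rVert \le r/2$ for $\lVert x - a\rVert \le s$. The contraction estimate then gives $\lVert \Phi(x,y) - b \rVert \le \tfrac12 \lVert y - b\rVert + r/2 \le r$, so $\Phi(x,\cdot)$ maps $\bar B_r(b)$ into itself. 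Banach's fixed point theorem on this complete set yields, for each such $x$, a unique $f(x) \in \bar B_r(b)$ with $F(x, f(x)) = 0$, and $f(a) = b$. We take $X$ to be the open ball $B_s(a)$ and $Y = B_r(b)$. If needed we shrink $X$ so that $f(X) \subset Y$, which is possible by continuity of $f$ shown next. Uniqueness in $Y$ is inherited from uniqueness in $\bar B_r(b)$.

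Continuity, in fact a Lipschitz bound, follows from the standard parametric contraction estimate:
\begin{equation*}
\lVert f(x) - f(x')\rVert \le 2 \lVert \Phi(x, f(x')) - \Phi(x', f(x'))\rVert .
\end{equation*}
The right-hand side is controlled by the $C^1$ bound on $\frac{\partial F}{\partial x}$ over the compact neighbourhood. For differentiability we again shrink $X$ so that $\frac{\partial F}{\partial y}(x, f(x))$ stays invertible, which is possible because invertibility is an open condition. At any $x_0 \in X$ with $y_0 = f(x_0)$, write the first-order expansion
\begin{equation*}
0 = F(x, f(x)) - F(x_0, y_0) = \tfrac{\partial F}{\partial x}(x_0,y_0)(x - x_0) + \tfrac{\partial F}{\partial y}(x_0,y_0)(f(x)-y_0) + o\bigl(\lVert x - x_0\rVert + \lVert f(x) - y_0\rVert\bigr).
\end{equation*}
By the Lipschitz bound, the error term is $o(\lVert x - x_0 \rVert)$. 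Solving for $f(x) - y_0$ gives differentiability with the stated formula for $Df$. That formula is a composition of continuous maps (matrix inversion, $\partial F$, and $f$), so $f \in C^1$.

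The main obstacle is the bookkeeping of neighbourhoods. The same $X$ and $Y$ must simultaneously give invariance of the ball, a uniform contraction constant, and invertibility of $\frac{\partial F}{\partial y}$ along the graph. The step most likely to need care is absorbing the $o(\lVert f(x) - y_0\rVert)$ term, which requires the a priori Lipschitz estimate on $f$ to be established before differentiability.
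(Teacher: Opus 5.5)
Your contraction-mapping proof is correct and complete. The choice of $r,s$ gives a uniform $\tfrac12$-contraction and invariance of $\bar B_r(b)$. The parametric estimate $\lVert f(x)-f(x')\rVert \le 2\lVert \Phi(x,f(x'))-\Phi(x',f(x'))\rVert$ then gives a Lipschitz bound on $f$ without assuming $f(X)\subset Y$, so there is no circularity. That bound is exactly what lets you absorb the $o(\lVert f(x)-y_0\rVert)$ term before solving for $Df$.

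The paper gives no proof of its own; it imports the result as Theorem 2 of the cited reference (Oliveira). As far as I recall, that proof is deliberately more elementary than yours. It follows Dini's induction on the number of equations, uses only the intermediate-value and mean-value theorems, and explicitly avoids compactness, the contraction principle and fixed-point theorems. Your route buys a self-contained argument that produces the uniqueness neighbourhood $Y$ and a quantitative Lipschitz control of $f$ in one step, and it extends with little change to Banach spaces. The cited route buys elementarity but is tied to finite dimensions.

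Two of your shrinking steps can be dropped:
\begin{itemize}
\item Invertibility along the graph is automatic. The bound $\lVert I - M^{-1}\tfrac{\partial F}{\partial y}(x,y)\rVert \le \tfrac12$ on the closed box already makes $\tfrac{\partial F}{\partial y}(x,y)$ invertible there, by a Neumann series.
\item $f(X)\subset Y$ is also automatic. The contraction estimate gives $\lVert f(x)-b\rVert \le 2\lVert \Phi(x,b)-b\rVert$, so requiring $\lVert \Phi(x,b)-b\rVert < r/2$ places $f(x)$ in the open ball $Y = B_r(b)$ directly.
\end{itemize}
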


Finally, this lemma shows the crux of our argument: if ICA converges well in any pair of latent spaces, the solutions can't differ too much.

\begin{lemma}
    \label{lemma:finitePerturbedICA}
    \textit{(Finite-Sample ICA Under Perturbations)}
    Consider whitened observations $\{\mathbf{x}_n\}_{n=1}^N$ and corruptions $\{\mathbf{\varepsilon}_n\}_{n=1}^N$ (such that the corrupted observations $\mathbf{y}_n = \mathbf{x}_n + \varepsilon_n$ are also whitened) both in $\mathbb{R}^D$ such that $\lVert \mathbf{x}_n \rVert \leq a$ and $\lVert \mathbf{\varepsilon}_n\rVert \leq b$ for all $n = 1, \dots, N$. Let $Q_\star$ denote a stationary point of the optimization problem

    \begin{equation*}
        \max_{Q \in \mathbb{SO}(D)} \frac{1}{N} \sum_{n=1}^N \sum_{d=1}^D J(\mathbf{q}_d^T \mathbf{x}_n)
    \end{equation*}

    Then, there exists a stationary point $Q_\star(\mathbf{\varepsilon}_1, \dots, \mathbf{\varepsilon}_n)$ of the perturbed optimization problem

    \begin{equation*}
        \max_{Q \in \mathbb{SO}(D)} \frac{1}{N} \sum_{n=1}^N \sum_{d=1}^D J(\mathbf{q}_d^T (\mathbf{x}_n + \mathbf{\varepsilon}_n))
    \end{equation*}

    such that $\lVert Q_\star \mathbf{x}_n - Q_\star(\mathbf{\varepsilon}_1, \dots, \mathbf{\varepsilon}_N)(\mathbf{x}_n + \varepsilon_n)\rVert \leq C + b$ for all $N$ where $C = \frac{L_2(a + b) + \sqrt{D}L_1}{\mu}ab$. Furthermore, $P Q_\star(\mathbf{\varepsilon}_1, \dots, \mathbf{\varepsilon}_N)$ is a stationary point of the perturbed optimization problem attaining the same value for any signed permutation matrix $P \in \mathcal{H}_\sigma$ such that $\det P = 1$.
\end{lemma}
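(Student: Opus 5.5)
We treat the stationarity condition as an equation in the perturbation and apply the implicit function theorem (Lemma~\ref{lemma:IFT}) on the manifold $\mathbb{SO}(D)$. We parametrize a neighbourhood of $Q_\star$ by $Q = \exp(S) Q_\star$ with $S$ skew-symmetric. Define
\[
\Phi(S, t) = \grad_S \, \frac{1}{N}\sum_{n}\sum_d J\big(\mathbf{q}_d(S)^T(\mathbf{x}_n + t\varepsilon_n)\big),
\]
the Riemannian gradient expressed in skew coordinates, for $t \in [0,1]$. Then $\Phi(0,0)=0$ because $Q_\star$ is stationary. The derivative $\partial_S \Phi$ is the Riemannian Hessian, which satisfies $\Hess_Q \mathcal{J} \succcurlyeq \mu$ by the local convexity assumption. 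That assumption holds under any perturbation up to the stated size, so the Hessian stays invertible along the whole path $t \in [0,1]$. Lemma~\ref{lemma:IFT} then gives a $C^1$ curve $S(t)$ of stationary points, and the bound on the Hessian lets us continue this curve from $t=0$ up to $t=1$. We set $Q_\star(\varepsilon_1,\dots,\varepsilon_N) = \exp(S(1))Q_\star$.

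Next we bound the speed of the curve, $\|S'(t)\| \le \|(\partial_S\Phi)^{-1}\|\,\|\partial_t \Phi\| \le \mu^{-1}\|\partial_t\Phi\|$. We write out $\partial_t\Phi$ explicitly. The gradient has the form $\skewop\big(\frac1N\sum_n \mathbf{g}_n \mathbf{y}_n^T Q^T\big)$, where $\mathbf{g}_n$ has entries $J'(\mathbf{q}_d^T\mathbf{y}_n)$ and $\mathbf{y}_n = \mathbf{x}_n + t\varepsilon_n$. Differentiating in $t$ produces two kinds of terms.
\begin{itemize}
\item Terms with $J''$ come from differentiating $\mathbf{g}_n$. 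They are bounded by $L_2\|\varepsilon_n\|\,\|\mathbf{y}_n\| \le L_2 b(a+b)$.
\item Terms with $J'$ come from differentiating the outer factor $\mathbf{y}_n$. They are bounded by $\|\mathbf{g}_n\|\,\|\varepsilon_n\| \le \sqrt{D}L_1 b$.
\end{itemize}
To reach the stated form $C = \frac{L_2(a+b)+\sqrt D L_1}{\mu}ab$ we need an extra factor of $a$ here. We will try to obtain it by measuring the rotation through its action on points of norm at most $a$, which gives $\|(Q_1-Q_0)\mathbf{x}_n\| \le \|Q_1-Q_0\|\,a$. Integrating over $t\in[0,1]$ then gives $\|Q_\star\mathbf{x}_n - Q_\star(\varepsilon)\mathbf{x}_n\| \le C$. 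The triangle inequality together with orthogonality gives $\|Q_\star(\varepsilon)\varepsilon_n\| \le b$. Combining the two yields the bound $C+b$. None of these constants involve $N$, so the bound holds uniformly in $N$.

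The final claim is an invariance argument. The objective $\sum_d J(\mathbf{q}_d^T\mathbf{y})$ is a sum over rows, so it is unchanged by permuting the rows of $Q$. It is also unchanged by sign flips provided $J$ is even, which is standard for contrast functions and is implied by the assumption that the only indeterminacy is $\mathcal{H}_\sigma$. Left multiplication by $P$ with $\det P=1$ maps $\mathbb{SO}(D)$ to itself isometrically. It therefore carries stationary points to stationary points and leaves the value of the objective unchanged.

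The main obstacle is the global continuation along the path: the implicit function theorem is only local. We plan to handle this with a standard open–closed argument on the set of $t\in[0,1]$ for which the curve extends. This uses the uniform Hessian bound $\mu$ along perturbations of size at most $b$, together with compactness of $\mathbb{SO}(D)$. A secondary difficulty is keeping the norms consistent, operator versus Frobenius norm for the skew-coordinate gradient, so that the constant comes out exactly as $C$.
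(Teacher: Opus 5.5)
Your proposal is correct and follows essentially the same route as the paper. Both use the implicit function theorem in an exponential chart on $\mathbb{SO}(D)$, bound the implicit derivative by $\mu^{-1}$ times the cross-derivative terms $L_2(a+b)$ and $\sqrt{D}L_1$, pick up the factor $a$ from $\lVert \mathbf{x}_n \rVert$, and get the extra $b$ from orthogonality. Your scalar homotopy $t\varepsilon$ with an open--closed continuation argument makes rigorous the paper's one-line appeal to ``sufficiently small perturbations'', and your remark that the sign-flip part of the final claim needs $J$ to be even makes explicit an assumption the paper leaves unstated.
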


\begin{proof}
    We treat $\mathbb{SO}(D)$ as a Riemannian manifold and consider the properties of the perturbed optimization problem. For notational convenience, write $\mathbf{\varepsilon} = (\mathbf{\varepsilon}_1, \cdots, \mathbf{\varepsilon}_N)^T$ noting that then $\lVert \mathbf{\varepsilon} \rVert \leq \sqrt{N} b$.
    
    We consider first the Euclidean gradient with respect to $Q$, where we write the perturbed sample as $\mathbf{y}_n = \mathbf{x}_n + \mathbf{\varepsilon}_n$. Denote $\mathbf{g}_n(Q) = \left( J'(\mathbf{q}_1^T\mathbf{y}_n), \dots, J'(\mathbf{q}_D^T\mathbf{y}_n) \right)^T$ for any $n$. Then:

    \begin{equation*}
        \nabla_Q \mathcal{J}(Q, \mathbf{\varepsilon}) = \frac{1}{N} \sum_{n=1}^N \mathbf{g}_n(Q) \mathbf{y}_n^T
    \end{equation*}

    The tangent space at a point $Q \in \mathbb{SO}(D)$ can be parameterized by the vector space of skew-symmetric matrices. Denoting $\skewop(A) = \frac{1}{2} \left(A - A^T \right)$, the Riemannian gradient is given by the projection of $\nabla_Q \mathcal{J}$ onto the tangent space:

    \begin{equation*}
        \grad_Q \mathcal{J}(Q, \mathbf{\varepsilon}) = Q \skewop(Q^T \nabla_Q \mathcal{J}(Q, \mathbf{\varepsilon}))
    \end{equation*}

    With this machinery, we can apply the Euclidean implicit function theorem (IFT). Let $Q_\star \in \mathbb{SO}(D)$ be an unperturbed optimum. Adopt the matrix exponential parameterization in a neighbourhood about $Q_\star$, i.e., let $\gamma : N \rightarrow \mathbb{SO}(D)$ be given by $\gamma(\Omega) = Q_\star \exp(\Omega)$ for some sufficiently small $N \subset \mathbb{R}^{D(D-1)/2}$ for the map to be injective. Let
    
    \begin{equation*}
        F(\varepsilon, \Omega) = \grad_Q \mathcal{J}(\gamma(\Omega), \varepsilon) = \gamma(\Omega) \skewop(\gamma(\Omega)^T \nabla_Q \mathcal{J}(\gamma(\Omega), \varepsilon))
    \end{equation*}

    and apply the IFT to $F$, noting that the hypotheses hold at $\Omega = 0$ and $\varepsilon = 0$. We thus have some neighbourhoods $E \subset \mathbb{R}^{N \times D}$ and $O \subset \mathbb{R}^{D(D-1)/2}$ such that for any $\varepsilon \in E$ there exists $\Omega_\star(\varepsilon) \in O$ such that

    \begin{align*}
        \lVert D_{\varepsilon_n} \Omega_\star(\varepsilon_n) \rVert_F &= \lVert \Hess^{-1}_Q \mathcal{J}(\gamma(\Omega_\star(\varepsilon)), \varepsilon) \rVert \left\lVert \frac{\partial F}{\partial \varepsilon}(\varepsilon, \Omega_\star(\varepsilon)) \right\rVert \\
        &\leq \frac{1}{\mu} \left\lVert \skewop \left(\gamma(\Omega)^T \frac{\partial \nabla_Q \mathcal{J}}{\partial \varepsilon_n} \right) \right\rVert \\
        &\leq \frac{1}{\mu} \left\lVert \frac{\partial \nabla_Q \mathcal{J}}{\partial \varepsilon_n} \right\rVert \\
        &\leq \frac{L_2(a + b) + L_1\sqrt{D}}{\mu N}
    \end{align*}

    Here, the bound on the Hessian comes by hypothesis, while the second term follows from the fact that the $\skewop$ operator and multiplication by an orthogonal matrix is norm-preserving, combined with the following expression for the Euclidean directional cross-derivative:
    
    \begin{equation*}
        \partial_{\mathbf{\varepsilon}_n} \nabla_Q \mathcal{J}[\mathbf{h}] = \frac{1}{N} \left((R_n(Q) Q \mathbf{h})\mathbf{y}_n^T + \mathbf{g}_n(Q) \mathbf{h}^T \right)
    \end{equation*}

     where $R_m(Q) = \diag \{ J''(\mathbf{q}_1^T \mathbf{y}_m), \dots, J''(\mathbf{q}_D^T \mathbf{y}_m) \}$ and we have $\left\lVert \partial_{\mathbf{\varepsilon}_n} \nabla_Q \mathcal{J}[\mathbf{h}] \right\rVert \leq \frac{L_2(a + b) + \sqrt{D}L_1}{N}$ as required. Aggregating across all $n$, we have $\lVert D_{\varepsilon} \Omega_\star(\varepsilon) \rVert_F \leq \frac{L_2(a + b) + L_1 \sqrt{D}}{\mu \sqrt{N}}$. By hypothesis, the additive symmetry between $\mathbf{x}_n$ and $\varepsilon_n$ is sufficient for this bound to hold for any sufficiently small perturbation.

     As a result, we have

     \begin{align*}
         \lVert Q_\star(\varepsilon)(\mathbf{x}_n + \varepsilon_n) - Q_\star(0) \mathbf{x}_n \rVert &\leq \lVert Q_\star(\varepsilon) - Q_\star(0) \rVert \lVert \mathbf{x}_n \rVert + \lVert Q_\star(\epsilon) \varepsilon_n \rVert \\
         &\leq \lVert \Omega_\star(\varepsilon) - \Omega_\star(0) \rVert \lVert \mathbf{x}_n \rVert + b \\
         &\leq \frac{(L_2(a + b) + L_1 \sqrt{D})ab}{\mu} + b
     \end{align*}

     as required.
\end{proof}

Below, we provide a complete overview of the assumptions we make for Theorem \ref{thm:icaID}.

\textbf{Assumption Summary.} We assume that the representation distribution (i.e. the pushforward of $\mathbf{P}(x)$ by any encoder $f_\theta$) has full-rank covariance, and that the support of the distribution has diameter bounded by $\Delta$. The eigenvalues of the covariance matrices are assumed to be bounded below by $\lambda$. Furthermore, we assume that the identifiability up to $\mathcal{H}_\text{linear}$ of $(\mathbf{P}, \Theta, \mathcal{L}_\theta, F)$ (where $F : \theta \mapsto f_\theta$) is satisfied for linear maps with singular values bounded between $\lambda_A$ and $\Lambda_A$. Finally, we assume that the contrast function used for ICA is Lipschitz (with constant $L_1$) and has Lipschitz derivative (with constant $L_2$), and that ICA converges such that the Riemannian Hessian at the optimum has eigenvalues bounded below by $\mu > 0$.

With these lemmata, Theorem \ref{thm:icaID} in the main text follows easily.

\begin{theorem*}
    Suppose $(\mathbf{P}, \Theta, \mathcal{L}_\theta, F)$ is $\epsilon$-nearly identifiable up to $\mathcal{H}_\text{linear}$ for $F : \theta \mapsto f_\theta$. Then, a new model $\mathcal{M}'$ which applies whitening and contrast function-based independent components analysis to the latent representations given by $f_\theta$ is $\epsilon'$-near-identifiable up to $\mathcal{H}_\sigma$ for $\epsilon' = K \epsilon + K'\epsilon^2$, where $K$ and $K'$ are constants free of $\epsilon$ that depends on the maximum diameter of the latent space $\Delta$, the spectra of the covariance matrix of the representations, and the properties of the ICA contrast function.
\end{theorem*}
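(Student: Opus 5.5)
The plan chains the three preceding lemmata. Fix two minimizers $\theta,\theta'\in\mathcal{S}$. By hypothesis there is $A\in\mathcal{H}_\text{linear}$ with $\lVert f_\theta - A f_{\theta'}\rVert_{L^\infty}\le\epsilon$, and by the assumption summary its singular values lie in $[\lambda_A,\Lambda_A]$. First I would reduce to the setting of Lemma \ref{lemma:linearMeansRigid}. Centre both representations; subtracting means changes the sup bound by at most $\epsilon$, since $\lVert \mathbb{E}f_\theta - A\,\mathbb{E}f_{\theta'}\rVert\le\epsilon$, so the bound becomes $2\epsilon$. If $\det A<0$, compose with a reflection, which is absorbed into the final signed permutation. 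Boundedness $\lVert f\rVert\le a$ follows from the diameter bound $\Delta$ after centring, so $a\le\Delta$. Lemma \ref{lemma:linearMeansRigid} then gives $U\in\mathbb{SO}(D)$ with $\lVert \hat f_\theta - U\hat f_{\theta'}\rVert\le C_1\epsilon$, with $C_1$ depending only on $\lambda,\lambda_A,\Lambda_A,\Delta$.

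Second, I would set up Lemma \ref{lemma:finitePerturbedICA}. Take $\mathbf{x}=U\hat f_{\theta'}(x)$ and $\boldsymbol{\varepsilon}=\hat f_\theta(x)-U\hat f_{\theta'}(x)$, so $b=C_1\epsilon$ and $\lVert\mathbf{x}\rVert\le a'$ with $a'=\lambda^{-1/2}\Delta$. Both $\mathbf{x}$ and $\mathbf{x}+\boldsymbol\varepsilon$ are whitened, as the lemma requires. The ICA problem on $U\hat f_{\theta'}$ is exactly the one on $\hat f_{\theta'}$ reparameterized by $Q\mapsto QU$. Hence if $Q'$ is the ICA solution for $\theta'$, then $Q_\star=Q'U^{\top}$ is a stationary point of the unperturbed problem with $Q_\star\mathbf{x}=Q'\hat f_{\theta'}$. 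The lemma yields a perturbed stationary point $Q_\star(\boldsymbol\varepsilon)$ with
\[
\lVert Q_\star(\boldsymbol\varepsilon)\hat f_\theta - Q'\hat f_{\theta'}\rVert\le \tfrac{L_2(a'+b)+\sqrt{D}L_1}{\mu}a'b + b .
\]
Substituting $b=C_1\epsilon$ gives $K\epsilon+K'\epsilon^2$, with $K=C_1\bigl(1+(L_2a'+\sqrt{D}L_1)a'/\mu\bigr)$ and $K'=C_1^2L_2a'/\mu$. I would pass from the finite-sample statement to the population ($L^\infty$ w.r.t.\ $\mathbf{P}$) statement using the fact that the bound is uniform in $N$.

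Third, I would identify $Q_\star(\boldsymbol\varepsilon)$ with the ICA solution $Q$ actually returned for $\theta$. By the assumption that the only indeterminacy of an ICA optimum is $\mathcal{H}_\sigma$, we get $Q=PQ_\star(\boldsymbol\varepsilon)$ for some signed permutation $P$. The final lemma statement notes that $PQ_\star(\boldsymbol\varepsilon)$ is also a stationary point attaining the same value, so $\lVert Q\hat f_\theta - P\,Q'\hat f_{\theta'}\rVert\le K\epsilon+K'\epsilon^2$ with $P\in\mathcal{H}_\sigma$. This is the definition of $\epsilon'$-near-identifiability.

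I expect the main obstacle to be this identification step, together with the locality of the implicit function theorem in Lemma \ref{lemma:finitePerturbedICA}. The IFT only produces a perturbed stationary point near $Q_\star$ for sufficiently small $\epsilon$, and one must argue that it is the maximizer actually selected rather than some other stationary point. I would handle this with the hypothesis that $\Hess_Q\mathcal{J}\succcurlyeq\mu$ holds uniformly under all perturbations of size at most $\epsilon$. This makes the optimum locally unique and well-separated, so the continuation path from $Q_\star$ stays within the basin of a global optimum. Any global optimum of the perturbed problem is then a signed permutation of it. Where the uniqueness claim is delicate, I would state it explicitly as the assumed non-degeneracy.
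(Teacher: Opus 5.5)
Your proposal is correct and follows the paper's proof. Like the paper, you reduce to a rigid indeterminacy via Lemma \ref{lemma:linearMeansRigid} (fixing $\det A<0$ by a sign flip), apply Lemma \ref{lemma:finitePerturbedICA} with $b$ proportional to $\epsilon$, pass to $N\to\infty$, and attribute the remaining indeterminacy to $\mathcal{H}_\sigma$. You are more careful than the paper on several points: you centre the representations, use the whitened radius $\lambda^{-1/2}\Delta$ instead of $\Delta$, keep the additive $+b$ term that the paper's stated $K$ drops, and state explicitly that matching the IFT's nearby stationary point to the ICA optimum actually selected must be assumed (the paper simply asserts this).
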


\begin{proof}
    Let $f_\theta$ and $f_{\theta'}$ be two optimal encoders. By $\epsilon$-near identfiability up to $\mathcal{H}_\text{linear}$, there exists $A$ such that $\esssup_{\mathbf{P}(x)} \lVert f_\theta(x) - Af_{\theta'}(x) \rVert \leq \epsilon$. If $A$ has positive determinant, Lemma \ref{lemma:linearMeansRigid} with the usual whitening applied to both encoders yields $C\epsilon$-near-identifiability up to $\mathcal{H}_\text{rigid}$ for $C = \lambda_A^{-1/2} \lambda^{-1/2}\left(1 + \frac{\Lambda_A^2}{\lambda_A \lambda} a^2 \right)$. If not, the sign of a single latent can be flipped, which flips the sign of a single column of $A$, allowing the application of Lemma \ref{lemma:linearMeansRigid}. Denote the whitened encoders by $\widehat{f_\theta}(x)$ and $\widehat{f_{\theta'}}(x)$. Lemma \ref{lemma:finitePerturbedICA} then applies with $a = \Delta$ and $b = C \epsilon$, where $\Delta$ is the maximum diameter of any latent space, yielding a bound $C' = \frac{L_2(\Delta + C\epsilon) + \sqrt{D}L_1}{\mu}\Delta C \epsilon$. Taking the limit as $N \rightarrow \infty$ and denoting $\widehat{f_\theta^{\text{ICA}}}(x)$ and $\widehat{f_{\theta'}^{\text{ICA}}}(x)$ the outputs of the encoders with whitening and ICA applied, we have $\esssup_{p(x)} \lVert \widehat{f_\theta^{\text{ICA}}}(x) - P\widehat{f_{\theta'}^{\text{ICA}}}(x) \rVert \leq K\epsilon + K'\epsilon^2$ for $K = \frac{L_2 \Delta^2 C + \sqrt{D} L_1 \Delta C}{\mu}$ and $K' = \frac{L_2 C^2 \Delta}{\mu}$, and the indeterminacy $P \in \mathcal{H}_\sigma$ arises by the fact that any signed permutation of the latents is a maximum of the ICA objective, as required.
\end{proof}

\subsubsection{Structural near-identifiability via ICA}
\label{proof:strucID}

Here, we are able to give a short proof of Theorem \ref{thm:icaStrucID} by leveraging identical arguments to the proof of Theorem \ref{thm:icaID}.

\textbf{Assumption Summary.} We assume that the representation distribution (i.e. the pushforward of $\mathbf{P}(x)$ by any encoder $f_\theta$) has full-rank covariance, and that the support of the distribution has diameter bounded by $\Delta$. The eigenvalues of the covariance matrices are assumed to be bounded below by $\lambda$. Furthermore, we assume that the identifiability up to $\mathcal{H}_\text{linear}$ of $(\mathbf{P}, \Theta, \mathcal{L}_\theta, F)$ (where $F : \theta \mapsto f_\theta$) is satisfied for linear maps with singular values bounded between $\lambda_A$ and $\Lambda_A$. Finally, we assume that the contrast function used for ICA is $C^2$ with Lipschitz first (with constant $L_1$) and second (with constant $L_2$) derivatives, and that ICA converges such that the Riemannian Hessian at the optimum has eigenvalues bounded below by $\mu > 0$. Finally, the end-to-end model $g_\theta \circ f_\theta$ must reconstruct its inputs perfectly at the optimum and the true data-generating process $g$ must be locally $(1 + \delta)$-bi-Lipschitz, smooth, and injective (and therefore diffeomorphic onto its image), with the data-generating factors being white (i.e. zero mean, unit variance), being non-Gaussian and having independent components. Finally, we assume that $\Theta$ is sufficiently rich so that $u \in \mathcal{M}$, i.e., the model can approximate the ground-truth data-generating structure.

\begin{theorem*}
    Let $\mathbf{P}(u)$ be some multivariate distribution with independent non-Gaussian components with zero mean and unit variance, and consider data $\mathbf{P}(x)$ generated by pushforward through a smooth diffeomorphism $g$ such that $g$ is locally $(1 + \delta)$-bi-Lipschitz. Let $\mathcal{M}$ be a model with a sufficiently rich parameter space $\Theta$. Let $F : \theta \mapsto f_\theta$, $G : \theta \mapsto g_\theta$ and $H : \theta \mapsto g_\theta \circ f_\theta$. Then, if $(\mathbf{P}, \Theta, \mathcal{L}_\theta, H)$ structurally identifies the identity function in the limit (i.e. attains perfect reconstruction), we have that $(\mathbf{P}, \Theta, \mathcal{L}_\theta, F)$ $\epsilon$-nearly identifies the structure $g^{-1}$ up to $\mathcal{H}_\text{rigid}$, and furthermore that a new model $\mathcal{M}'$ which applies whitening and independent components analysis to the latent representations given by $f_\theta$ $\epsilon'$-nearly identifies the structure $g^{-1}$ up to $\mathcal{H}_\sigma$ where $\epsilon$ and $\epsilon'$ depend on $\delta$ and Lipschitz bounds on $g_\theta$, and $\epsilon'$ depends additionally on the spectrum of the covariance matrix of the representations and the properties of the ICA contrast function employed.
\end{theorem*}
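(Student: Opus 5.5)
The plan is to mirror the proof of Theorem \ref{thm:aeID}, with the true generator $g$ playing the role of the second decoder. Then we feed the resulting rigid near-identifiability into the whitening and ICA chain of Theorem \ref{thm:icaID}.

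\textbf{Step 1: transfer to the true latents.} Let $\theta \in \mathcal{S}$ be any optimum. Perfect reconstruction gives $g_\theta \circ f_\theta = \mathrm{id}$ almost everywhere on $\mathcal{X}$, so $f_\theta = g_\theta^{-1}$ on the data manifold $g(\mathcal{U})$. Hence
\[
f_\theta(x) = (g_\theta^{-1} \circ g)(u(x)) =: T(u(x)),
\]
where $u(x) = g^{-1}(x)$ is well defined because $g$ is a diffeomorphism onto its image. Both $g$ and $g_\theta$ are smooth and injective, locally bi-Lipschitz with constants $1+\delta$ and $1+L$. Therefore $T$ is smooth, injective, and locally $(1+\delta)(1+L)$-bi-Lipschitz.

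\textbf{Step 2: globalize and apply isometric approximation.} As in the proof of Theorem \ref{thm:aeID}, we use convexity and compactness of the latent supports, with the convexity assumption now also imposed on $\mathcal{U}$. This upgrades the local bound to a global one. The upgrade is immediate for the upper Lipschitz bound by integrating along segments. The lower bound is the same step the paper uses, applied to $T^{-1}$ on the convex set $f_\theta(\mathcal{X})$. With $\Delta$ the diameter of $\mathcal{U}$, $T$ is then a $\Delta\big((1+\delta)(1+L)-1\big)$-near-isometry. Lemma \ref{lemma:nearIsometriesAreNearIsometries} gives an isometry $U$ with
\[
\esssup \lVert f_\theta - U \circ u \rVert \le c_D \sqrt{(1+\delta)(1+L)-1}\,\Delta .
\]
Lemma \ref{lemma:mazurulam} makes $U$ rigid, so $h = U^{-1} \in \mathcal{H}_\text{rigid}$ satisfies $\lVert h\circ f_\theta - u\rVert \le \epsilon$ with $\epsilon = c_D \lVert U^{-1}\rVert\sqrt{\delta + L + \delta L}\,\Delta$. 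The operator norm here is $1$ since $U$ is rigid. This gives the first claim.

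\textbf{Step 3: ICA.} Write $f_\theta = A u + r$ with $A$ orthogonal and $\lVert r\rVert \le \epsilon$ (after centering). Then $f_\theta$ is $\epsilon$-near $u$ up to $\mathcal{H}_\text{linear}$, with $\lambda_A = \Lambda_A = 1$. Since $u$ is white, Lemma \ref{lemma:linearMeansRigid} gives that the whitened $\hat f_\theta$ is $C\epsilon$-close to a rotation of $u$ itself; the whitening of $u$ is the identity. This is where the whitening of the factors is used.

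Now apply Lemma \ref{lemma:finitePerturbedICA} with clean data $\mathbf{x}_n = Q u_n$, perturbations of size $b = C\epsilon$, and $a = \Delta$. Because $u$ has independent non-Gaussian components, the unperturbed ICA optimum $Q_\star$ recovers $u$ up to a signed permutation; this is the classical Comon identifiability result quoted in Appendix \ref{app:terminology}. The lemma then produces a perturbed stationary point whose output is within $K\epsilon + K'\epsilon^2$ of $P u$, for some $P\in\mathcal{H}_\sigma$. The constants are exactly as in the proof of Theorem \ref{thm:icaID}. Letting $N\to\infty$ gives the $\mathcal{H}_\sigma$ claim with $\epsilon' = K\epsilon + K'\epsilon^2$.

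\textbf{Main obstacle.} The delicate step is Step 3's assumption that the ICA solution selected on $\hat f_\theta$ is the one continuously connected, via the implicit function theorem, to the true demixing optimum. Lemma \ref{lemma:finitePerturbedICA} only guarantees the existence of a nearby stationary point, not that the algorithm lands on it. I would handle this the way the paper does for Theorem \ref{thm:icaID}: invoke the local strong concavity hypothesis ($\Hess \succcurlyeq \mu$ under perturbations of size at most $\varepsilon$) together with the assumption that the only indeterminacy of ICA optima is $\mathcal{H}_\sigma$, so the well-differentiated optimum is unique modulo signed permutations.

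A secondary point is the globalization in Step 2. I would simply import it as in Theorem \ref{thm:aeID}.
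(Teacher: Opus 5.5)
Your proposal is correct and follows essentially the same route as the paper. You treat $g^{-1}$ as a second encoder so that $T = g_\theta^{-1}\circ g$ maps $u$ to $f_\theta$, globalize its bi-Lipschitz bound by convexity, and invoke the isometric-approximation lemma with Mazur--Ulam for the $\mathcal{H}_\text{rigid}$ claim; you then pass the result through the whitening and perturbed-ICA lemmas exactly as in Theorem~\ref{thm:icaID}. The differences are minor: your constant $c_D\sqrt{\delta+L+\delta L}\,\Delta$ is slightly sharper than the paper's $c_D\sqrt{2\delta+\delta^2}\,\Delta$ with $\delta \leftarrow \max\{\delta, L\}$, and you state explicitly two things the paper leaves implicit, namely the convexity of $\mathcal{U}$ and the use of Comon's identifiability for the unperturbed ICA problem on $u$.
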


\begin{proof}
    Take $\delta = \max \{ \delta, L\}$ as the maximum of the two local bi-Lipschitz constants of the data-generating process and the bound on the local bi-Lipschitz constant of the decoders in the model class. Theorem \ref{thm:aeID} yields $\delta'$-near-identifiability in the limit up to $\mathcal{H}_\text{rigid}$ for $\delta' = c_D \sqrt{2\delta + \delta^2} \Delta$. With the inverse of the true data-generating map taking the place of one of the encoders, the same argument from the proof of Theorem \ref{thm:aeID} yields the first claim, namely $\delta'$-near structural identifiability of $g^{-1}$ up to $\mathcal{H}_\text{rigid}$. \\
    
    The rest of the proof follows similarly to the proof of Theorem \ref{thm:icaID} in Appendix \ref{proof:icaID}, with the inverse of the true data-generating map taking the place of one of the encoders. In particular, let $\theta$ be an optimum and consider $f_\theta$ and $g_\theta$. Without loss of generality, assume that $f_\theta(\mathbf{x}) = 0 = g^{-1}(\mathbf{x})$, noting that the encoder $f_\theta$ and decoder $g_\theta$ can be translated arbitrarily without altering any of our hypotheses. The same argument then applies, yielding the result. The precise constant is the same, with $\epsilon = K\delta' + K' \delta'^2$ for $K$ and $K'$ as defined in the statement of Theorem \ref{thm:icaID}.
\end{proof}

\subsection{Dynamical isometry and bi-Lipschitzness}
\label{app:dynIso}

\begin{proposition*}
    Suppose $f : \mathbb{R}^D \rightarrow \mathbb{R}^N$ is once differentiable and satisfies dynamical isometry, in the sense that the singular values $\lambda_i$ of the Jacobian $J$ satisfy $|\lambda_i - 1| \leq \epsilon$ for $i = 1, \dots, \min\{D, N\}$ for some $1 > \epsilon \geq 0$. Then, $\mathbf{f}$ is locally $L$-bi-Lipschitz for $L = \frac{1 + \epsilon}{1 - \epsilon}$.
\end{proposition*}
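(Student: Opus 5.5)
The approach is to pass from the pointwise singular-value bound on the Jacobian to the local distance inequalities of Definition~\ref{defn:bilipschitz}, using a first-order Taylor expansion around each point. Note first that Definition~\ref{defn:bilipschitz} is phrased with constant $1+L$; here the statement gives the bi-Lipschitz constant $B := \frac{1+\epsilon}{1-\epsilon}$ directly. We will show that for every $\mathbf{z}$ there is a neighbourhood $U_\mathbf{z}$ on which
\[
\frac{1}{B}\lVert \mathbf{z}-\mathbf{z}'\rVert \le \lVert f(\mathbf{z})-f(\mathbf{z}')\rVert \le B\lVert \mathbf{z}-\mathbf{z}'\rVert .
\]
We will need $N \ge D$, so that $J$ has $D$ singular values, all at least $1-\epsilon>0$. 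If $N<D$ the lower bound must fail, since $J$ has a kernel; we state this restriction, which is implicit in the diffeomorphism setting of Theorem~\ref{thm:aeID}.

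Step 1 is a linear-algebra bound. At each point, $\sigma_{\min}(J)\ge 1-\epsilon$ and $\sigma_{\max}(J)\le 1+\epsilon$. Hence for every $\mathbf{h}$,
\[
(1-\epsilon)\lVert\mathbf{h}\rVert \le \lVert J(\mathbf{z})\mathbf{h}\rVert \le (1+\epsilon)\lVert\mathbf{h}\rVert .
\]

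Step 2 is the upper bound. Write $f(\mathbf{z}')-f(\mathbf{z}) = \int_0^1 J(\mathbf{z}+t(\mathbf{z}'-\mathbf{z}))(\mathbf{z}'-\mathbf{z})\,dt$ along the segment. The singular-value hypothesis is global, so this gives $\lVert f(\mathbf{z})-f(\mathbf{z}')\rVert\le(1+\epsilon)\lVert\mathbf{z}-\mathbf{z}'\rVert \le B\lVert\mathbf{z}-\mathbf{z}'\rVert$ for all pairs, with no localization needed. The integral formula requires $f$ to be $C^1$, or at least the mean-value inequality; the latter holds for merely differentiable $f$ via the scalar mean value theorem applied to $t\mapsto \langle v, f(\mathbf{z}+t\mathbf{h})\rangle$.

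Step 3 is the lower bound, and it is the main obstacle. Fix $\mathbf{z}$ and write
\[
f(\mathbf{z}')-f(\mathbf{z}) = J(\mathbf{z})(\mathbf{z}'-\mathbf{z}) + r(\mathbf{z}') ,
\]
where by differentiability $\lVert r(\mathbf{z}')\rVert \le \eta\lVert\mathbf{z}'-\mathbf{z}\rVert$ for $\mathbf{z}'$ in a small enough ball, with $\eta>0$ arbitrary. Then $\lVert f(\mathbf{z}')-f(\mathbf{z})\rVert\ge(1-\epsilon-\eta)\lVert\mathbf{z}'-\mathbf{z}\rVert$. We want this to be at least $\frac{1-\epsilon}{1+\epsilon}\lVert\mathbf{z}'-\mathbf{z}\rVert$, so we choose
\[
\eta = (1-\epsilon)-\frac{1-\epsilon}{1+\epsilon} = \frac{\epsilon(1-\epsilon)}{1+\epsilon} ,
\]
which is strictly positive whenever $\epsilon>0$. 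This slack between $1-\epsilon$ and $1/B$ is exactly why the loose constant $B$ is stated.

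The edge case $\epsilon=0$ needs separate handling. Then $B=1$ and there is no slack, so we instead use $C^1$ continuity of $J$. The cleanest route is to assume $f\in C^1$. One applies the same segment-integral argument to $J(\mathbf{z})^{+}$ composed with $f$, or notes that the image is locally a graph over the tangent space via the inverse function theorem onto the image. This gives the lower bound $(1-\epsilon)-o(1)$ on small balls, and an isometric Jacobian everywhere then yields exact local isometry in the limit. If this becomes delicate, we would state the result for $\epsilon>0$ and recover $\epsilon=0$ by monotonicity in $\epsilon$: the hypothesis for $\epsilon=0$ implies the hypothesis for every $\epsilon'>0$. This gives every constant $B>1$, and the paper's uses of the proposition only need that.

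Note that the definition only requires inequalities between $\mathbf{z}$ and each $\mathbf{z}'\in U_\mathbf{z}$, not between arbitrary pairs in $U_\mathbf{z}$. The pointwise Taylor argument of Step 3 therefore suffices, and the proof is complete by combining Steps 2 and 3.
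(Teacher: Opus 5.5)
Your proof is correct for $0<\epsilon<1$ and $N\ge D$. You are right to impose $N\ge D$: if $N<D$, then along $\mathbf{h}\in\ker J(\mathbf{z})$ one has $\lVert f(\mathbf{z}+t\mathbf{h})-f(\mathbf{z})\rVert=o(t)$, so the statement fails.

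Your route differs from the paper's in the lower bound, which is the only nontrivial part. The paper gets the upper bound from the mean value theorem, as you do. It then claims the global bound $(1-\epsilon)\lVert x-y\rVert\le\lVert f(x)-f(y)\rVert$ ``by the same argument''. But the mean value inequality only controls increments from above, and this lower bound is false even locally. For example, $f(t)=(1-\epsilon)(\cos t,\sin t)$ has its single Jacobian singular value equal to $1-\epsilon$ everywhere, yet $\lVert f(t)-f(0)\rVert=2(1-\epsilon)\lvert\sin(t/2)\rvert<(1-\epsilon)\lvert t\rvert$ for all $t\neq 0$.

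Your Step 3 avoids this problem. It expands only at the centre $\mathbf{z}$, which suffices because Definition~\ref{defn:bilipschitz} constrains only pairs $(\mathbf{z},\mathbf{z}')$. It then spends the gap $(1-\epsilon)-\frac{1-\epsilon}{1+\epsilon}>0$ on the Taylor remainder. This is what actually justifies the stated constant; the paper's one-liner is shorter but does not establish the lower bound.

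The one real error is in your $\epsilon=0$ paragraph. An everywhere-isometric Jacobian does not give exact local isometry when $N>D$. With $\epsilon=0$, the same example $f(t)=(\cos t,\sin t)$ satisfies $\lVert f(t)-f(0)\rVert<\lvert t\rvert$ for every $t\neq 0$, so no neighbourhood works with constant $1$. Your pseudo-inverse argument yields only $1-o(1)$, and no limiting step can improve this. So for $\epsilon=0$ and $N>D$ the proposition as stated is false. Your fallback, which gives every constant $B>1$ by monotonicity in $\epsilon$, is the best possible conclusion there. Present it as a correction of the statement, not as a workaround for a delicate step.

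For $N=D$ and $f\in C^1$, the constant $1$ is attainable. The inverse function theorem gives a local inverse $g$ near $f(\mathbf{z})$ with orthogonal Jacobian. The mean value inequality for $g$ on a small ball around $f(\mathbf{z})$ then gives $\lVert\mathbf{z}-\mathbf{z}'\rVert\le\lVert f(\mathbf{z})-f(\mathbf{z}')\rVert$ for $\mathbf{z}'$ near $\mathbf{z}$.
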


\begin{proof}
    Let $x, y \in \mathbb{R}^D$. Then the mean value theorem yields the bound $\lVert f(x) - f(y) \rVert \leq \lVert J_f \rVert \lVert x - y \rVert \leq (1 + \epsilon) \lVert x - y \rVert$. By the same argument, $(1 - \epsilon) \lVert x - y \rVert \leq \rVert f(x) - f(y) \rVert$, and taking $L = \frac{1 + \epsilon}{1 - \epsilon}$ (a bound on the condition number of $J$) yields a suitable bi-Lipschitz constant. 
\end{proof}

\begin{remark}
    Many popular regularization techniques spanning architectures and tasks optimize implicitly or explicitly for dynamical isometry, with the level of evidence ranging from theoretical to empirical. For example, weight decay has been shown theoretically and empirically to do so \citep{weightDecayLipschitz}, normalization techniques in generative adversarial networks optimize for it directly \citep{stylegan2,miyato}, residual layers yield this property \citep{bachlechner}, and specialized techniques have been developed to yield it at initialization \citep{dynamicalIsometry}.
\end{remark}

\subsection{Structural identifiability implies statistical identifiability}
\label{app:strucIDImpliesID}

Below, we prove that statistical (near-)identifiability is implied by structural (near-)identifiability.

\begin{theorem*}
    Suppose $(\mathbf{P}, \Theta, \mathcal{L}_\theta, F)$ $\delta$-nearly identifies the structure $u$ up to $\mathcal{H}$ for $F : \theta \mapsto f_\theta$. Then, $(\mathbf{P}, \Theta, \mathcal{L}_\theta, F)$ is $\epsilon$-nearly identifiable up to $\mathcal{H}$ for some $\epsilon \in \mathbb{R}$ provided that $\mathcal{H}$ is bounded by $C \in \mathbb{R}^+$ as in the sense of an operator norm.
\end{theorem*}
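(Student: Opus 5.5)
The plan is a triangle-inequality argument through the ground-truth structure $u$, using the group structure of $\mathcal{H}$ to move the transformation from one representation onto the other. Fix two minimizers $\theta, \theta' \in \mathcal{S}$. By Definition \ref{defn:strucID}, there are $h, h' \in \mathcal{H}$ with
\[
\lVert h \circ f_\theta - u \rVert \leq \delta
\quad\text{and}\quad
\lVert h' \circ f_{\theta'} - u \rVert \leq \delta,
\]
both in the $L^\infty(\mathbf{P})$ norm. The triangle inequality then gives $\lVert h \circ f_\theta - h' \circ f_{\theta'} \rVert \leq 2\delta$. Definition \ref{defn:nearid}, however, asks for a bound on $\lVert f_\theta - \tilde h \circ f_{\theta'} \rVert$ for some single $\tilde h \in \mathcal{H}$. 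Since $\mathcal{H}$ is a group, the natural candidate is $\tilde h = h^{-1} \circ h' \in \mathcal{H}$.

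The key step is to pull $h^{-1}$ out of the difference. The boundedness hypothesis is what makes this possible. I will interpret ``$\mathcal{H}$ is bounded by $C$ in the sense of an operator norm'' as a uniform Lipschitz-type control: for every $k \in \mathcal{H}$ and all $z, z'$, we have $\lVert k(z) - k(z') \rVert \leq C \lVert z - z' \rVert$. For the linear and rigid classes this is just $\sup_{k\in\mathcal{H}} \lVert k \rVert_{\mathrm{op}} \leq C$, together with the translation parts cancelling in differences. Because $\mathcal{H}$ is closed under inverses, the bound applies to $h^{-1}$ in particular. Pointwise for $\mathbf{P}$-almost every $x$ we then get
\[
\lVert f_\theta(x) - h^{-1}(h'(f_{\theta'}(x))) \rVert
= \lVert h^{-1}(h(f_\theta(x))) - h^{-1}(h'(f_{\theta'}(x))) \rVert
\leq C \lVert h(f_\theta(x)) - h'(f_{\theta'}(x)) \rVert.
\]
Taking the essential supremum yields $\epsilon = 2C\delta$, which is finite and works uniformly over all pairs in $\mathcal{S}$. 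This proves the claim.

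I expect the only real obstacle to be making the boundedness assumption precise. The statement only says ``bounded in the sense of an operator norm'', so the step that needs care is checking that this bound controls \emph{differences} under $h^{-1}$, and not only $h$ itself. For affine maps I will handle this by writing $h^{-1}(z) = Bz + c$, noting that the translation $c$ cancels, and assuming $\sup \lVert B \rVert_{\mathrm{op}} \leq C$ over the inverses, which is the same as over the group since $\mathcal{H}$ is closed under inversion. For a general nonlinear $\mathcal{H}$, I will take uniform $C$-Lipschitzness as the meaning of ``bounded''. Without such a bound the argument fails: for example, with unbounded linear $h$, a small error in $u$-space can blow up after applying $h^{-1}$. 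This is why the hypothesis is needed, and in the exact case $\delta = 0$ it is unnecessary, since then $f_\theta = h^{-1}\circ h' \circ f_{\theta'}$ almost everywhere.
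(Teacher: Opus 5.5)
Your proposal is correct and is essentially the paper's proof: take $h^{-1}\circ h' \in \mathcal{H}$, pass through $u$ with the triangle inequality, and pull out $h^{-1}$ using the uniform bound on $\mathcal{H}$ to get $\epsilon = 2C\delta$. Your one refinement is to read ``bounded by $C$'' as a uniform Lipschitz bound on differences, so that translation parts cancel. The paper's step through $\lVert h^{-1}\rVert_{\mathrm{op}}$ uses this implicitly without saying so.
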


\begin{proof}
    Let $\theta, \theta' \in \mathcal{S}$ be solutions to the minimization problem $\min_{\theta \in \Theta} \mathbb{E}[\mathcal{L}_\theta]$ where $\mathcal{M} = \{ \mathcal{L}_\theta : \theta \in \Theta \}$. By structural identifiability, we have that there exist $h$ and $h'$ relating $\theta$ and $\theta'$ respectively to $u$. More concretely, we have:

    \begin{align*}
        \lVert h \circ f_\theta - u \rVert_{L^p} &\leq \delta \\
        \lVert h' \circ f_{\theta'} - u \rVert_{L^p} &\leq \delta
    \end{align*}
    
    Take $h^* = h^{-1} \circ h'$, where the inverse and composition are well-defined because $\mathcal{H}$ is a group of functions mapping $\mathbb{R}^D$ to itself. $h^*$ can be understood as mapping $f_{\theta'}$ as close to $f_\theta$ as possible under the available assumptions. Then, we have

    \begin{align*}
        \lVert f_\theta - h^* \circ f_{\theta'} \rVert_{L^p} &= \lVert h^{-1} \circ h \circ f_\theta - h^{-1} \circ h' \circ f_{\theta'} \rVert_{L^p} \\
        &= \lVert h^{-1} \circ h \circ f_\theta - h^{-1} \circ u + h^{-1} \circ u - h^{-1} \circ h' \circ f_{\theta'} \rVert_{L^p} \\
        &\leq \lVert h^{-1} \rVert_\text{op} \left(\lVert h \circ f_\theta - u \rVert_{L^p} + \lVert h' \circ f_{\theta'} - u \rVert_{L^p} \right) \\
        &\leq 2C \delta
    \end{align*}

    where the third line follows by the triangle inequality, and the final line by structural identifiability of $u$ and boundedness of $\mathcal{H}$, yielding the result.
    
    The constant $C$ arises because solutions might exist on a different scale from the data-generating process $u$ (and therefore from each other). Partly as a result of this fact, this theorem likely is not useful for certain identifiability classes such as $\mathcal{H}_\text{linear}$, which even if bounded by assumption can likely yield more fruitful identifiability results via a direct approach. On the other hand, for identifiability classes like $\mathcal{H}_\text{rigid}$, the bound is in some sense tight.

    Finally, we emphasize that the result is largely agnostic to the choice of reasonable norms, although we have in mind the usual operator norm and an $L^p$ norm for $1 \leq p \leq \infty$, taken with respect to the data distribution $\mathbf{P}(x)$.
\end{proof}

\begin{remark}
    Taking $\delta = 0$ shows that structural identifiability implies statistical identifiability in the limit.
\end{remark}

\subsection{Conformal maps as near-isometries}
\label{app:conformalNearIsometries}

In this section we make rigorous the arguments outlined in Section \ref{sec:disentanglementTheory}. First, we outline the mollification argument which allows us to take derivatives in $L^2$ without hassle.

\paragraph{Mollification} Let $\phi_{\sigma^2}$ be the isotropic zero-mean Gaussian density with variance $\sigma^2$. For any image represented by a function $F$, note that $F_\sigma = F * \phi_{\sigma^2}$ is a ``smooth'' version of that image without hard edges. Accordingly, mollifying all images by the same Gaussian means that taking the Gateaux derivative of image-valued functions becomes possible in $L^2$, and only introduces a multiplicative constant dependent on the variance $\sigma^2$ which we would like to ignore. To see that this is possible, consider the data-generating process for a square articulating along the $x$-axis according to a coordinate $p$ outlined in Section \ref{sec:disentanglementTheory}:

\begin{equation*}
    f(p) = \left[ (x, y) \mapsto \mathbf{1}_{|x - p| \leq r, |y| \leq r} \right] \in L^2([-1, 1])
\end{equation*}

Denote $f_\sigma(p) = f(p) * \phi_{\sigma^2}$. Then, convolving with the distributional derivative gives

\begin{align*}
    f'_\sigma(p) &= \mathbf{1}_{|y| \leq |r|} \left( \phi(x - p - r) - \phi(x - p + r) \right) \\
    \lVert f'_\sigma(p) \rVert_{L^2}^2 &= \frac{2r}{\sqrt{\pi} \sigma} + \mathcal{O}(\exp(-\sigma^2))
\end{align*}

Pick any two distinct latents $p_0$ and $p_1$. For any smoothed manifold, we have that their geodesic distance is given by $\left( \frac{2r}{\sqrt{\pi} \sigma} + \mathcal{O}(\exp(-\sigma^2)) \right) |p_1 - p_0|$. The distance between any two points can then be ``renormalized'' against this distance by dividing through it, ensuring that as $\sigma \rightarrow 0$ what is left is a constant. This then implies that what we are actually computing in the subsequent sections is not a metric inherited from the $L^2$ norm at all, but rather defines a whole new geodesic distance on the limiting manifold of unsmoothed images. For notational clarity, we ignore mollification in the rest of this section and return to a heuristic argument for the next sections. We direct the interested reader to \cite{grimesThesis}, Section 2.6 for a fully rigorous treatment.

\paragraph{Two-dimensional manifold} Now, we fully characterize the 2-dimensional manifold described in the main text. Let $\mathcal{Z} = \{ (p, r) \mid a \leq p \leq b, R_0 \leq r \leq R \}$ denote the manifold of latent variables of the position of the square and its half-side length. Each point $\mathbf{z} \in \mathcal{Z}$ can be identified with an image:

\begin{equation*}
    f(p, r) = \left[ (x, y) \mapsto \mathbf{1}_{|x - p| \leq r, |y| \leq r} \right] \in L^2([-1, 1])
\end{equation*}

\paragraph{Directional derivatives} The derivative with respect to $p$ remains the same as in the main text, and the derivative with respect to $r$ follows similarly:

\begin{align*}
    \lVert \partial_p f(p, r) \rVert^2 &= 2r \\
    \lVert \partial_r f(p, r) \rVert^2 &= 8r
\end{align*}

However, checking that $f$ is a conformal map also demands that $\partial_p f$ and $\partial_r f$ are orthogonal in $L^2$. To see this, note that finite difference approximation to $\partial_p f$ are the (negative) left and (positive) right edges of the square, while the finite difference approximation to $\partial r f$ are all (positive) edges of the square. Therefore, the top edges contribute nothing to the inner product $\langle \partial_p f, \partial_r f \rangle$ while the contributions of the left and right edges cancel. Thus, the Riemannian metric can be written

\begin{align*}
    G(p, r) &= 2r \begin{pmatrix}
        1 & 0 \\
        0 & 4
    \end{pmatrix} \\
    G(p', r') &= r'I
\end{align*}

where the reparameterization $p' = p/4$ and $r' = 2r$ allows us to recover the isotropy.

\paragraph{Near-isometry} It remains to show how $f$ can be viewed as locally bi-Lipschitz. To see this, note that the we have the following global bound on the differential:

\begin{equation*}
    2R_0 \lVert \mathbf{u} \rVert \leq \lVert D f(\mathbf{u}) \rVert \leq 2R \lVert \mathbf{u} \rVert
\end{equation*}

for $\mathbf{u}$ in the tangent space at any point along $\mathcal{Z}$. Accordingly, we have the following bound on the geodesic distance:

\begin{align*}
    \sqrt{2R_0} \lVert \mathbf{z}_1 - \mathbf{z}_0 \rVert \leq d_\text{geo}\left(f(\mathbf{z}_1), f(\mathbf{z}_0) \right) \leq \sqrt{2R} \lVert \mathbf{z}_1 - \mathbf{z}_0 \rVert
\end{align*}

where $d_\text{geo}$ is the geodesic distance along the image manifold. As a result (assuming w.l.o.g. that $R_0 \leq 1$), $f$ is locally $\sqrt{R/R_0}$-bi-Lipschitz. Furthermore, for any $f_\star^{-1}$ from the image manifold to a convex subset of $\mathbb{R}^2$ which is also $\sqrt{R/R_0}$-bi-Lipschitz, $f \circ f_\star^{-1}$ is globally $R/R_0$-bi-Lipschitz (with respect to the $\ell^2$ norm on both spaces, where the constant follows by convexity) and is therefore a near-isometry with constant $(R/R_0 - 1) \Delta$ where $\Delta = \sqrt{4(R - R_0)^2 + (a - b)^2}$ is simply the diameter of $\mathcal{Z}$.

\subsection{Experimental details}

\subsubsection{Warmup experiment: MNIST}
\label{app:toyExp}

We conducted experiments on the MNIST dataset \citep{mnist} to validate our theoretical predictions regarding the identifiability of latent representations as a function of the bi-Lipschitz constant of the decoder.

\paragraph{Training} We trained pairs of orthogonal LeakyReLU autoencoders with the following architecture:
\begin{itemize}
    \item \textbf{Encoder:} $\mathbb{R}^{784} \to \mathbb{R}^{784} \to \mathbb{R}^{784} \to \mathbb{R}^{784} \to \mathbb{R}^{2}$
    \item \textbf{Decoder:} $\mathbb{R}^{2} \to \mathbb{R}^{784} \to \mathbb{R}^{784} \to \mathbb{R}^{784} \to \mathbb{R}^{784}$
\end{itemize}

\begin{wrapfigure}{r}{0.35\textwidth}
    \vspace{-10pt}
    \centering
    \includegraphics[width=0.325\textwidth]{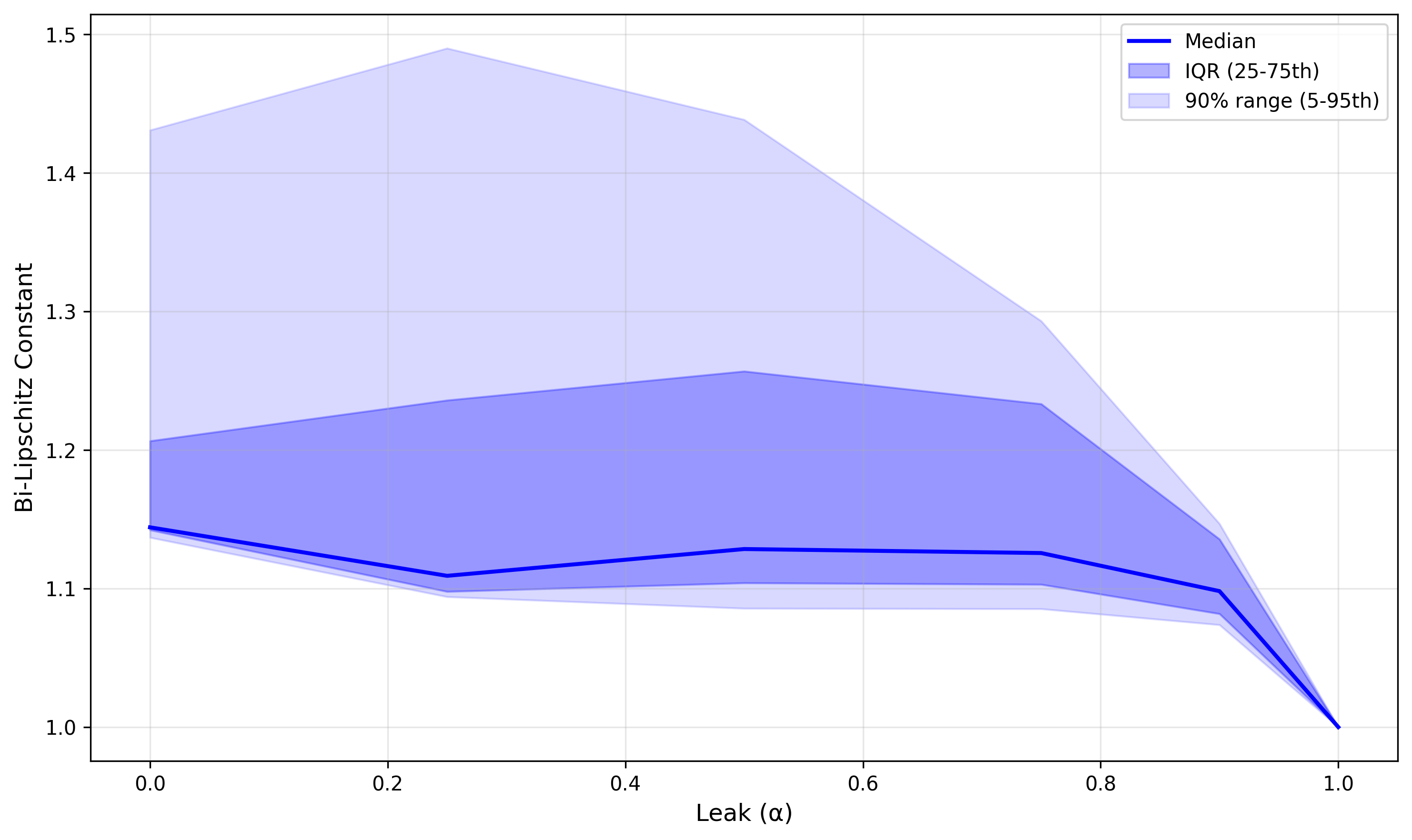}
    \caption{The distribution of sample-level bi-Lipschitz constant estimates $B(z)$ tightens around 1 as $\alpha \rightarrow 0$.}
    \label{fig:lipschitzDistn}
\end{wrapfigure}

All linear layers used orthogonal weight parametrization (no bias terms). LeakyReLU activations with leak constant $\alpha \in \{0.0, 0.25, 0.5, 0.75, 0.9, 1.0\}$ were applied at all intermediate layers. The latent dimension was $D = 2$. All models were fit using the Adam optimizer \citep{adam} with learning rate $\eta = 5 \times 10^{-4}$ for up to 2000 epochs, minimizing the mean squared reconstruction error. Early stopping was applied with patience of 50 epochs and minimum improvement threshold of $10^{-6}$. Gradients were clipped to unit norm for stability. We repeated each configuration with 10 random seeds for robustness, yielding $6 \times 10 = 60$ experimental runs. We filtered experimental runs to exclude poorly converged autoencoders. Specifically, we removed runs where the reconstruction error of either autoencoder exceeded the 95th percentile observed at the reference leak value $\alpha = 0.9$. This removed 11/60 runs.

\begin{wrapfigure}{r}{0.35\textwidth}
    \vspace{-10pt}
    \centering
    \includegraphics[width=0.325\textwidth]{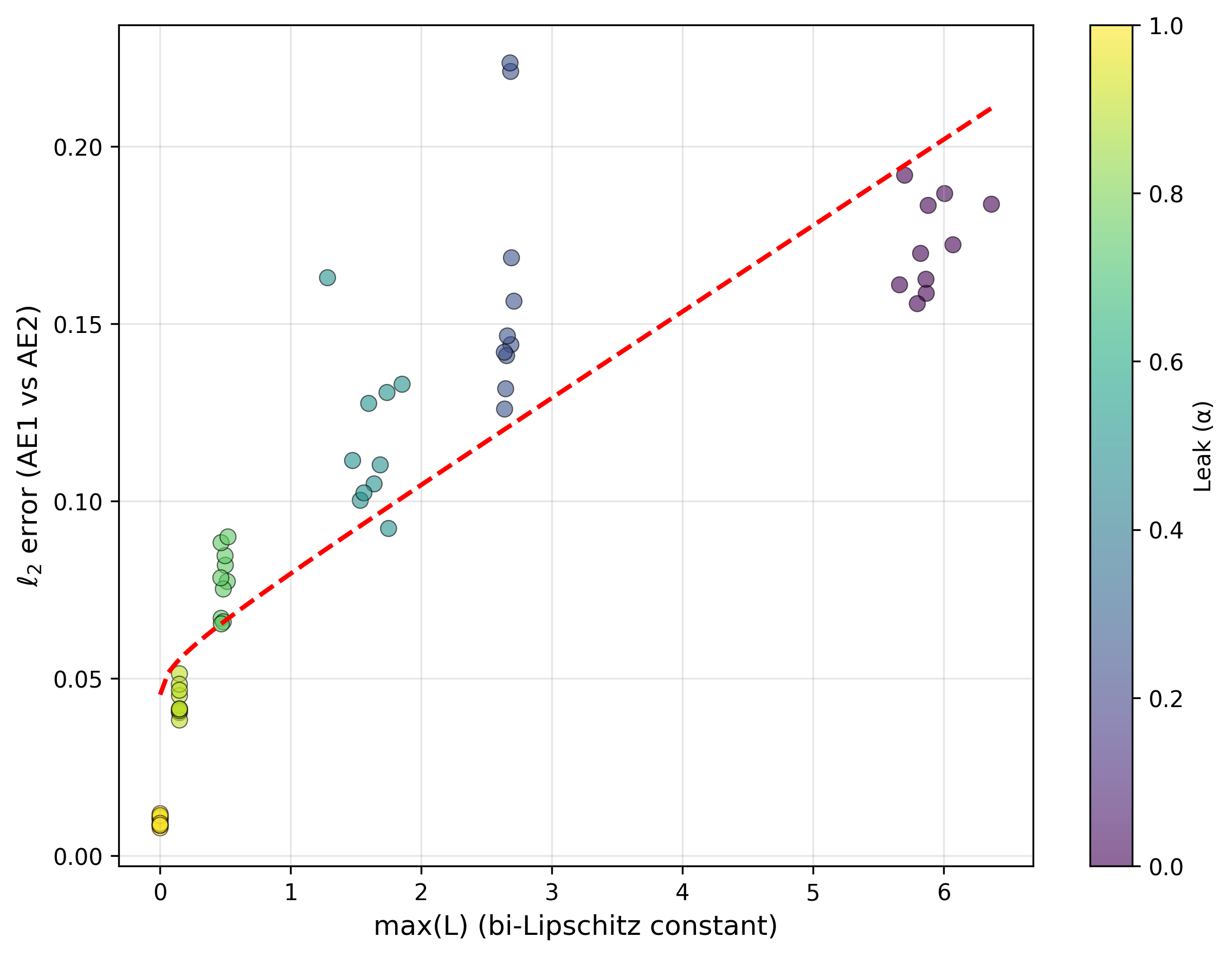}
    \caption{Controlling the bi-Lipschitz constant $L$ leads to improved identifiability (reduced $\ell_2$ error). The proportionality does not appear to differ whether the max or mean bi-Lipschitz constant is estimated.}
    \label{fig:maxLipschitzID}
\end{wrapfigure}

\paragraph{Results} For a given autoencoder, we compute the representations $z$ of a random subset of 1000 samples. For the decoder $g$, the bi-Lipschitz constant at a given latent point $\mathbf{z}$ is given as
\begin{equation*}
  B(z) = \max_v \{ \lVert J_g(z) v \rVert_2, 1/\lVert J_g(z)\rVert_2 \}
\end{equation*}
where the maximum is taken over 10 randomly sampled unit vectors. We plot the distribution of $B(z)$ as a function of the leak constant $\alpha$ in Figure \ref{fig:lipschitzDistn}. As $\alpha \rightarrow 1$, the distribution is typically well-concentrated around a mean not much larger than one, with a relatively small number of outliers. Figure \ref{fig:lipschitzID} (in the main text) plots the samplewise estimate of $L = \mathbb{E}_{\mathbf{P}(z)}[B(z) - 1]$ versus identifiability. Although more robust to outliers, this is not a formal bound because Theorem \ref{thm:aeID} relies on a global $L$, i.e. $L = \text{max}_{\mathbf{P}(z)}[B(z) - 1]$. For completeness, we plot using the maximum in Figure \ref{fig:maxLipschitzID}, with the maximum taken over all 1000 samples. Both plots include fitted curves of the form 
$\ell_2 \text{ error} = a \sqrt{L + L^2} + b$
to the identifiability measurements, consistent with the theorem.

\begin{wrapfigure}{r}{0.35\textwidth}
    \vspace{-10pt}
    \centering
    \includegraphics[width=0.325\textwidth]{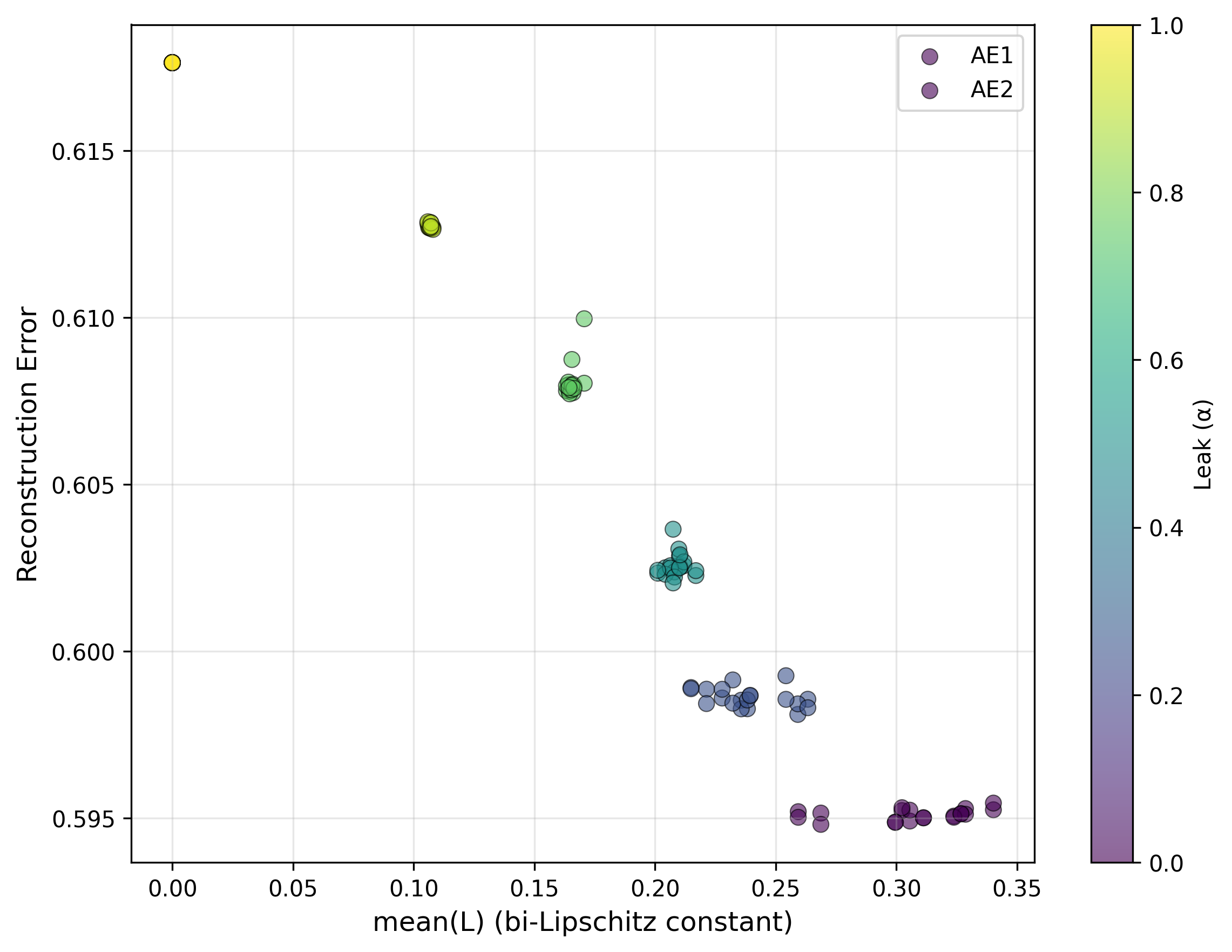}
    \caption{Reconstruction error improves as the bi-Lipschitz constant grows (leak $\alpha \rightarrow 1$). Notably, poor reconstruction does not inhibit identifiability.}
    \label{fig:recon}
    \vspace{-10pt}
\end{wrapfigure}

As $\alpha \rightarrow 1$, the estimated bi-Lipschitz constants $L$ do indeed shrink toward 1 as expected. This suggests validity of the experimental testbed, but not Theorem \ref{thm:aeID} itself. However, as $L \rightarrow 1$, we see that identifiability does indeed improve significantly ($\ell_2$ error $\rightarrow 0$) as predicted by the Theorem. Notably, this is despite the fact that perfect reconstruction does not hold (Figure \ref{fig:recon}).

We also completed the same experiment with a model per digit. This allows us to assess whether class-level indeterminacies play a role in identifiability in this problem setting like in \citep{nielsenKL}. All hyperparameters and setup remains the same, except we fit three seeds per digit for a total of $10 \text{ digits } \times 6 \text{ leak values } \times 3 \text{ seeds} = 180$ runs. Filtering using the same rule for reconstruction removed 5/180 runs. Results are consistent for these per-digit autoencoders as well, suggesting that for this setting of hyperparameters, class-level indeterminacies do not play a substantial role in the level of empirical identifiability (Figure \ref{fig:mnist_per_digit}).

\begin{figure}[t]
    \centering
    \includegraphics[width=\textwidth]{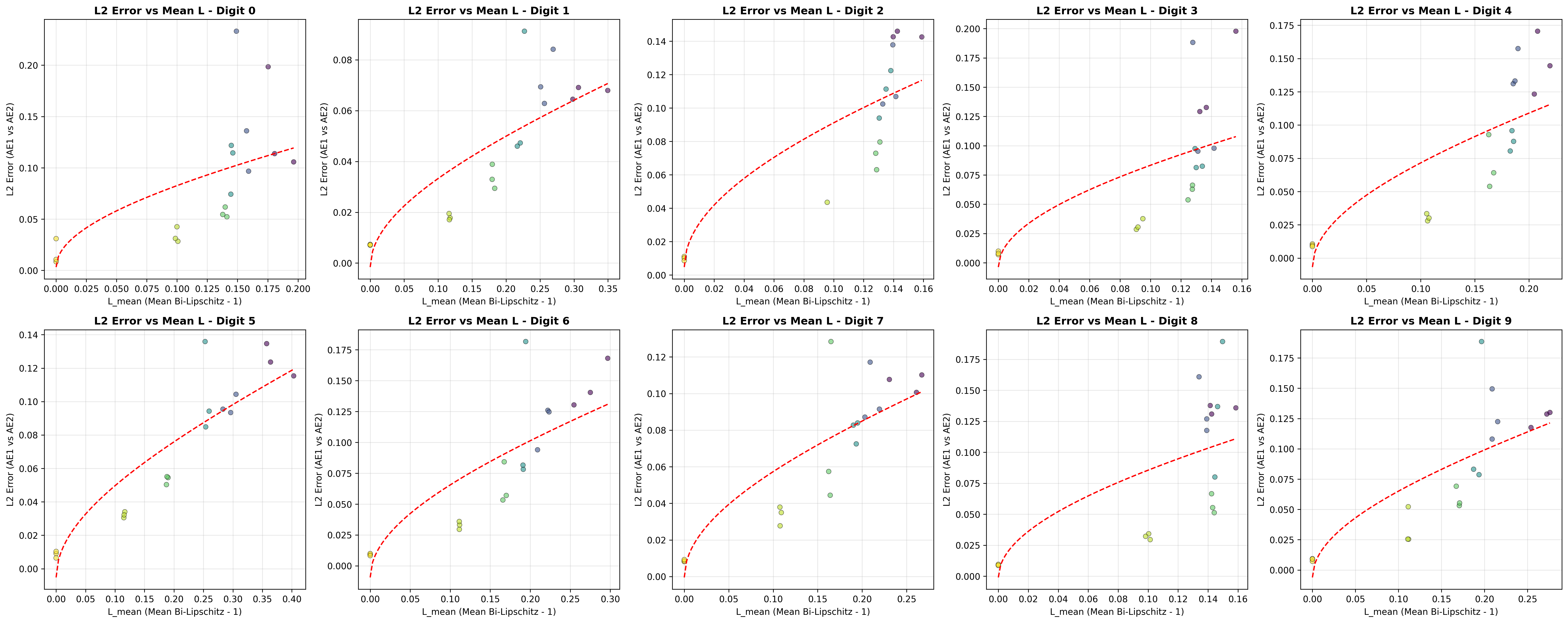}
    \caption{In digit-specific models, controlling the bi-Lipschitz constant $L$ leads to improved identifiability (reduced $\ell_2$ error) with pattern similar to the full-dataset models.}
    \label{fig:mnist_per_digit}
\end{figure}

\subsubsection{Measuring near-identifiability}
\label{app:idExp}

We describe here the setup for the alignment experiments reported in Table \ref{tab:alignment}. All reported statistics are \emph{in-sample}, with representations taken from the same data on which the models were trained. For each model, we extract the following hidden states:
\begin{itemize}
    \item \textbf{GPT-class models (Pythia-160M):} penultimate-layer hidden states.
    \item \textbf{MAE models:} latent representations obtained by averaging patch embeddings.  
    \item \textbf{Supervised models (ResNet-18):} penultimate-layer representations.
\end{itemize}

We apply the following representation alignment techniques:
\begin{itemize}
    \item \textbf{Permutation:} representation dimensions were matched using the Hungarian algorithm to resolve signed permutation indeterminacies. 
    \item \textbf{Rigid:} estimated via the Procrustes algorithm with a global scaling constant (rigid similarity transform).  
    \item \textbf{Linear:} estimated via least-squares regression with no additional regularization or constraints.
    \item \textbf{ICA:} estimated using FastICA (scikit-learn implementation), applied to the full representation dimension with all components retained. The Hungarian algorithm resolves the remaining signed permutation in latent space. 
\end{itemize}

Alignment quality is reported as the mean per-example $\ell_2$ error, normalized by the \emph{latent diameter}, defined as the maximum pairwise $\ell_2$ distance among representations. For ICA, efficiency is reported as the proportion of error reduction relative to the supervised rigid transform:  
    \[
    \text{ICA efficiency} = \frac{\text{Permutation} - \text{ICA}}{\text{Permutation} - \text{Rigid}}.
    \]

\subsubsection{Disentanglement experiments}
\label{app:syntheticExp}

We reproduce only the autoencoder (AE) results, enough to validate that we obtain similar performance, and apply ICA to these models. Specifically, each hyperparameter setting is re-fit 3 times, and training step where the average performance is the best is selected (according to modularity, after filtering for reconstruction as in \cite{latentQuantization}).

As in the original experiments in \cite{latentQuantization}, the latent spaces are overparameterized with the number of latents equal to twice the number of ground-truth sources $n_z = 2n_s$. As a result, the latent space is rank deficient. To avoid the introduction of additional hyperparameters for pruning inactive latents which could bias the performance of our approach (via e.g. whitening with dimensionality reduction), we fit the FastICA model without whitening. Inspection of the decoder Jacobian reveals that inactive latents are obvious (corresponding singular values are very near zero) and the remaining components of the Jacobian have similar scale (likely due to high weight decay), suggesting that full-rank whitening would have minimal impact here except to drive up the noise from inactive latents. As a result, we perform no whitening and preserve all latent dimensions.

\begin{table}[h]
    \centering
    \caption{Full results across datasets.}
    \label{tab:full-disent-results}
    \begin{tabular}{lccc}
        \toprule
        \multicolumn{4}{c}{\textbf{Shapes3D}}\\
        \midrule
        \textbf{model} & \textbf{InfoM} $\uparrow$ & \textbf{InfoE} $\uparrow$ & \textbf{InfoC} $\uparrow$\\
        \midrule
        AE                         & $0.41 \pm 0.03$ & $0.98 \pm 0.01$ & $0.28 \pm 0.01$\\
        $\beta$-VAE                & $0.59 \pm 0.02$ & $0.99 \pm 0.02$ & $0.49 \pm 0.03$\\
        $\beta$-TCVAE              & $0.61 \pm 0.03$ & $0.82 \pm 0.02$ & $0.62 \pm 0.02$\\
        BioAE                      & $0.56 \pm 0.02$ & $0.98 \pm 0.01$ & $0.44 \pm 0.02$\\
        AE (reproduction)          & $0.36 \pm 0.05$ & $1.00 \pm 0.00$ & $0.18 \pm 0.06$\\
        AE + ICA                   & $0.78 \pm 0.02$ & $1.00 \pm 0.00$ & $0.42 \pm 0.09$\\
        \midrule
        \multicolumn{4}{l}{\textit{Discrete latent models}}\\
        VQ-VAE                     & $0.72 \pm 0.03$ & $0.97 \pm 0.02$ & $0.47 \pm 0.03$\\
        VQ-VAE w/ weight decay     & $0.80 \pm 0.01$ & $0.99 \pm 0.01$ & $0.46 \pm 0.02$\\
        QLAE                       & $0.95 \pm 0.02$ & $0.99 \pm 0.01$ & $0.55 \pm 0.02$\\
        \midrule[1pt]
        \multicolumn{4}{c}{\textbf{MPI3D}}\\
        \midrule
        \textbf{model} & \textbf{InfoM} $\uparrow$ & \textbf{InfoE} $\uparrow$ & \textbf{InfoC} $\uparrow$\\
        \midrule
        AE                         & $0.37 \pm 0.04$ & $0.72 \pm 0.03$ & $0.36 \pm 0.03$\\
        $\beta$-VAE                & $0.45 \pm 0.03$ & $0.71 \pm 0.03$ & $0.51 \pm 0.03$\\
        $\beta$-TCVAE              & $0.51 \pm 0.04$ & $0.60 \pm 0.04$ & $0.57 \pm 0.04$\\
        BioAE                      & $0.45 \pm 0.03$ & $0.66 \pm 0.04$ & $0.36 \pm 0.03$\\
        AE (reproduction)          & $0.42 \pm 0.05$ & $0.66 \pm 0.28$ & $0.31 \pm 0.12$\\
        AE + ICA                   & $0.44 \pm 0.12$ & $0.66 \pm 0.28$ & $0.31 \pm 0.14$\\
        \midrule
        \multicolumn{4}{l}{\textit{Discrete latent models}}\\
        VQ-VAE                     & $0.43 \pm 0.06$ & $0.57 \pm 0.04$ & $0.22 \pm 0.04$\\
        VQ-VAE w/ weight decay     & $0.50 \pm 0.04$ & $0.81 \pm 0.04$ & $0.41 \pm 0.04$\\
        QLAE                       & $0.61 \pm 0.04$ & $0.63 \pm 0.05$ & $0.51 \pm 0.03$\\
        \midrule[1pt]
        \multicolumn{4}{c}{\textbf{Falcor3D}}\\
        \midrule
        \textbf{model} & \textbf{InfoM} $\uparrow$ & \textbf{InfoE} $\uparrow$ & \textbf{InfoC} $\uparrow$\\
        \midrule
        AE                         & $0.39 \pm 0.03$ & $0.74 \pm 0.03$ & $0.20 \pm 0.03$\\
        $\beta$-VAE                & $0.71 \pm 0.05$ & $0.73 \pm 04$ & $0.70 \pm 0.03$\\
        $\beta$-TCVAE              & $0.66 \pm 0.02$ & $0.74 \pm 0.04$ & $0.71 \pm 0.04$\\
        BioAE                      & $0.54 \pm 0.05$ & $0.73 \pm 0.04$ & $0.31 \pm 0.01$\\
        AE (reproduction)          & $0.37 \pm 0.14$ & $0.75 \pm 0.01$ & $0.21 \pm 0.06$\\
        AE + ICA                   & $0.68 \pm 0.09$ & $0.75 \pm 0.01$ & $0.37 \pm 0.32$\\
        \midrule
        \multicolumn{4}{l}{\textit{Discrete latent models}}\\
        VQ-VAE                     & $0.61 \pm 0.04$ & $0.83 \pm 0.05$ & $0.42 \pm 0.02$\\
        VQ-VAE w/ weight decay     & $0.74 \pm 0.02$ & $0.86 \pm 0.04$ & $0.40 \pm 0.03$\\
        QLAE                       & $0.71 \pm 0.03$ & $0.77 \pm 0.02$ & $0.44 \pm 0.02$\\
        \midrule[1pt]
        \multicolumn{4}{c}{\textbf{Isaac3D}}\\
        \midrule
        \textbf{model} & \textbf{InfoM} $\uparrow$ & \textbf{InfoE} $\uparrow$ & \textbf{InfoC} $\uparrow$\\
        \midrule
        AE                         & $0.42 \pm 0.04$ & $0.80 \pm 0.02$ & $0.21 \pm 0.05$\\
        $\beta$-VAE                & $0.60 \pm 0.03$ & $0.80 \pm 0.02$ & $0.51 \pm 0.03$\\
        $\beta$-TCVAE              & $0.54 \pm 0.02$ & $0.70 \pm 0.02$ & $0.46 \pm 0.03$\\
        BioAE                      & $0.63 \pm 0.03$ & $0.65 \pm 0.03$ & $0.33 \pm 0.04$\\
        AE (reproduction)          & $0.41 \pm 0.11$ & $0.80 \pm 0.05$ & $0.20 \pm 0.07$\\
        AE + ICA                   & $0.64 \pm 0.17$ & $0.80 \pm 0.05$ & $0.34 \pm 0.05$\\
        \midrule
        \multicolumn{4}{l}{\textit{Discrete latent models}}\\
        VQ-VAE                     & $0.57 \pm 0.04$ & $0.87 \pm 0.05$ & $0.45 \pm 0.04$\\
        VQ-VAE w/ weight decay     & $0.73 \pm 0.03$ & $0.81 \pm 0.03$ & $0.44 \pm 0.04$\\
        QLAE                       & $0.78 \pm 0.03$ & $0.97 \pm 0.03$ & $0.49 \pm 0.03$\\
        \bottomrule
    \end{tabular}
\end{table}

Table \ref{tab:full-disent-results} is an augmented version of Table \ref{tab:disent-results}, with standard errors computed across the 3 models with the best hyperparameter setting for each model. Results for all models other than AE (reproduction) and AE + ICA are quoted from \cite{latentQuantization}, which averaged over 5 seeds instead of 3.

\subsubsection{OpenPhenom experiments}
\label{app:bioExperimentDetails}

The whitening and FastICA \cite{fastICA} algorithms are from the \texttt{scikit-learn} package \cite{sklearn}. These models are trained at the patch level on the patches from the first channel of images in Rxrx3-core \cite{rxrx3core}, where patches are specifically subsampled from the top left-hand corner. To ensure computational feasibility, a single patch is sampled from each image, yielding approximately 222K patches. Both models use the default hyperparameters (all principal components are kept and the contrast function is $\log \cosh$).

Gradient boosting models are trained using LightGBM \cite{lightgbm} (hyperparameters in Table \ref{tab:lgbm_params}). Models are trained to predict whether the image patch embedding is from an image which is perturbed or not. Models are evaluated using area under the receiver operator characteristic curve.

\begin{table}[htbp]
\centering
\begin{tabular}{@{}ll@{}}
\toprule
\textbf{Hyperparameter}   & \textbf{Value} \\ \midrule
is\_unbalance           & True    \\
learning\_rate          & 0.05    \\
num\_leaves             & 31      \\
feature\_fraction       & 0.6     \\
reg\_alpha              & 5.0     \\
reg\_lambda             & 1.0     \\
min\_gain\_to\_split     & 0.8     \\
min\_data\_in\_leaf      & 30      \\ \bottomrule
\end{tabular}
\caption{LightGBM hyperparameters used in all experiments.}
\label{tab:lgbm_params}
\end{table}

A separate model is trained on each plate, which is then evaluated on all remaining plates. Below, we report an augmented version of Table \ref{tab:bio-results} with the standard error of the mean computed across folds. When more than one independent experiment (corresponding to a guide) is available for a given gene, the standard error is computed across all folds from all guides.

\resizebox{\textwidth}{!}{
\begin{tabular}{lcccccccc}
\toprule
 & \multicolumn{4}{c}{\textbf{Mean AUROC} $\pm$ s.e. ($\uparrow$)} & \multicolumn{4}{c}{\textbf{Sparsity} $\pm$ s.e. ($\uparrow$ more sparse)}\\
 \cmidrule(lr){2-5}\cmidrule(lr){6-9}
Gene & Base & PCA & PCA + ICA & PCA + Rand & Base & PCA & PCA + ICA & PCA + Rand\\
\midrule
CYP11B1 (1) & $0.663\pm0.008$ & $0.692\pm0.008$ & $\mathbf{0.709}\pm0.005$ & $0.678\pm0.010$ & $0.184\pm0.006$ & $0.204\pm0.007$ & $\mathbf{0.237}\pm0.006$ & $0.188\pm0.007$\\
EIF3H (1)   & $0.682\pm0.008$ & $0.724\pm0.004$ & $\mathbf{0.749}\pm0.007$ & $0.725\pm0.003$ & $0.192\pm0.004$ & $0.224\pm0.010$ & $\mathbf{0.268}\pm0.008$ & $0.214\pm0.006$\\
HCK (1)     & $0.670\pm0.012$ & $0.693\pm0.007$ & $\mathbf{0.711}\pm0.005$ & $0.668\pm0.007$ & $0.156\pm0.004$ & $0.208\pm0.004$ & $\mathbf{0.241}\pm0.006$ & $0.166\pm0.005$\\
MTOR (6)    & $0.663\pm0.003$ & $0.690\pm0.003$ & $\mathbf{0.705}\pm0.003$ & $0.679\pm0.003$ & $0.166\pm0.002$ & $0.201\pm0.003$ & $\mathbf{0.233}\pm0.003$ & $0.186\pm0.002$\\
PLK1 (6)    & $0.803\pm0.002$ & $0.811\pm0.002$ & $\mathbf{0.815}\pm0.002$ & $0.792\pm0.002$ & $0.251\pm0.003$ & $\mathbf{0.307}\pm0.004$ & $0.305\pm0.003$ & $0.262\pm0.003$\\
SRC (1)     & $0.660\pm0.004$ & $0.694\pm0.007$ & $\mathbf{0.706}\pm0.004$ & $0.676\pm0.007$ & $0.170\pm0.006$ & $0.214\pm0.003$ & $\mathbf{0.240}\pm0.008$ & $0.184\pm0.005$\\
\bottomrule
\end{tabular}}

In Table \ref{tab:bio-concentration-full}, we report an augmented version of Table \ref{tab:bio-concentration} with the concentration scores, assessing sensitivity to the parameter $k$.

\begin{table}[h]
\caption{Sensitivity of concentration to $k$.}
\label{tab:bio-concentration-full}
\centering
\begin{tabular}{lccc}
\toprule
 & \multicolumn{3}{c}{Concentration ($\uparrow$)} \\
\cmidrule(lr){2-4}
Model & $k = 25\%$ & $k = 33\%$ & $k = 50\%$ \\
\midrule
Base (none)      & 0.163   & 0.134 & 0.133 \\
PCA              & 0.332   & 0.307 & 0.314 \\
PCA + ICA        & 0.386   & 0.372 & 0.334 \\
PCA + RandRot    & 0.287   & 0.280 & 0.235 \\
\bottomrule
\end{tabular}
\end{table}

\end{document}